\documentclass[10pt]{article} 
\usepackage[accepted]{tmlr}


\usepackage{amsmath,amsfonts,bm}









\def\eqref#1{equation~\ref{#1}}









\def\1{\bm{1}}










\DeclareMathAlphabet{\mathsfit}{\encodingdefault}{\sfdefault}{m}{sl}
\SetMathAlphabet{\mathsfit}{bold}{\encodingdefault}{\sfdefault}{bx}{n}











\newcommand{\E}{\mathbb{E}}



\DeclareMathOperator*{\argmax}{arg\,max}
\DeclareMathOperator*{\argmin}{arg\,min}

\usepackage{hyperref}
\usepackage{url}

\title{ASAT: Adaptive Scoring and Thresholding with \\Human Feedback for Robust Out-of-Distribution Detection}


\author{\name Daisuke Yamada \email dyamada2@wisc.edu \\
    \addr University of Wisconsin--Madison
    \AND
    \name Harit Vishwakarma \email harit.vishwakarma@stats.ox.ac.uk \\
    \addr University of Oxford
    \AND
    \name Ramya Korlakai Vinayak \email ramya@ece.wisc.edu\\
    \addr University of Wisconsin--Madison
}



\usepackage{hyperref}

\usepackage{url}  
\usepackage{graphicx}  

\usepackage{booktabs} 
\usepackage{tikz} 

\usepackage{multirow}
\usepackage{multicol}

\usepackage{amsfonts}       
\usepackage{nicefrac}       
\usepackage{microtype}      
\usepackage[dvipsnames]{xcolor}

\usepackage{tcolorbox}

\usepackage{amsthm}
\usepackage{thmtools, thm-restate}
\usepackage{amsmath}
\usepackage{amssymb}
\usepackage{mathtools}
\usepackage{bm}

\usepackage{enumitem}

\usepackage{wrapfig}
\usepackage{colortbl}
\usepackage{bbm}
\usepackage{optidef}

\usepackage{amsmath, empheq}
\usepackage{mdframed}
\usepackage{tcolorbox}
\usepackage{bm}
\usepackage[justification=centering]{caption}

\usepackage{subfigure}

\usepackage{algcompatible}
\usepackage{algorithm}
\usepackage[noend]{algpseudocode}

\newtheorem{proposition}{Proposition}
\newtheorem{lemma}{Lemma}
\newtheorem{remark}{Remark}

\newcommand{\iid}{\overset{\mathrm{i.i.d.}}{\sim}}

\setlist[enumerate,1]{topsep=1.0pt, partopsep=1.5pt, itemsep=1.5pt, parsep=0pt}

\algdef{SE}[SUBALG]{Indent}{EndIndent}{}{\algorithmicend\ }%
\algtext*{Indent}
\algtext*{EndIndent}


\usepackage{float}

\usepackage{dsfont}

\hypersetup{
    colorlinks,
    linkcolor={red!70!black},
    citecolor={blue!80!white},
    urlcolor={RoyalBlue}
}

\usepackage[american]{babel}


\usepackage{xcolor}

\def \E {\mathbb{E}}



\def \cD {\mathcal{D}}

\def \cG {\mathcal{G}}

\def \TPR {\mathrm{TPR}}
\def \FPR {\mathrm{FPR}}
\def \TPRhat {\widehat{\TPR}}
\def \FPRhat {\widehat{\FPR}}

\newcommand{\ldef}{\vcentcolon=}

\newcommand{\one}{\mathds{1}}

\def\E{\mathbb E}

\def \ghat {\widehat{g}}

 \algnewcommand{\LineComment}[1]{ \State \(\color{gray}\triangleright \) {\color{gray}#1}}

\newcommand{\FSFT}{\text{FSFT}} 
\newcommand{\FSAT}{\text{FSAT}} 
\newcommand{\ASAT}{\text{ASAT}} 

\usepackage{subcaption}

\usepackage{titletoc}

\begin{document}
\maketitle
\begin{abstract}
Machine Learning (ML) models are trained on in-distribution (ID) data but often encounter out-of-distribution (OOD) inputs during deployment---posing serious risks in safety-critical domains. Recent works have focused on designing scoring functions to quantify OOD uncertainty, with score thresholds typically set based solely on ID data to achieve a target true positive rate (TPR), since OOD data is limited before deployment. However, these TPR-based thresholds leave false positive rates (FPR) uncontrolled, often resulting in high FPRs where OOD points are misclassified as ID. Moreover, fixed scoring functions and thresholds lack the adaptivity needed to handle newly observed, evolving OOD inputs, leading to sub-optimal performance. To address these challenges, we propose \emph{ASAT}, a human-in-the-loop framework that \textit{safely updates both scoring functions and thresholds on the fly} based on real-world OOD inputs. ASAT maximizes TPR while controlling FPR at all times under stationary conditions, even as the system adapts over time. Under nonstationary conditions, the method adapts to distribution shifts with only transient FPR violations during the adaptation period. We provide theoretical guarantees for FPR control under stationary conditions and present extensive empirical evaluations on OpenOOD benchmarks to demonstrate that our approach outperforms existing methods by achieving higher TPRs while maintaining FPR control.
\end{abstract}

\section{Introduction}
Machine learning (ML) models are typically trained under the closed-world assumption---test-time data comes from the same distribution as the training data (in-distribution, denoted by ID). However, during deployment, models often encounter out-of-distribution (OOD) inputs, such as data points that do not belong to any training class in classification. In safety-critical domains like medical diagnosis, it is crucial that the system refrains from generating predictions for OOD inputs. Instead, these inputs must be flagged so that expert human intervention is sought. To achieve this, an OOD detector is integrated into systems to monitor real-time samples and flag those that appear OOD, ensuring that unreliable predictions are not produced. 

To this end, many OOD detection methods have been developed to distinguish OOD inputs from ID inputs \citep{yang2022openood}. While ID data is available during model training, diverse OOD examples remain unseen until deployment. This limited access to OOD data naturally leads to an ID-based detection approach: (1) a scoring function is designed or selected to quantify how likely an input is to be OOD (or ID) \textit{using only ID data}, and (2) a threshold is set on these scores---\textit{again based solely on ID data}---to achieve a desired true positive rate (TPR) (e.g., 95\% of ID inputs are correctly identified as ID). While this approach has shown promising results, it suffers from the following issues: 

\textbf{1) High False Positive Rate (FPR).} 
OOD detection systems are prone to high FPRs, where OOD inputs are misclassified as ID \citep{nguyen2015Fooling, amodei2016Safety}. In high-risk applications, false positives are often more critical than false negatives. For example, detecting non-existent cancer may simply trigger further analysis, whereas missing an actual case can be fatal. Thus, systems in such domains must have strict FPR control (e.g., below 5\%). However, the common approach sets score thresholds solely using ID data to meet a target TPR, leaving FPR uncontrolled. As a result, FPRs can be very high---for instance, ranging from 32\% to 91\% when CIFAR-10 is used as ID in the OpenOOD benchmark \citep{yang2022openood}.

\textbf{2) Lack of Adaptivity.}
During deployment, systems encounter diverse, novel OOD inputs. However, no single scoring function works well for all types of OOD data. As a result, the common approach of relying on a pre-designed scoring function and threshold risks being stuck with one that performs poorly on newly encountered OOD inputs---thereby limiting achievable TPR. These methods miss the opportunity to improve their scoring functions based on OOD inputs observed during deployment (Figure \ref{fig:distr_comp}). Thus, real-world systems must \emph{adapt their scoring functions and thresholds over time} to handle diverse OOD inputs, maximizing TPR while keeping FPR under control. These challenges motivate the following goal.

\begin{tcolorbox}[
    colframe=gray!120, 
    colback=gray!10, 
    coltitle=black, 
    fonttitle=\bfseries, 
    arc=10pt, 
    boxrule=0.5mm, 
    width=\linewidth
]
    \textbf{Goal:} Develop a human-in-the-loop OOD system that strictly controls FPR, \emph{adaptively} maximizes TPR, and thus minimizes human intervention. 
\end{tcolorbox}

\paragraph{Our Contribution.} Toward this goal, we make the following contributions:
\begin{enumerate}
    \item We introduce a framework, ASAT (adaptive scoring, adaptive threshold), that leverages human feedback to identify OOD inputs and \textit{simultaneously update scoring functions and thresholds}, thereby maximizing TPR while ensuring strict FPR control throughout deployment (Figure \ref{fig:overview}).
    
    \item We provide a theoretical analysis that guarantees FPR control at all times, even as scoring functions and thresholds are updated. Our analysis shows that our approach can maintain FPR below a user-specified tolerance $\alpha$ (e.g., 5\%) when the OOD remains constant.
    
    \item We present extensive empirical evaluations under both stationary (OOD fixed) and nonstationary (OOD changing over time) conditions. On OpenOOD benchmarks, we show our framework consistently outperforms existing approaches, achieving higher TPR while ensuring strict FPR control.
\end{enumerate}

\begin{figure*}[t]
    \centering
    \includegraphics[width=0.88\textwidth]{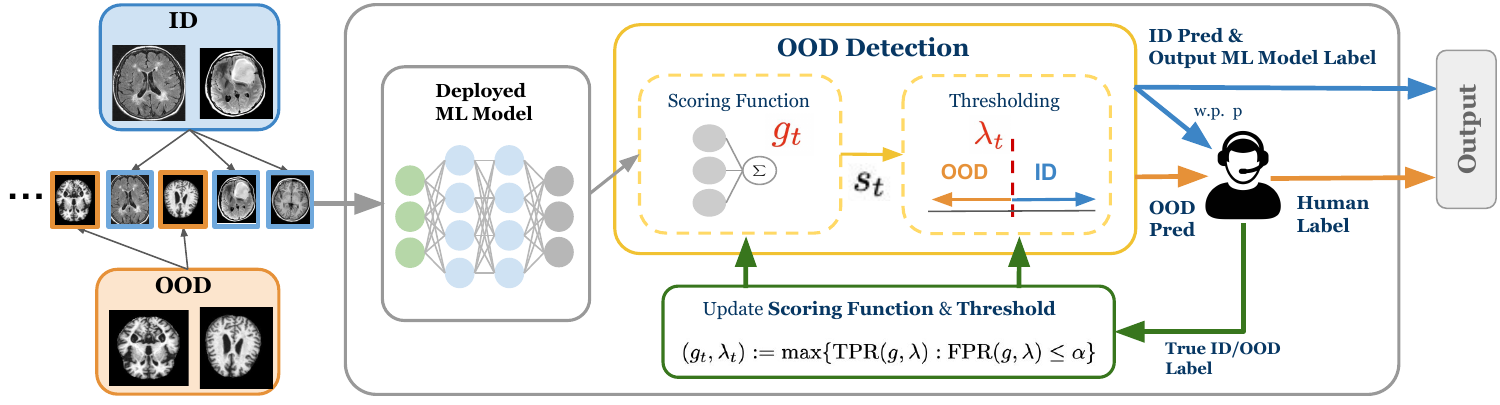}
    \vspace{-3pt}
    \caption{\small{Overview of the ASAT workflow, updating \textit{both} scoring function $g_t$ and threshold $\lambda_t$ via human feedbacks.  The ID data consists of brain MRI scans from healthy individuals and those with tumors. The OOD data includes brain scans with other non-tumorous conditions—for example, scans from patients with Alzheimer’s disease.}}
    \label{fig:overview}
\end{figure*}

\begin{table}[H]
    \centering
    \begin{minipage}{0.5\textwidth}
        We refer to our method as \textbf{\ASAT} (\textit{adaptive scoring, adaptive threshold}), in contrast to the common approach \textbf{\FSFT} (\textit{fixed scoring, fixed threshold}), which uses fixed scoring functions and thresholds derived from ID data. We also compare with \textbf{FSAT} (\textit{fixed scoring, adaptive threshold}), which updates thresholds while keeping scoring functions fixed. Refer to Section \ref{sec::fsat} for descriptions of \FSAT. See Table \ref{tab:framework_comparison} for a summary.
    \end{minipage}%
    \hfill
    \begin{minipage}{0.48\textwidth}
        \centering
        \scalebox{0.7}{ 
        \begin{tabular}{c | c c c c}
            \toprule 
            \textbf{Method} & \textbf{Score Fn} & \textbf{Threshold} & \textbf{FPR $\bm{\le \alpha}$} & \textbf{TPR} \\
            \toprule
            \textbf{FSFT} & Fixed & Fixed & $\times$ & Fixed (95\%) \\
            \midrule 
            \textbf{FSAT} & Fixed & Adaptive & $\checkmark$  & Limited \\
            \midrule 
            \textbf{ASAT} (ours) & Adaptive & Adaptive & $\checkmark$ & Improved\\
            \bottomrule
        \end{tabular}
        }
        \caption{Comparison of \FSFT, \FSAT, and \ASAT\ across scores, thresholds, FPR control, and TPR.}
        \label{tab:framework_comparison} 
    \end{minipage}
\end{table}

\begin{figure*}[t]
    \vspace{-10pt}
  \centering
  \includegraphics[width=0.93\linewidth]{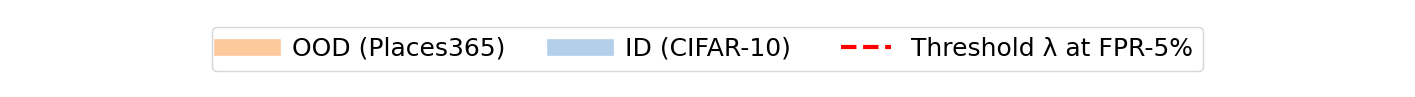}
  \mbox{
  \hspace{-8pt}
    \subfigure[EBO  \label{fig:ebo}]{\includegraphics[scale=0.35]{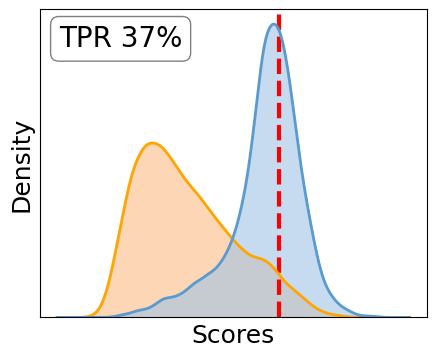}}
    \hspace{-8pt}
    \subfigure[KNN  \label{fig:knn}]{\includegraphics[scale=0.35]{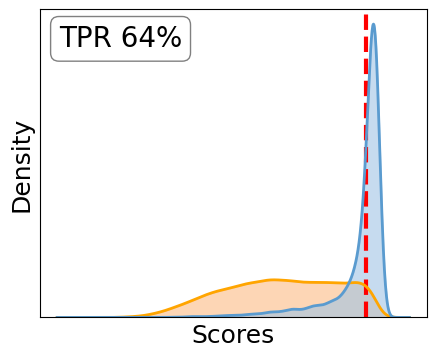}}
    \hspace{-8pt}
    \subfigure[VIM   \label{fig:vim}]{\includegraphics[scale=0.35]{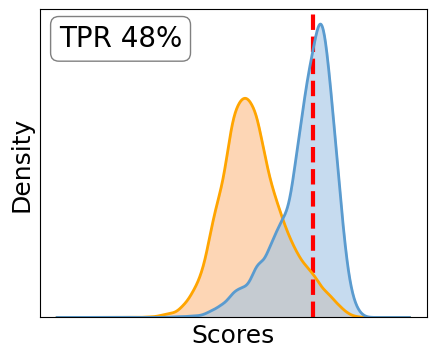}}
    \hspace{-8pt}
    \subfigure[ASAT (ours)  \label{fig:learned_g}]{\includegraphics[scale=0.35]{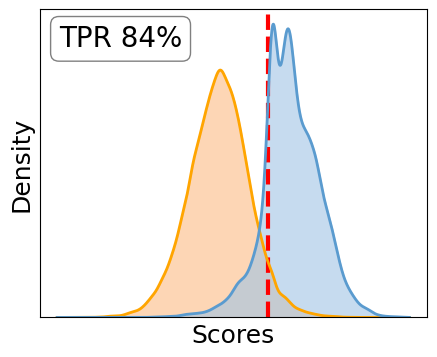}}
  }
  
  \captionsetup{justification=raggedright}
  \caption{ \small{
    Comparison of score distributions between post-hoc fixed scoring functions (trained solely on ID data) and our learned function $g^{\ast}$ from (\ref{eq:opt2}) (trained offline on both ID and OOD). CIFAR-10 and Places365 are the ID and OOD datasets, respectively. As $t$ increases, the \ASAT-adapted scoring function achieves higher TPR at 5\% FPR compared to EBO, KNN, and VIM methods from the Open-OOD benchmarks. This comparison suggests that incorporating human-labeled OOD inputs encountered during deployment enables \ASAT\ to update scoring functions that better align with real-world OOD points. For more details, see Section \ref{sec:exp}.
    }}
  \label{fig:distr_comp} 
  
\end{figure*}
\section{Preliminaries}

\subsection{Problem Setting}\label{sec::2.1}

\textbf{Data Stream.} Let $\mathcal{X} \subseteq \mathbb{R}^{d}$ and $\mathcal{Y} = \{0, 1\}$ denote feature and label spaces, respectively, where 0 represents OOD and 1 represents ID. Let $\mathcal{D}_0$ and $\mathcal{D}_1$ be the \textit{unknown} distributions of the OOD and ID data over $\mathcal{X}$. At each time $t$ during deployment, our model sequentially receives a sample $x_t$ drawn independently from the mixture model, 
$x_t \iid \gamma \mathcal{D}_0 + (1 - \gamma) \mathcal{D}_1,$ 
where the mixture rate $\gamma \in (0, 1)$ is fixed but unknown. Let $y_t \in \mathcal{Y}$ be the true label for $x_t$, indicating whether it is an OOD or ID point. 

\textbf{Scoring Functions.} Let $\mathcal{G} \subset \{g: \mathcal{X} \rightarrow \mathcal{S} \subseteq \mathbb{R}\}$ denote a class of scoring functions. For each $x_t$, a scoring function $g \in \mathcal{G}$ computes the score $s_t := g(x_t) \in \mathcal{S}$. We assume that higher scores correspond to a higher likelihood of $x_t$ being ID.  Then, the OOD classifier is defined by the threshold function $h_{\lambda}: \mathcal{S} \rightarrow \mathcal{Y}$ as 
$h_{\lambda}(s_t) \ldef \mathbbm{1}\{s_t > \lambda\},$
with $\lambda \in \Lambda$. The prediction label is $\widehat{y}_t := h_{\lambda}(s_t)$. The parameter $\lambda$ defines the threshold for classifying $x_t$ as either OOD or ID. Thus, setting $\lambda$ carefully is crucial to ensure safety in OOD detection. 

\begin{figure}[H]
    \centering
    \begin{minipage}{0.50\textwidth}
        \textbf{Population-level FPR and TPR.} Given any $g \in \mathcal{G}$ and $\lambda \in \Lambda$, the \textit{true} FPR and TPR for scores are defined as
        \begin{equation}
            \begin{aligned}
                \text{FPR}(g, \lambda) &:= \mathbb{E}_{x \sim \mathcal{D}_0} [\mathbbm{1}\{g(x) > \lambda\}] , \\
                \text{TPR}(g, \lambda) &:= \mathbb{E}_{x \sim \mathcal{D}_1}[ \mathbbm{1}\{g(x) > \lambda\}].
            \end{aligned}
            \label{eq:true_fpr_tpr}
        \end{equation}
        For a fixed $g$, both FPR and TPR are monotonically decreasing in the threshold $\lambda$ (Figure \ref{fig:fpr_region}). Thus, maximizing TPR while maintaining a low FPR naturally involves a trade-off. For fixed $g$, this trade-off is governed by the \textit{overlap of score distributions}; a better $g$ induces more separation between ID and OOD scores. This motivates selecting the pair $(g, \lambda)$ to maximize TPR while keeping FPR below a desired level.
    \end{minipage}%
    \hfill
    \begin{minipage}{0.49\textwidth}
        \centering
        \scalebox{0.78}{
        \includegraphics[width=\textwidth]{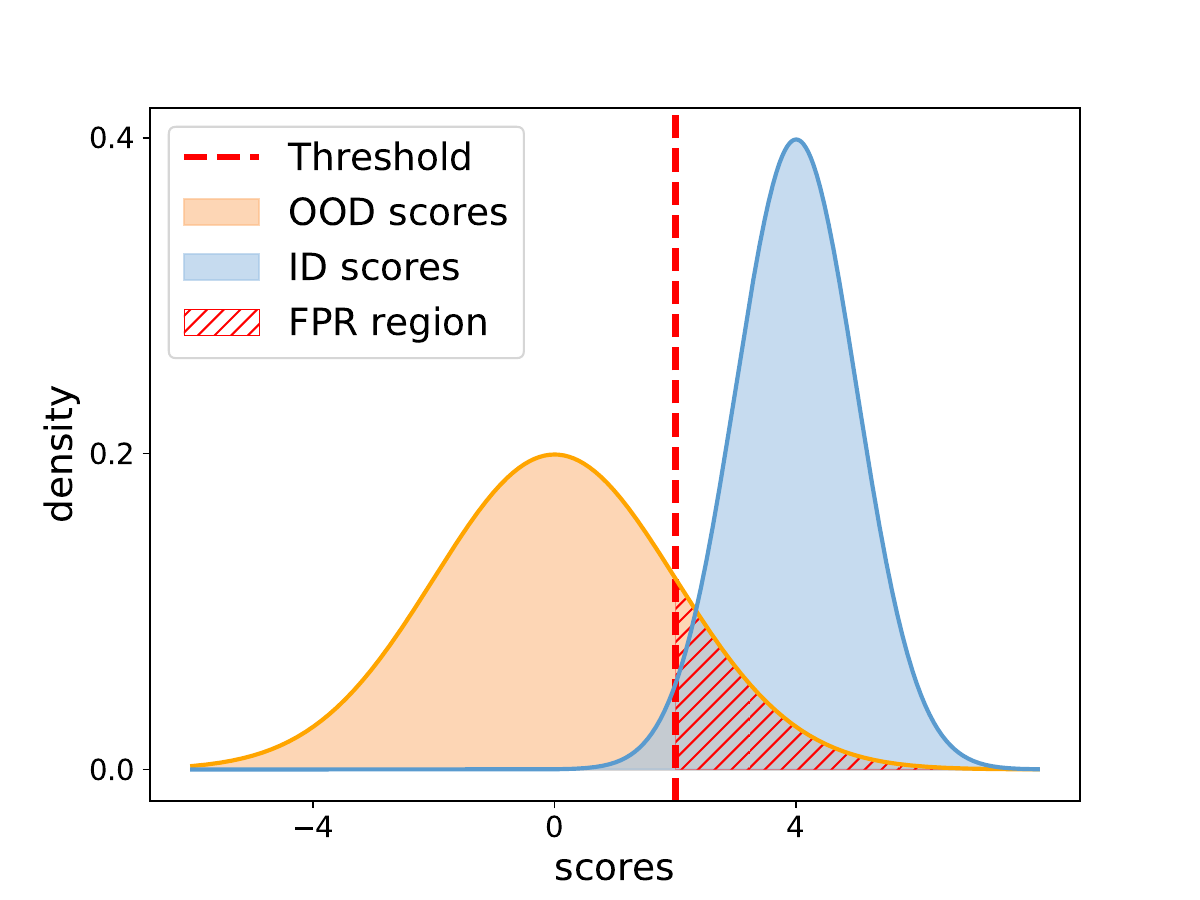}}
        \vspace{-3pt}
        \caption{An example of OOD and ID score distributions mapped by fixed $g$, with a decision threshold $\lambda$. FPR corresponds to the hatched region.}
        \label{fig:fpr_region}
    \end{minipage}
\end{figure}

In \FSFT, both $g$ and $\lambda$ remain fixed for all $t \ge 1$. In \FSAT, the scoring function $g$ is fixed a priori and only the threshold $\lambda_t$ is adapted over time to control FPR. In contrast, \ASAT\ aims to update both $g_t$ and $\lambda_t$ using incoming OOD inputs, thereby better controlling FPR and maximizing TPR. 


\section{Methodology}\label{sec:method}

We propose \ASAT\ (Figure \ref{fig:overview}) that adapts both scoring functions and thresholds using human-labeled OOD samples during deployment. This adaptation is crucial, as the initial scoring function and threshold---typically developed solely on ID data---can yield high FPR or suboptimal performance on certain OOD cases.

\subsection{\ASAT\ Workflow} \label{subsec:asat_wf}
At each time $t$, the system processes an input $x_t$ using scoring function $\ghat_{t-1}$ and threshold $\widehat{\lambda}_{t-1}$, both learned at the previous time $t-1$ (see Section \ref{subsec:score_adapt} for the learning process). It first computes score $s_t = \ghat_{t-1}(x_t)$ for $x_t$. If $s_t \le \widehat{\lambda}_{t-1}$, then $x_t$ is considered OOD and receives a human label $y_t \in \{0,1\}$. If $s_t > \widehat{\lambda}_{t-1}$, $x_t$ is considered ID, and in this case, we obtain the human label $y_t$ only with probability $p \in (0,1)$. We refer to this selective sampling as \textit{importance sampling}. This importance sampling enables the detection of OOD shifts and allows for unbiased FPR estimation, which is useful for maintaining FPR control.

\subsection{Learning Scoring Functions and Thresholds}\label{subsec:score_adapt}
In this section, we describe how \ASAT\ learns scoring functions and thresholds at each time $t$. We first define the optimal scoring function and threshold based on the TPR and FPR in \eqref{eq:true_fpr_tpr}, which are unknown in practice. We then present a tractable and practical learning method only using finite samples and estimates.  

\textbf{Optimal Scoring Function and Threshold.} \ASAT\ aims to maintain strict FPR control during deployment while minimizing human intervention. In practice, accurately predicting IDs reduces the need for human verification. Hence, maximizing TPR effectively minimizes human intervention. With this, we formulate the \ASAT\ objective as a joint optimization problem to find the optimal $(g^\star, \lambda^\star)$ that maximizes TPR under the constraint FPR $\le \alpha$.

\begin{tcolorbox}[
    colframe=gray!120, 
    colback=gray!10, 
    coltitle=black, 
    fonttitle=\bfseries, 
    arc=5pt, 
    boxrule=0.5mm, 
    width=\linewidth,
    top=5pt,     
    bottom=5pt   
]

\begin{equation}
            g^\star, \lambda^\star := \ \argmax_{g \in \mathcal{G}, \lambda \in \Lambda} \quad \text{TPR}(g, \lambda) \quad \text{s.t.} \quad \text{FPR}(g, \lambda) \leq \alpha.
        \tag{P1}
        \label{opt1}
    \end{equation}
\end{tcolorbox}

Here, $\alpha$ is the user-specified FPR tolerance level (e.g., $\alpha = 0.05$ implies FPR must be below 5\% at all times). However, since $\mathcal{D}_{0}$ and $\mathcal{D}_{1}$ are unknown, the true values of TPR and FPR for a given $(g, \lambda)$ are not accessible, making (\ref{opt1}) intractable. Thus, we estimate FPR and TPR from finite samples collected by time $t$.

\textbf{FPR and TPR Estimates.}
We assume that ID data is largely available, whereas diverse OOD data is not accessible before deployment. Thus, while the ID data remains a fixed i.i.d. set from training, \ASAT\ incorporates human feedback to identify real-world OOD points during deployment. Let $X_{t}^{(ood)}$ denote the OOD samples collected up to time $t$ (see Section \ref{subsec:asat_wf}), and let $X_{0}^{(id)}$ be the fixed ID dataset. Then, for a given scoring function and threshold pair $(g, \lambda)$, the FPR estimate at time $t$ is defined as

\begin{equation}
    \begin{aligned}
        \widehat{\text{FPR}}_{t}(g, \lambda) := \frac{ \sum_{u=1}^{t} Z_{u} \cdot \mathbbm{1}\{g(x_u) > \lambda\}}{ \sum_{u=1}^{t} Z_{u}},
    \end{aligned}
    \label{eq:fpr_hat}
\end{equation}
where 
\[
Z_u := 
\begin{cases}
    1 &\text{if } s_u \le \widehat{\lambda}_{u-1}, i_u = 0, y_u = 0, \\
    \frac{1}{p} &\text{if } s_u > \widehat{\lambda}_{u-1}, i_u = 1, y_u = 0, \\
    0 &\text{otherwise},
\end{cases}
\]
and $i_u$ indicate whether $x_u$ was importance sampled at time $u$. Also, we estimate TPR using static $X_{0}^{(id)}$ as

\begin{equation}
    \begin{aligned}
        \widehat{\text{TPR}}(g, \lambda) &:= \frac{1}{|X_0^{(id)}|} \sum_{x \in X_0^{(id)}} \mathbbm{1}\{g(x) > \lambda\}.
    \end{aligned}
    \label{eq:tpr_hat}
\end{equation}
Note for a fixed $g$ and $\lambda$, $\widehat{\text{TPR}}(g, \lambda)$ is constant over time, since $X_{0}^{(id)}$ is fixed.

\textbf{Relaxations of (\ref{opt1}).}
Using the TPR and FPR estimates from \eqref{eq:fpr_hat} and \eqref{eq:tpr_hat}, we can formulate a tractable optimization problem for the \ASAT\ objective. However, because the indicator sums are non-differentiable and difficult to optimize, we further approximate them using a sigmoid function defined as $\sigma(\kappa, z) \ldef 1/(1+\exp{(-\kappa z)})$, $\kappa > 0$, which satisfies $\lim\limits_{\kappa \rightarrow \infty}\sigma(\kappa, z) = \mathbbm{1}\{z > 0\}.$ This yields the following \textit{smooth surrogate} estimates of the TPR and FPR:

\begin{equation}
    \widetilde{\text{FPR}}_{t}(g,\lambda) \ldef \frac{ \sum_{u=1}^{t} Z_{u} \cdot \sigma(\kappa, g(x_u) - \lambda)}{ \sum_{u=1}^{t} Z_{u}},
    \label{eq:fpr_tilde}
\end{equation}

\begin{equation}
    \widetilde{\text{TPR}}(g, \lambda) \ldef \frac{1}{|X_0^{(id)}|} \sum_{x \in X_0^{(id)}} \sigma(\kappa, g(x)-\lambda).
    \label{eq:tpr_tilde}
\end{equation}

The problem obtained by replacing the true FPR and TPR in (\ref{opt1}) with these surrogates is a non-convex constrained optimization problem. Although this type of problem has been widely studied \citep{park2022selfsupervisedprimalduallearningconstrained, jia2024firstordermethodsnonsmoothnonconvex}, it remains challenging to solve efficiently in practice. Therefore, we reformulate it in a regularization form as follows. 

\begin{tcolorbox}
    [colframe=gray!120, 
    colback=gray!10, 
    coltitle=black, 
    fonttitle=\bfseries, 
    arc=5pt, 
    boxrule=0.5mm, 
    width=\linewidth, 
    top=6pt,
    bottom=5pt
    ] 
    \begin{equation} 
        \ghat_{t}, \widehat{\lambda}_{t}' := \argmin_{g \in \mathcal{G}, \lambda \in \Lambda} -\widetilde{\text{TPR}}(g, \lambda) + \beta \cdot \widetilde{\text{FPR}}_t(g, \lambda) 
        \tag{P2} 
        \label{eq:opt2} 
    \end{equation} 
\end{tcolorbox} 

The hyperparameter $\beta \ge 0$ controls the trade-off between maximizing TPR and minimizing FPR. This formulation enables efficient gradient-based optimization of $g$ and $\lambda$. With this, at each time $t$, \ASAT\ estimates TPR and FPR using \eqref{eq:fpr_tilde} and \eqref{eq:tpr_tilde}, then it solves the problem (\ref{eq:opt2}). However, some issues remain to be addressed.  

\textbf{Periodically Learning Scoring Functions.} 
Solving (\ref{eq:opt2}) at every time $t$ is computationally expensive, especially when only a few new OOD points have been identified. To address this, \ASAT\ optimizes (\ref{eq:opt2}) only periodically---when enough new OOD samples have been collected since the last update. Let $U_t$ denote the number of updates to $g$ performed by time $t$, and define $\omega^{ood}_{i}$ as the user-specified \textit{optimization frequency} for $(i+1)$th update (e.g., $\omega^{ood}_{1}$ indicates the number of \textit{new} OOD samples required for the second update). Define $\Delta^{ood}_{t}$ as the number of \textit{new} OOD samples identified by time $t$ \textit{since the last update}. At time $t$, \ASAT\ updates its scoring function only if $\Delta^{ood}_{t} \ge \omega^{ood}_{U_{t}}$.

\textbf{Model Selection.}
Assume an update occurs at time $t$ (i.e., $\Delta^{ood}_{t} \ge \omega^{ood}_{U_{t}}$) by solving (\ref{eq:opt2}). However, the newly learned $\widehat{g}_t$ may underperform $\widehat{g}_{t-1}$ in terms of TPR at a given $\lambda$. To prevent this, we do a model selection by applying the TPR criterion:

$$\widehat{\text{TPR}}(\ghat_{t}, \widehat{\lambda}_{t}) - 2\zeta(\delta, |X_0^{(id)}|) > \widehat{\text{TPR}}(\ghat_{t-1}, \widehat{\lambda}_{t-1}).$$

Here, $\widehat{\lambda}_{t}$ is obtained from (\ref{prev_opt1}) (see Section \ref{sec::fsat}), and $\zeta(\delta, |X_0^{(id)}|)$ is the Dvoretzky–Kiefer–Wolfowitz (DKW) confidence interval, valid w.p. at least $1-\delta$ \citep{dvoretzky1956asymptotic}. If this is violated, we let $\ghat_{t} = \ghat_{t-1}$. 

\textbf{Threshold Estimation.} 
The solution $(\ghat_t, \widehat{\lambda}_t')$ obtained from (\ref{eq:opt2}) may violate the FPR constraint in (\ref{opt1}), since it is unconstrained. That is, it could yield
$\text{FPR}(\ghat_{t}, \widehat{\lambda}_{t}') > \alpha,$
which undermines our goal of keeping FPR below $\alpha$ at all times in \ASAT. To strictly enforce the FPR constraint, we introduce a new step: we estimate a \textit{revised threshold} $\widehat{\lambda}_{t}$ for $\ghat_{t}$ using the threshold estimation method proposed by \citet{vishwakarma2024taming}.

\subsection{Learning Thresholds} \label{sec::fsat}

To ensure that the FPR constraint is satisfied throughout the deployment, we leverage the threshold estimation method \FSAT\ (\textit{fixed scoring, adaptive threshold}). See Table \ref{tab:framework_comparison} for details.

\textbf{\FSAT\ for Threshold Estimation.}
The high-level idea of \FSAT\ is to find a safe threshold for a \textit{fixed} scoring function at each time $t$ such that FPR in \eqref{eq:true_fpr_tpr} is below $\alpha$. Given $(\ghat_t, \widehat{\lambda}_t')$ from (\ref{eq:opt2}), we \textit{discard} the potentially unsafe $\widehat{\lambda}_t'$ and compute a safe threshold $\widehat{\lambda}_t$ via the FPR estimate from \eqref{eq:fpr_hat}. We achieve this by constructing a uniformly valid upper confidence bound (UCB) on the error between the estimated and true FPR. 

\textbf{Upper Confidence Bound (UCB).}
At each time $t$, the UCB is constructed as 
\begin{equation} \label{eq:ucb}
    \psi_{t}(\delta) := \sqrt{\frac{3c_t}{N_{t}^{(o)}}\left[2\log\log\left(\frac{3c_tN_{t}^{(o)}}{2}\right) + 2\log\left(\frac{4U_t|\Lambda|}{\delta}\right)\right]},   
\end{equation}
where $\delta \in (0,1)$ is the failure probability, and $c_t$ is a time-dependent variable. Intuitively, the UCB guarantees that w.p. at least $1-\delta$,
$$\widehat{\text{FPR}}_{t}(\ghat_{t}, \lambda) \in \Big[\text{FPR}(\ghat_{t}, \lambda)-\psi_{t}(\delta), \text{FPR}(\ghat_{t}, \lambda)+\psi_{t}(\delta)\Big],$$
for all times $t \ge 1$ and thresholds $\lambda \in \Lambda$. That is, with high probability, the FPR estimate in \eqref{eq:fpr_hat} is within $\psi_t(\delta)$ of the true FPR. In Section \ref{sec:theory}, we formally prove the FPR guarantee for a calibration-safe variant of ASAT, provide further details, and demonstrate that the UCB is valid for time-varying scoring functions. This leads to the optimization problem for adapting thresholds for a given $\ghat_t$ in \ASAT\ as
  
\begin{tcolorbox}[
    colframe=gray!120, 
    colback=gray!10, 
    coltitle=black, 
    fonttitle=\bfseries, 
    arc=5pt, 
    boxrule=0.5mm, 
    width=\linewidth,
    top=10pt,     
    bottom=5pt   
]
\vspace{-5pt}
    \begin{equation}
        \widehat{\lambda}_{t} := \argmin_{\lambda \in \Lambda} \quad \lambda \quad \text{s.t.} \quad \widehat{\text{FPR}}_{t}(\ghat_{t}, \lambda) + \psi_{t}(\delta) \leq \alpha
        \tag{Q1}
        \label{prev_opt1}
    \end{equation}
\end{tcolorbox}
The problem (\ref{prev_opt1}) can be directly solved via binary search. Since TPR is monotonically decreasing in $\lambda$ and $\ghat_{t}$ is fixed, we reduce the objective from maximizing TPR to minimizing $\lambda$. Consequently, the solution $\widehat{\lambda}_{t}$ from (\ref{prev_opt1}), together with $\ghat_{t}$, satisfies the FPR constraint with high probability, i.e.,  $\text{FPR}(\ghat_{t}, \widehat{\lambda}_t) \le \alpha$ (see Proposition \ref{prop:fpr_control_main} for more details). 

At a high level, \ASAT\ proceeds in two stages at each time step $t$. If the number of newly identified OOD samples $\Delta^{(ood)}_{t}$ exceeds the budget $\omega^{(ood)}_{U_t}$, we solve (\ref{eq:opt2}) to update the scoring function $\ghat_t$ and obtain a candidate threshold $\widehat{\lambda}_{t}'$; otherwise we retain $\ghat_t=\ghat_{t-1}$. In either case, the final threshold $\widehat{\lambda}_t$ is then computed via (\ref{prev_opt1}) to ensure FPR control. See Algorithm \ref{alg::asat} and Appendix \ref{app:method} for further details.

\begin{algorithm}[t]
\caption{ASAT (Adaptive Scoring Adaptive Threshold)}
\label{alg::asat}
\footnotesize
\begin{algorithmic}[1]
\Procedure{\texttt{ASAT}}{$\alpha$, $p$, $\delta$, $\beta$, $\widehat{g}_{0}$, $\{\omega^{(ood)}_{i}\}_{i \ge 0}$, $X_{0}^{(id)}$}
    \State $\widehat{\lambda}_{0}\!\gets\!+\infty$, $X_{0}^{(ood)}\!\gets\!\emptyset$, $U_0\!\gets\!1$, $\Delta_{0}^{(ood)}\!\gets\!0$, $\mathsf{upd}\!\gets\!\textbf{false}$ 
    \For{$t = 1,2,\dots$}
        \State Observe $x_t$;\; score $s_t \gets \widehat{g}_{t-1}(x_t)$
        \If{$s_t \le \widehat{\lambda}_{t-1}$} \hfill \Comment{$x_t$ is predicted as OOD}
            \State $\ell_t \gets 1$ 
        \Else \hfill \Comment{$x_t$ is predicted as ID}
            \State $\ell_t \sim \text{Bern}(p)$ 
        \EndIf

        \If{$\ell_t = 1$}
            \State $y_t \gets \texttt{GetHumanLabel}(x_t)$ \hfill \Comment{Obtain human label}
            \If{$y_t = 0$} \hfill \Comment{$x_t$ is truly OOD}
                \State $X_{t}^{(ood)} \gets X_{t-1}^{(ood)} \cup \{x_t\}$;\; $\Delta_{t}^{(ood)} \gets \Delta_{t-1}^{(ood)}+1$
            \Else
                \State $X_{t}^{(ood)} \gets X_{t-1}^{(ood)}$;\; $\Delta_{t}^{(ood)} \gets \Delta_{t-1}^{(ood)}$
            \EndIf
        \Else
            \State $X_{t}^{(ood)} \gets X_{t-1}^{(ood)}$;\; $\Delta_{t}^{(ood)} \gets \Delta_{t-1}^{(ood)}$
        \EndIf
        
        \\
        \LineComment{\textcolor{blue}{Learn new scoring function}}
        \If{$\Delta_{t}^{(ood)} \ge \omega^{(ood)}_{U_t}$} \hfill \Comment{Enough new OOD samples are collected}
            \State $\Delta_{t}^{(ood)} \gets 0$;\; $\mathsf{upd}\gets\textbf{false}$
            \State $\ghat_{t}, \widehat{\lambda}_{t}' \gets  \argmin\limits_{g \in \mathcal{G}, \lambda \in \Lambda} [-\widetilde{\text{TPR}}(g,\lambda)+\beta\,\widetilde{\text{FPR}}_{t}(g,\lambda)]$ \hfill \Comment{using $X_{t}^{(ood)}$ and $X_{0}^{(id)}$; see~(\ref{eq:opt2})}
            \vspace{2pt}
            \LineComment{\textcolor{blue}{Estimate threshold}}
            \vspace{2pt}
            \State $\widehat{\lambda}_{t} \gets \argmin\limits_{\lambda \in \Lambda} \lambda \;\; \text{s.t.} \;\; \widehat{\text{FPR}}_{t}(\ghat_{t}, \lambda) + \psi_{t}(\delta) \leq \alpha$ \hfill \Comment{\texttt{BinarySearch}$(X_t^{(ood)}$, $\ghat_t)$; see~(\ref{prev_opt1})}
            \State \LineComment{Model selection}
            \If{$\widehat{\text{TPR}}(\ghat_{t},\widehat{\lambda}_{t}) - 2\zeta(\delta, |X_0^{(id)}|) > \widehat{\text{TPR}}(\ghat_{t-1}, \widehat{\lambda}_{t-1})$} \hfill \Comment{Selection criterion satisfied}
                \State $U_{t} \gets U_{t-1} + 1$;\; $\mathsf{upd}\gets\textbf{true}$ \hfill \Comment{Deploy new scoring function / threshold}
            \Else
                \State $\ghat_{t} \gets \ghat_{t-1}$;\; $\widehat{\lambda}_{t} \gets \widehat{\lambda}_{t-1}$;\; $U_{t} \gets U_{t-1}$ \hfill \Comment{Retain old model / threshold}
            \EndIf
        \Else
            \State $U_{t}\gets U_{t-1}$;\; $\ghat_{t}\gets \ghat_{t-1}$ \hfill \Comment{No model update}
        \EndIf

        \If{\textbf{not} $\mathsf{upd}$} \vspace{2pt}
            \LineComment{\textcolor{blue}{Estimate threshold}} \vspace{2pt}
            \State $\widehat{\lambda}_{t} \gets \argmin\limits_{\lambda \in \Lambda} \lambda \;\; \text{s.t.} \;\; \widehat{\text{FPR}}_{t}(\ghat_{t}, \lambda) + \psi_{t}(\delta) \leq \alpha$ \hfill \Comment{\texttt{BinarySearch}$(X_t^{(ood)}$, $\ghat_t)$; see~(\ref{prev_opt1})}
        \EndIf

        \If{$\ell_t = 1$} \State \textbf{Output} $y_t$ \hfill \Comment{True label via human feedback}
        \Else \State \textbf{Output} $\widehat{y_t} := \mathbbm{1}\{s_t > \widehat{\lambda}_{t-1}\}$ \hfill \Comment{Predicted label via one-sided threshold}
        \EndIf
    \EndFor
\EndProcedure
\end{algorithmic}
\end{algorithm}

\section{Theoretical Analysis}\label{sec:theory}
We provide a theoretical analysis of FPR control in stationary settings, where both $\mathcal{D}_0$ and $\mathcal{D}_1$ remain fixed over time (but unknown). Our goal is to show that \ASAT\ maintains $\FPR(\ghat_{t},\widehat{\lambda}_t) \le \alpha$ at all times. In our analysis, we use a \emph{calibration-safe} variant of Algorithm~\ref{alg::asat}, where FPR is estimated using \emph{only samples collected after the currently active scoring function was trained}, removing the data dependence in calibration; we denote this $\widehat{\mathrm{FPR}}^{\mathrm{post}}_t$. Each newly trained scoring function is deployed only once (\ref{prev_opt1}) becomes feasible. See Algorithm~\ref{alg::asat_safe} and Appendix~\ref{app:proofs} for full setup details and proofs.

\begin{proposition}
\label{prop:fpr_control_main}
    Let $U_t$ be the total number of updates before time $t$, and let $\tau_t$ denote the training time of the active $\ghat_t$. Let $N_t^{(o)} = \sum_{u=\tau_t+1}^t Z_u$, and $N_t^{(imp)}$ be the number of importance sampled OOD points in $(\tau_t, t]$. Define $\beta_t = N_t^{(imp)}/N_t^{(o)}$, $c_t = 1 + (1-p)\beta_t/p^2 $, and $t_0 = \min\{u: c_u N_u^{(o)} \ge 173 \log(4/\delta)\}$. Let $\Lambda := \{\Lambda_{\min}, \Lambda_{\min}+\eta, \ldots, \Lambda_{\max}\}$, with discretization parameter $\eta > 0$. Then, w.p. at least $1-\delta$, we have $\forall\; t \ge t_0$,
    \begin{enumerate}[leftmargin=13pt,label={\alph*)}, topsep=0pt]
         \item 
         $\sup\limits_{\lambda \in \Lambda}\left|\widehat{\mathrm{FPR}}^{\mathrm{post}}_{t}(\ghat_t, \lambda)  -\FPR(\ghat_t,\lambda)\right| \le \psi_{t}(\delta), \text{where}$ 
         \begin{equation*} 
         \psi_{t}(\delta) := \sqrt{\frac{3c_t}{N_t^{(o)}}\left[2\log\log\left(\frac{3c_t N_t^{(o)}}{2}\right) + 2\log\left(\frac{4U_t |\Lambda|}{\delta}\right)\right]}.
         \end{equation*}
         \item Consequently, $\FPR(\ghat_t,\widehat{\lambda}_{t})  \le \alpha $.
     \end{enumerate}
\end{proposition}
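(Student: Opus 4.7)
The plan is to establish part (a)---a time- and threshold-uniform concentration bound on the importance-weighted FPR estimator---from which part (b) follows almost immediately from the feasibility constraint in (Q1). Writing $S_t(g,\lambda) = \sum_{u=1}^{t} Z_u\,\mathbbm{1}\{g(x_u) > \lambda\}$ so that $\FPRhat_t(g,\lambda) = S_t(g,\lambda)/N_t^{(o)}$, I would first verify that $Z_u$ is an unbiased importance weight for the OOD event: letting $\cF_{u-1}$ denote the history before time $u$, the two non-zero branches in the definition of $Z_u$ partition the OOD event and a direct calculation yields $\E[Z_u \mid \cF_{u-1}] = \gamma$ and, for any $\cF_{u-1}$-measurable $g$, $\E[Z_u\,\mathbbm{1}\{g(x_u) > \lambda\} \mid \cF_{u-1}] = \gamma\,\FPR(g,\lambda)$. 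Since $Z_u \in \{0,1,1/p\}$, the conditional second moment is controlled by $\beta_t$, which is the origin of the factor $c_t = 1 + (1-p)\beta_t/p^2$ in the variance proxy.

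Next, for any fixed scoring function $g$ the sequence $\{Z_u\,\mathbbm{1}\{g(x_u)>\lambda\} - \gamma\,\FPR(g,\lambda)\}_{u\ge 1}$ is a bounded martingale-difference sequence with the variance bound above. Applying an anytime-valid, empirical-Bernstein / law-of-the-iterated-logarithm inequality of the kind used by Howard et al.\ and by \citet{vishwakarma2024taming} to both $S_t(g,\lambda)$ and $N_t^{(o)}$, and converting the two additive tail bounds into a ratio-form bound on $\FPRhat_t(g,\lambda)$, produces a time-uniform deviation of the order $\sqrt{(c_t/N_t^{(o)})[\log\log(c_t N_t^{(o)}) + \log(1/\delta')]}$, valid for every $t$ at which $c_t N_t^{(o)}$ exceeds a universal constant times $\log(1/\delta')$; this sample-size threshold is exactly the $173\log(4/\delta)$ condition that defines $t_0$.

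By time $t$ the algorithm has committed to at most $U_t$ distinct scoring functions, each $\cF_{t_i}$-measurable at its update time $t_i$, and $\Lambda$ is a finite grid of size $|\Lambda|$. Setting $\delta' = \delta/(U_t|\Lambda|)$ and union-bounding the fixed-$g$ bound across these $U_t$ functions and $|\Lambda|$ thresholds recovers the stated form of $\psi_t(\delta)$ with the factor $\log(4U_t|\Lambda|/\delta)$, establishing (a). Part (b) is then immediate: feasibility of $\widehat{\lambda}_t$ in (Q1) gives $\FPRhat_t(\ghat_t,\widehat{\lambda}_t) + \psi_t(\delta) \le \alpha$, and applying (a) at $(\ghat_t,\widehat{\lambda}_t)$ yields $\FPR(\ghat_t,\widehat{\lambda}_t) \le \FPRhat_t(\ghat_t,\widehat{\lambda}_t) + \psi_t(\delta) \le \alpha$.

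The substantive difficulty is the third step: naively one cannot plug a data-dependent scoring function into a concentration inequality proved for a fixed $g$, since $\ghat_t$ depends on the very data used to form $\FPRhat_t$. The key structural fact that lets the argument go through is that the deployed scoring functions form a piecewise-constant sequence with at most $U_t$ break-points, and at each break-point $t_i$ the chosen $\ghat_{t_i}$ is fully measurable with respect to $\cF_{t_i}$; combined with the \emph{time-uniform} nature of the per-function inequality---which means a single allocation of failure budget at the moment $\ghat_{t_i}$ is committed suffices for all subsequent $t$---this turns what would otherwise be a union bound over an uncountable class $\cG$ into a discrete union bound over the $U_t$ functions the algorithm actually commits to, so the price of adaptivity appears in $\psi_t(\delta)$ only as the $\log U_t$ term rather than $\log |\cG|$.
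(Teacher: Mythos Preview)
Your overall strategy---martingale structure, a Balsubramani-style anytime LIL bound, then a union bound over the finite threshold grid and the deployed scoring functions---matches the paper's proof. Two points deserve comment.

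First, the paper avoids your two-martingale-plus-ratio step by working directly with the self-normalized increment $Z_u\bigl(\mathbbm{1}\{g(x_u)>\lambda\}-\FPR(g,\lambda)\bigr)$, whose partial sums $M_t(g,\lambda)=S_t(g,\lambda)-N_t^{(o)}\,\FPR(g,\lambda)$ form a martingale: your own computations $\E[Z_u\mid\cF_{u-1}]=\gamma$ and $\E[Z_u\,\mathbbm{1}\{g(x_u)>\lambda\}\mid\cF_{u-1}]=\gamma\,\FPR(g,\lambda)$ already show this increment is conditionally mean-zero. Since $|M_t|/N_t^{(o)}=\bigl|\FPRhat_t(g,\lambda)-\FPR(g,\lambda)\bigr|$ exactly, a single LIL bound on $|M_t|$ yields the deviation inequality after dividing by $N_t^{(o)}$, with no separate control of $N_t^{(o)}$ and no ratio conversion; this is what produces the clean $\sqrt{c_t/N_t^{(o)}}$ scaling in $\psi_t(\delta)$, and the mixture weight $\gamma$ never appears.

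Second, your allocation $\delta'=\delta/(U_t|\Lambda|)$ does not yield a single probability-$(1-\delta)$ event valid for all $t\ge t_0$ simultaneously: $U_t$ is not known in advance and is unbounded, so budgeting $\delta/U_t$ to each of the first $U_t$ functions cannot be summed over the entire run. The paper instead allocates $\delta/(2i^2)$ to the $i$-th deployed function, so that $\sum_{i\ge 1}\delta/(2i^2)=\delta\pi^2/12<\delta$; at time $t$ the active function carries budget $\delta/(2U_t^2)$, and substituting this into the per-function bound turns $\log(2|\Lambda|/\delta)$ into $\log(4U_t^2|\Lambda|/\delta)\le 2\log(4U_t|\Lambda|/\delta)$. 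This is precisely the factor of $2$ on the second logarithm in the stated $\psi_t(\delta)$, which your allocation does not reproduce.
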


\textbf{Interpretation.}
Proposition \ref{prop:fpr_control_main} establishes the desired FPR control at all times in \emph{calibration-safe} \ASAT. It ensures that w.p. at least $1-\delta$, a) the FPR estimate is within $\psi_t(\delta)$ of the true FPR for all $t$ and $\lambda$, and consequently b) $\text{FPR} \le \alpha$ in \ASAT. Importantly, this is a \emph{safety} guarantee on FPR $\leq \alpha$. The TPR achievable depends on the separation between $\mathcal{D}_0$ and $\mathcal{D}_1$. Well-separated distributions yield TPR approaching 1. The more the overlap, the lesser the TPR at a given level of FPR. ASAT improves this separation in the scores over time by adapting scoring function using OOD feedback.

\textbf{Discussion.}
Similar results were obtained for \FSAT\ (which adapts thresholds for a \textit{fixed} scoring function) using a similar UCB \citep{vishwakarma2024taming}. But, this approach does not extend directly to our setting, where \ASAT\ also updates scoring functions and uses them for predictions and sample collections. Nevertheless, our design achieves comparable FPR control and higher TPR---since \ASAT\ optimizes scoring functions to maximize TPR in (\ref{eq:opt2}). However, these advantages come with a slight trade-off: $\psi_t(\delta)$ includes the additional $U_t$ term to ensure that it is also valid for time-varying $\ghat_{t}$. In practice, we anticipate $U_t \ll N_t^{(o)}$, since updates occur only periodically. Thus, this extra dependence on $U_t$ results only in a minor increase in $\psi_t(\delta)$ from \FSAT, which is offset by the improved TPR. Note that in the experiments, we use a heuristic variant $\psi_t^{H}(\delta)$ with tuned constants in place of the theoretical bound $\psi_t(\delta)$; see Section \ref{sec:exp_setting} and Appendix \ref{appdx:c_heuristic} for details. We again emphasize that Proposition~\ref{prop:fpr_control_main} applies to the calibration-safe variant described above, which differs from Algorithm~\ref{alg::asat} by restricting calibration to post-training samples (full details in Appendix~\ref{app:proofs}).

\textbf{Proof Sketch.}
The proof of Proposition \ref{prop:fpr_control_main} entails challenges. First, the samples used to estimate the FPR in \eqref{eq:fpr_hat} are \textit{dependent}. In particular, whether we receive the human label for $x_t$ depends on the previously learned scoring function $\ghat_{t-1}$ and threshold $\widehat{\lambda}_{t-1}$, both of which are functions of previous data $\{x_1,...,x_{t-1}\}$. This dependency prevents the direct application of the \textit{i.i.d.} LIL bound \citep{howard2022AnytimeValid}. To overcome this, we leverage the martingale structure of our data and apply the martingale LIL bound \citep{khinchine1924LIL, balsubramani2015LIL}, yielding a time-uniform confidence sequence. Additionally, we ensure that these confidence intervals hold simultaneously for all $\lambda \in \Lambda$ and all deployed scoring functions by applying the union bound. See Appendix \ref{app:proofs} for detailed proofs.
\section{Experiments}\label{sec:exp}
\begin{figure*}[t]
  \vspace{-10pt}
  \centering
  \includegraphics[width=0.98\textwidth]{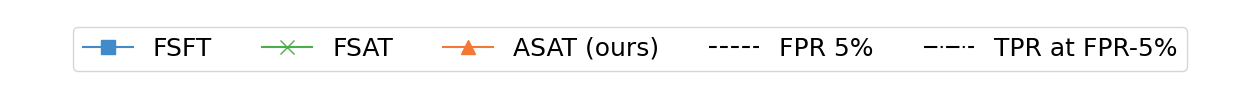}
  
  \mbox{
  \hspace{-10pt}
    \subfigure[Stationary, EBO, CIFAR-100 as OOD.  \label{fig:st_ebo_cifar100}]{\includegraphics[scale=0.21]{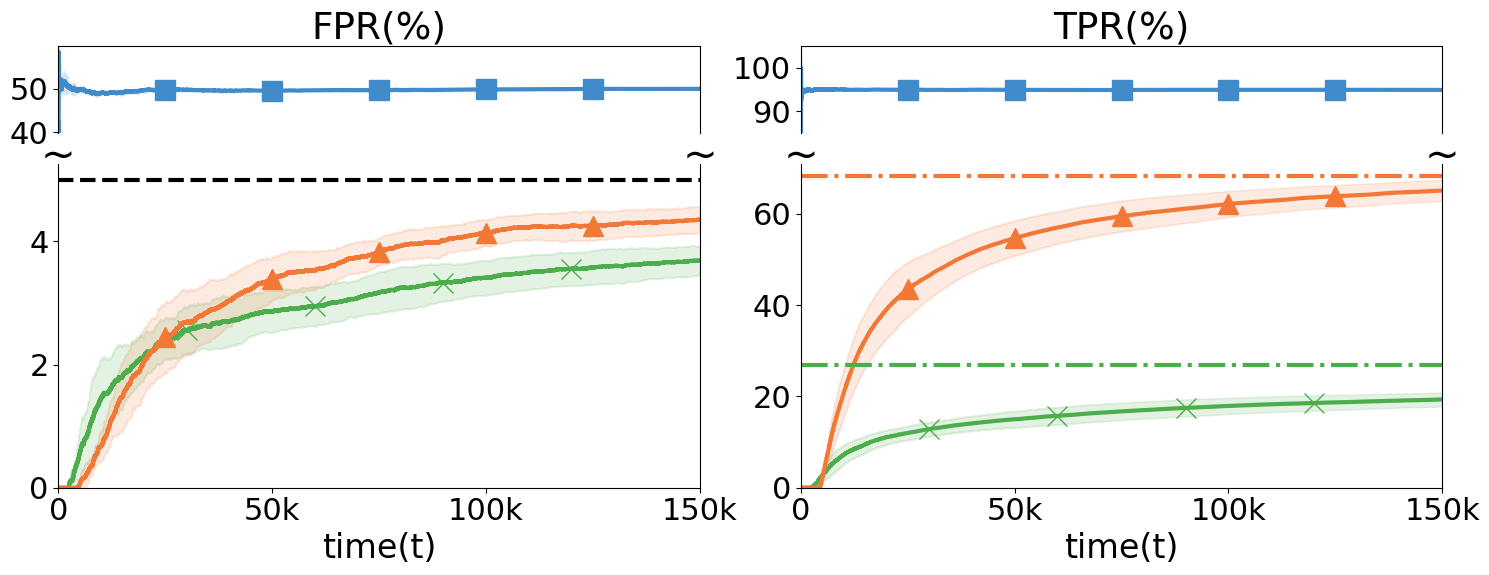}}
    
    \subfigure[Stationary, EBO, Near-OOD as OOD. \label{fig:st_ebo_near}]{\includegraphics[scale=0.21]{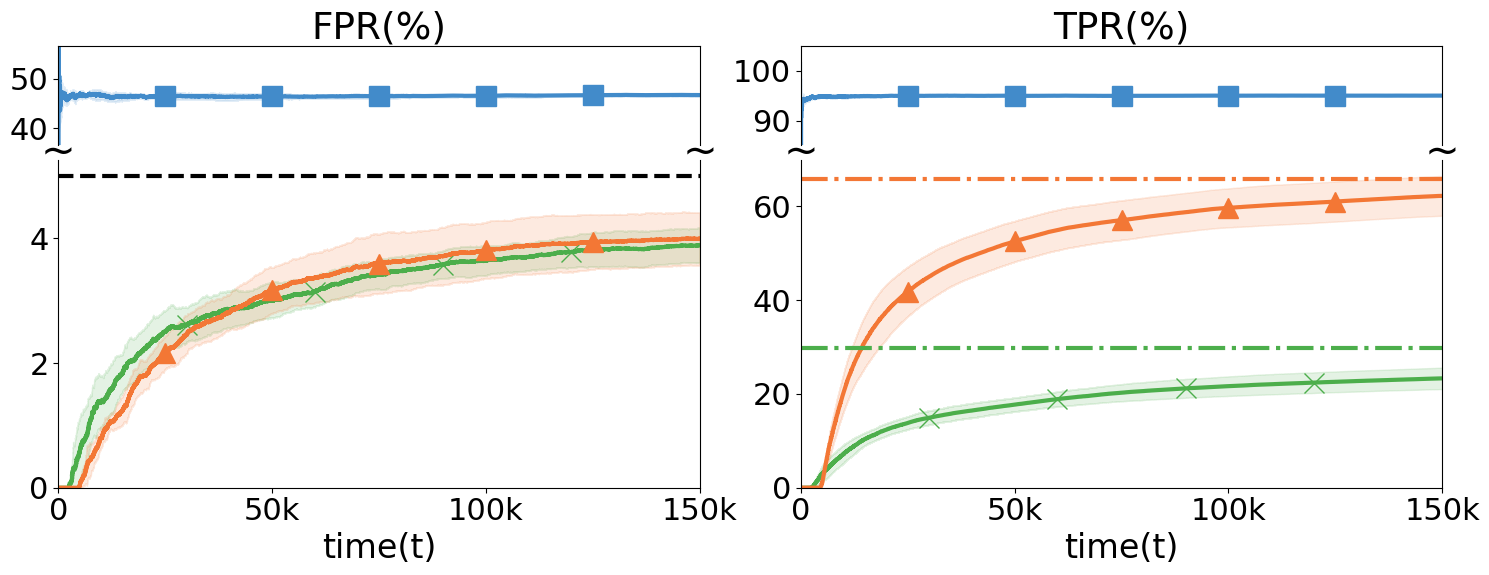}}
  }

  \mbox{
  \hspace{-10pt}
    \subfigure[Stationary, KNN, CIFAR-100 as OOD.  \label{fig:st_knn_cifar100}]{\includegraphics[scale=0.21]{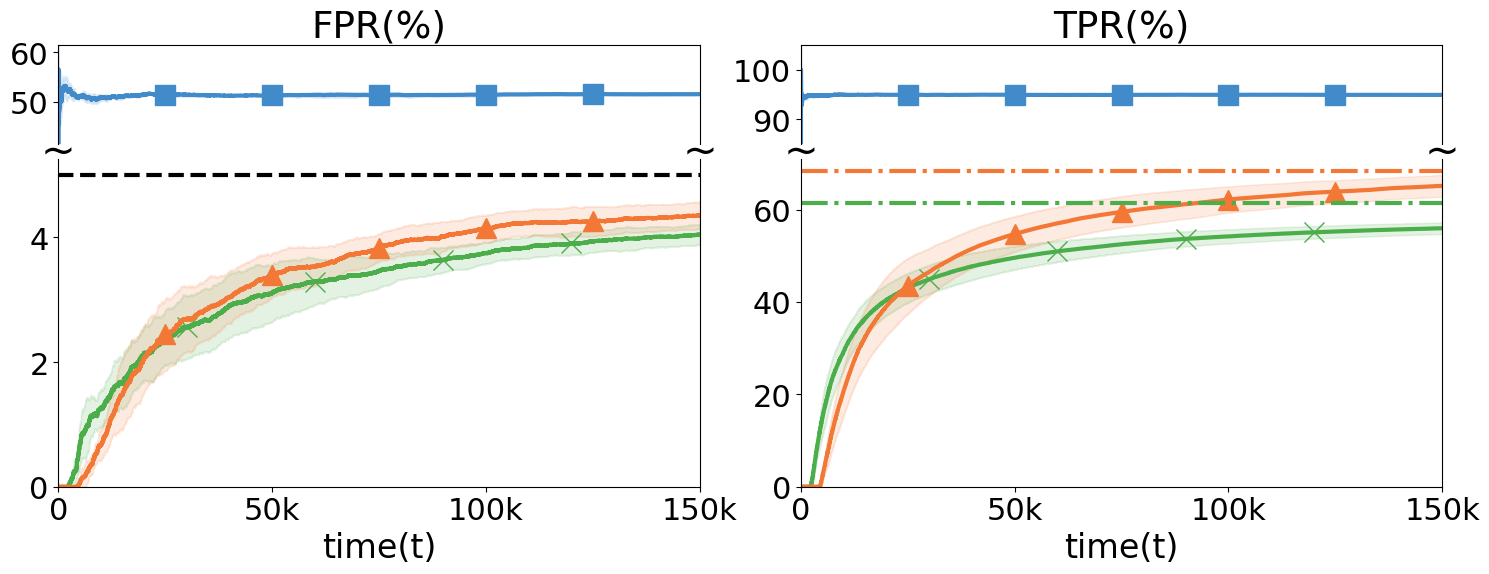}}
    
    \subfigure[Stationary, KNN, Near-OOD as OOD. \label{fig:st_knn_near}]{\includegraphics[scale=0.21]{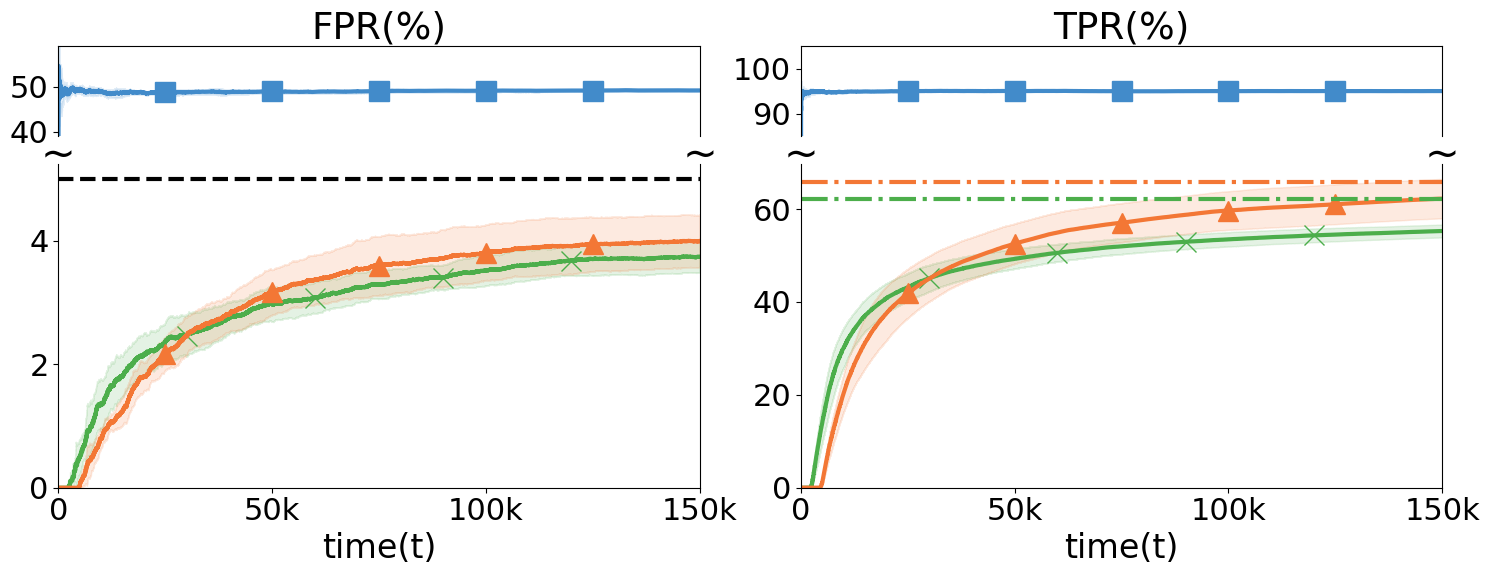}}
  }
  
  \vspace{-5pt}
  \captionsetup{justification=raggedright}
  \caption{ \small{
    Results on the stationary setting for CIFAR-10 as ID, with each method repeated 5 times (mean and std shown). EBO and KNN are the initial scoring functions and are fixed for \FSFT\ and \FSAT. The dotted-dash lines represent the \textit{expected} TPR at FPR-5\% for EBO, KNN, and $\ghat^{\ast}$ (matching colors).
    }}
   \vspace{-5pt} 
  \label{fig:stationary} 
  
\end{figure*}

We compare three methods: \FSFT\ (fixed scoring functions and thresholds), \FSAT\ (fixed scoring functions with adaptive thresholds), and our proposed \ASAT\ (adaptive scoring functions and thresholds). We verify the following claims on OpenOOD benchmarks. For completeness, we also compare \ASAT\ against a naive binary classifier in Appendix \ref{appdx:binary_baseline}.

\textbf{C1.} In stationary settings, \ASAT\ (ours) $\succ$ \FSAT\ $\succ$ \FSFT, where $a \succ b$ means that method $a$ achieves better TPR than $b$ while maintaining FPR below $\alpha$, or that $b$ violates the FPR constraint.

\textbf{C2.} \ASAT\ adapts to nonstationary OOD settings, updating scoring functions and thresholds to maximize TPR with \textit{minimal} FPR violations. 

\subsection{Experiment Settings}\label{sec:exp_setting}

\textbf{Datasets. } 
We use CIFAR-10 \citep{krizhevsky2009learning} as ID for the experiments. In stationary settings, we use CIFAR-100 and a mixture of CIFAR-100, Tiny-ImageNet \citep{deng2009imagenet}, Places365 \citep{zhou2017places} as OOD (referred to as \textit{Near-OOD}). In nonstationary settings, we change the distribution at time $t = 50$k for all experiments. We use pairs of (1) CIFAR-100 and Places365 and (2) a mixture of MNIST \citep{deng2012mnist}, SVHN, and Texture (referred to as \textit{Far-OOD}) and a mixture of CIFAR-100, Tiny-ImageNet, and Places365 (Near-OOD). 

See Appendix \ref{appdx::add_exp1} and \ref{appdx::add_exp2} for additional experiments on different feature extractor (ViT-B/16 \citep{dosovitskiy2021imageworth16x16words}), ID datasets (CIFAR-100, ImageNet-1k \citep{deng2009imagenet}), OOD datasets, and scoring functions.

\textbf{Scoring Functions. }
We use post-processing scoring functions based on a ResNet18 model \citep{he2015deepresiduallearningimage}, which is trained on CIFAR-10, from Open-OOD benchmarks: ODIN \citep{liang2017enhancing}, Energy Score (EBO) \citep{liu2020energy}, KNN \citep{sun2022out}, Mahalanobis Distances (MDS) \citep{lee2018OOD-calib}, and VIM \citep{wang2022vim}. Both \FSFT\ and \FSAT\ used a fixed scoring function from the above benchmarks. For consistency, in \ASAT, the same initial scoring function as \FSFT\ is used until enough OOD samples are identified and the first update occurs.

\textbf{Evaluation FPR and TPR. }
For evaluation, we count the actual number of false-positive and true-positive instances incurred by a method within a specific time frame. These \textit{evaluation} FPR and TPR are defined as  
\begin{equation} 
    \begin{aligned} 
        \text{FPR}^{(eval)}_{t} &\ldef \frac{ \sum_{u=t'}^{t}\mathbbm{1}\{y_u = 0, \widehat{y}_u = 1\}}{ \sum_{u=t'}^{t}\mathbbm{1}\{y_u = 0\}}, \\
        \text{TPR}^{(eval)}_{t} &\ldef  \frac{ \sum_{u=t'}^{t}\mathbbm{1}\{y_u = 1, \widehat{y}_u = 1\}}{ \sum_{u=t'}^{t}\mathbbm{1}\{y_u = 1\}},
    \end{aligned} \label{eq:eval_fpr_tpr}
\end{equation} 
where $t' = t - N_{w}^{(eval)}$, and $N_{w}^{(eval)}$ denotes the window size of past instances used for evaluation.

\textbf{Choices of $\mathcal{G}$.} 
Our framework is flexible with the choice of $\mathcal{G}$. We use 2-layer ReLU neural networks that take a feature output from the penultimate layer of the Resnet18. To solve (\ref{eq:opt2}), we use the Adam optimizer \citep{kingma2017adammethodstochasticoptimization}. See Appendix \ref{appdx::g_class} for more details.

\textbf{Hyperparameters and Constants.} We perform a grid search to optimize our hyperparameters (See Appendix \ref{appdx::hyper-param}). In particular, we set $\beta = 1.5$ and $\kappa = 50$. For the constants, we use $\alpha = 0.05$, $p = 0.2$, and $\gamma = 0.2$ throughout the main experiments. However, to further demonstrate the effectiveness of ASAT under different choices of the constants $\alpha$, $p$, and $\gamma$, we study the sensitivity of ASAT to $\alpha$, $p$, and $\gamma$ on synthetic Gaussian OOD and ID data in Appendix \ref{appdx::constant_sensitivity}.

\textbf{Empirical Convergence to Optimality.} We assess convergence of $(\widehat{g}_{t}, \widehat{\lambda}_{t})$ by comparing $\TPR^{(eval)}_{t}$ in \eqref{eq:eval_fpr_tpr} with the reference $\TPR(\ghat^{\star}, \lambda)$. Here, $(\ghat^{\star}, \lambda)$ is near-optimal for (\ref{opt1}), where $\ghat^{\star}$ is trained on \textit{i.i.d.} OOD/ID data, and $\lambda$ is chosen to satisfy $\text{FPR}(\ghat^{\star}, \lambda) = \alpha$. Intuitively, $\TPR(\ghat^{\star}, \lambda)$ represents the best achievable TPR under $\text{FPR} \le \alpha$, and we expect $\TPR^{(eval)}_{t}$ to converge toward it as $t$ increases in \ASAT.

\textbf{Thresholds for Baselines.} While \FSFT\ uses a fixed threshold at TPR-95\%, both \FSAT\ and \ASAT\ update thresholds over time. For \FSAT, we adopt the same heuristic UCB from \citet{vishwakarma2024taming}. For \ASAT\,  we adapt the theoretical bound in \eqref{eq:ucb} as 
\begin{align*}
    \psi^{H}_{t}(\delta) := c_1\sqrt{\frac{c_t}{N_{t}^{(o)}}\left[\log\log\left(c_2 c_t N_t^{(o)}\right) + \log\left(\frac{c_3}{\delta}\right)\right]},
\end{align*}    
with $c_1=0.65$, $c_2=0.75$, and $c_3=1.0$. Here, $c_2$ and $c_3$ are set as in \citet{vishwakarma2024taming}, while $c_1$ is increased to account for the varying score functions (see Section \ref{sec:theory}). See Appendix \ref{appdx:c_heuristic} for more details.

\begin{figure*}[t]
  \vspace{-10pt}
  \centering
  \includegraphics[width=0.8\textwidth]{images/main/stationary_legend.png}
  \mbox{
  \hspace{-10pt}
    \subfigure[EBO, 5k window, From Near-OOD to Far-OOD.  \label{fig:ds_nf_5k_ebo}]{\includegraphics[scale=0.21]{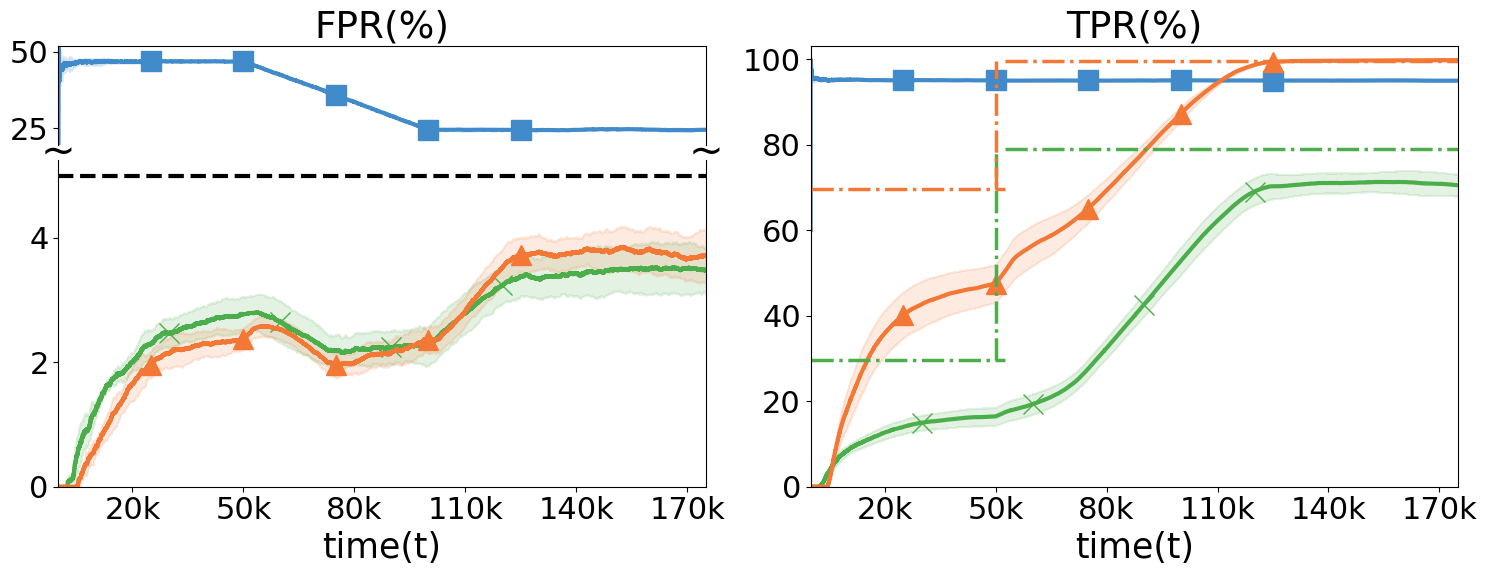}}
    
    \subfigure[EBO, 5k window, From Far-OOD to Near-OOD. \label{fig:ds_fn_5k_ebo}]{\includegraphics[scale=0.21]{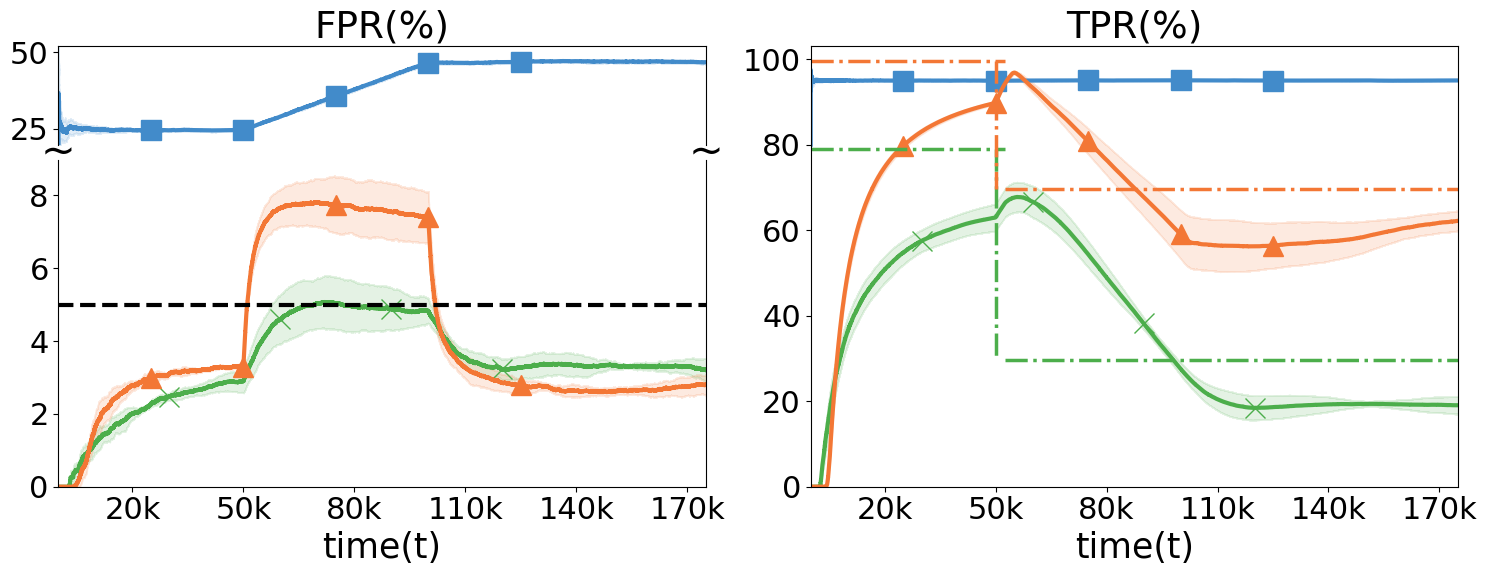}}
  }

  \mbox{
  \hspace{-10pt}
    \subfigure[KNN, 5k window, From Near-OOD to Far-OOD.  \label{fig:ds_nf_5k_knn}]{\includegraphics[scale=0.21]{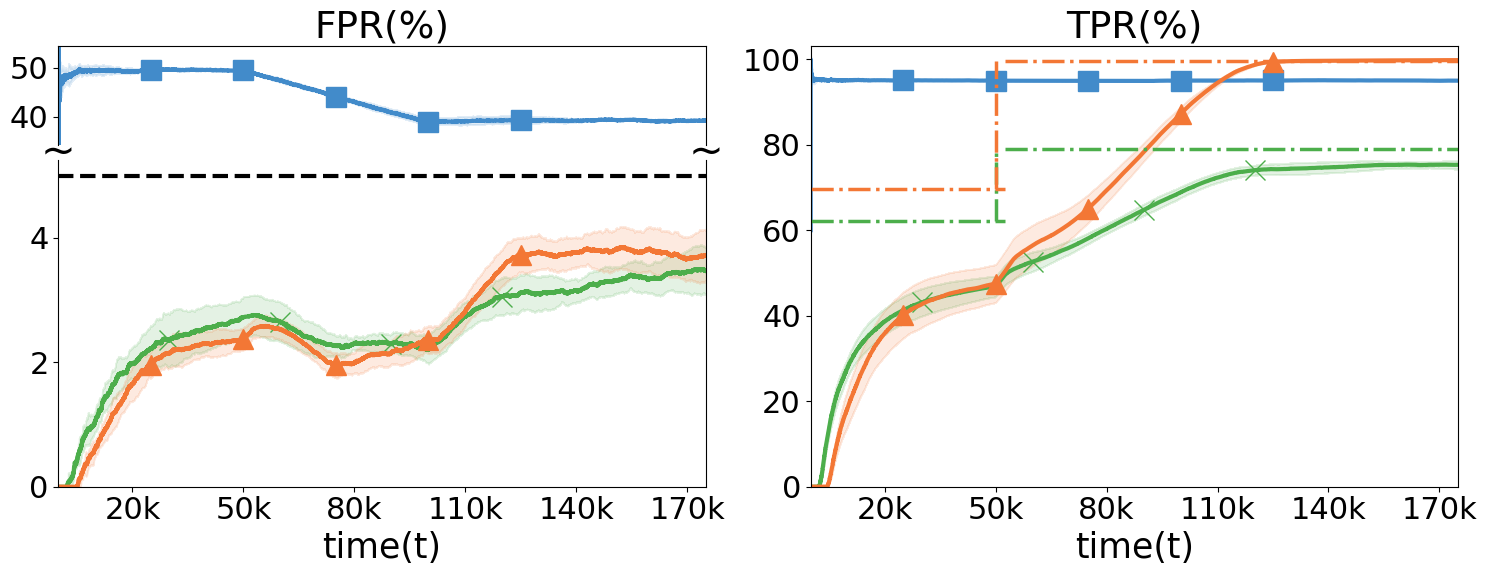}}
    
    \subfigure[KNN, 5k window, From Far-OOD to Near-OOD. \label{fig:ds_fn_5k_knn}]{\includegraphics[scale=0.21]{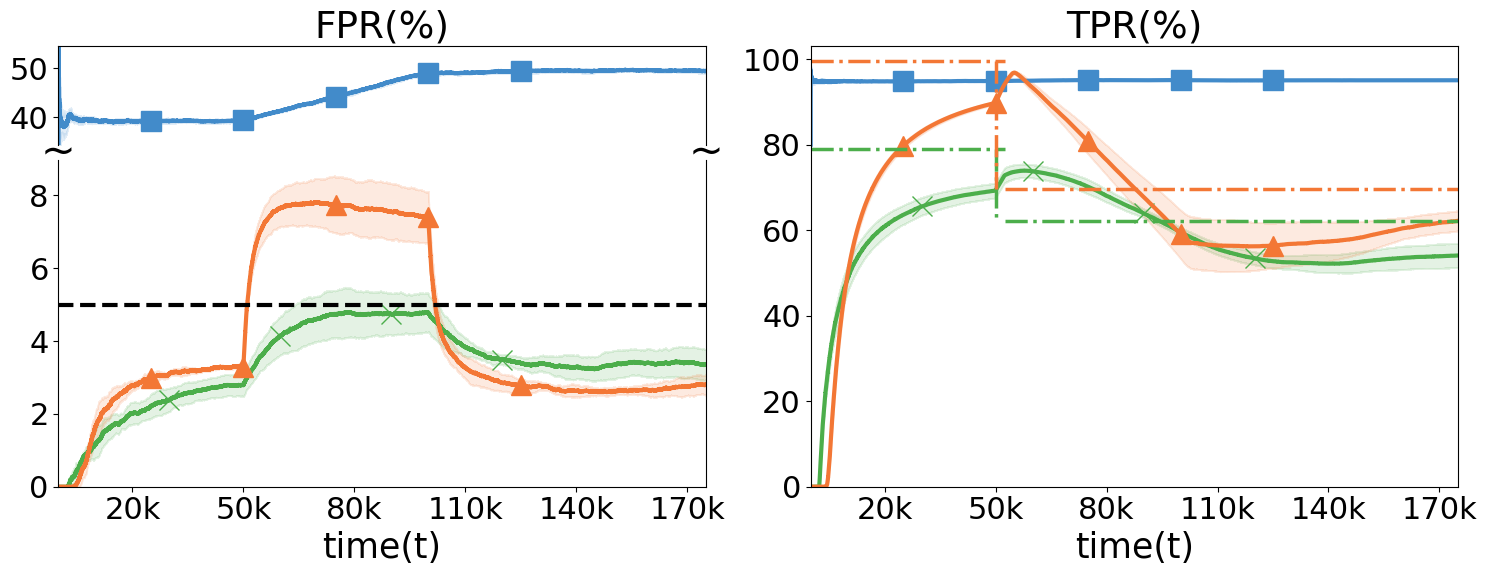}}
  }
  
  \vspace{-5pt}
  \captionsetup{justification=raggedright}
  \caption{ \small{
    Results on the nonstationary setting, with each method repeated 5 times (mean and std shown). A shift happens between Near-OOD and Far-OOD mixtures at time $t =$ 50k. EBO and KNN are the initial scoring functions and are fixed for \FSFT\ and \FSAT. The dotted-dash lines represent the TPR at FPR-5\% for EBO, KNN, and $\ghat^{\ast}$ (matching colors).
    }}
  \label{fig:distr_shft} 
  
\end{figure*}
\begin{figure*}[t]

    \centering
  \vspace{-10pt}
  \mbox{
  
    \subfigure[EBO, 5k window. \label{fig:ds_pc_5k_ebo}]{\includegraphics[scale=0.14]{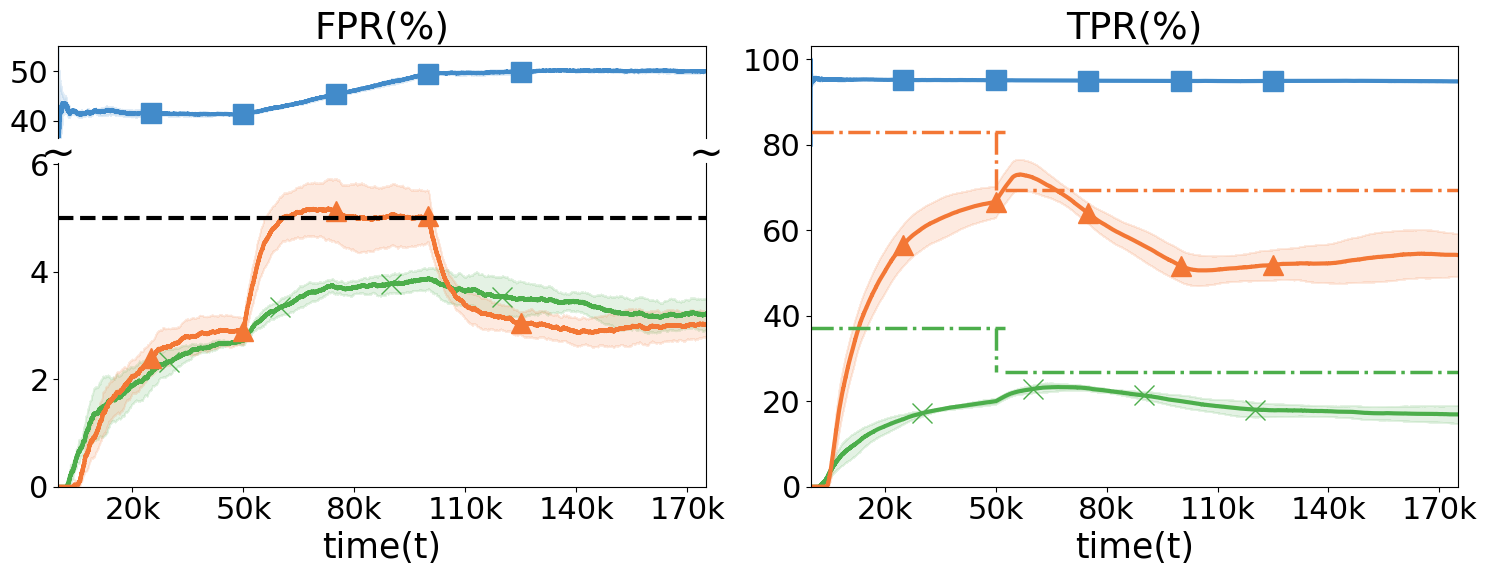}}
    
    \subfigure[EBO, 10k window. \label{fig:ds_pc_10k_ebo}]{\includegraphics[scale=0.14]{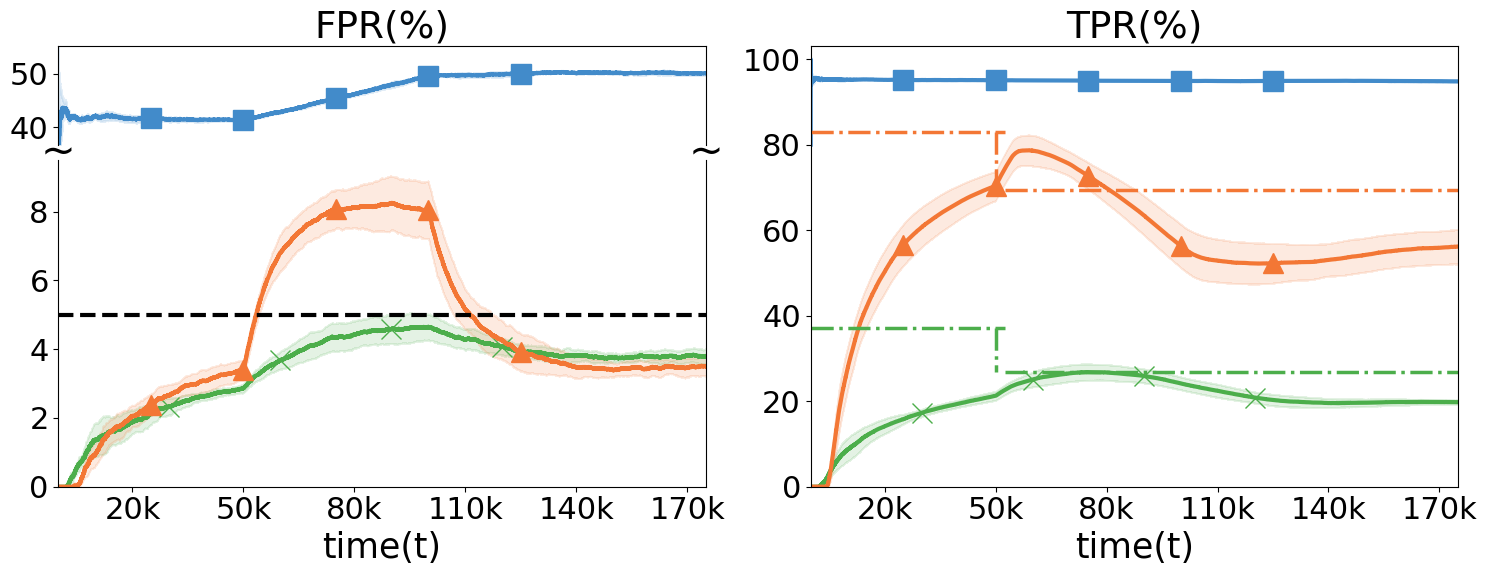}}

    \subfigure[EBO, 15k window \label{fig:ds_pc_15k_ebo}]{\includegraphics[scale=0.14]{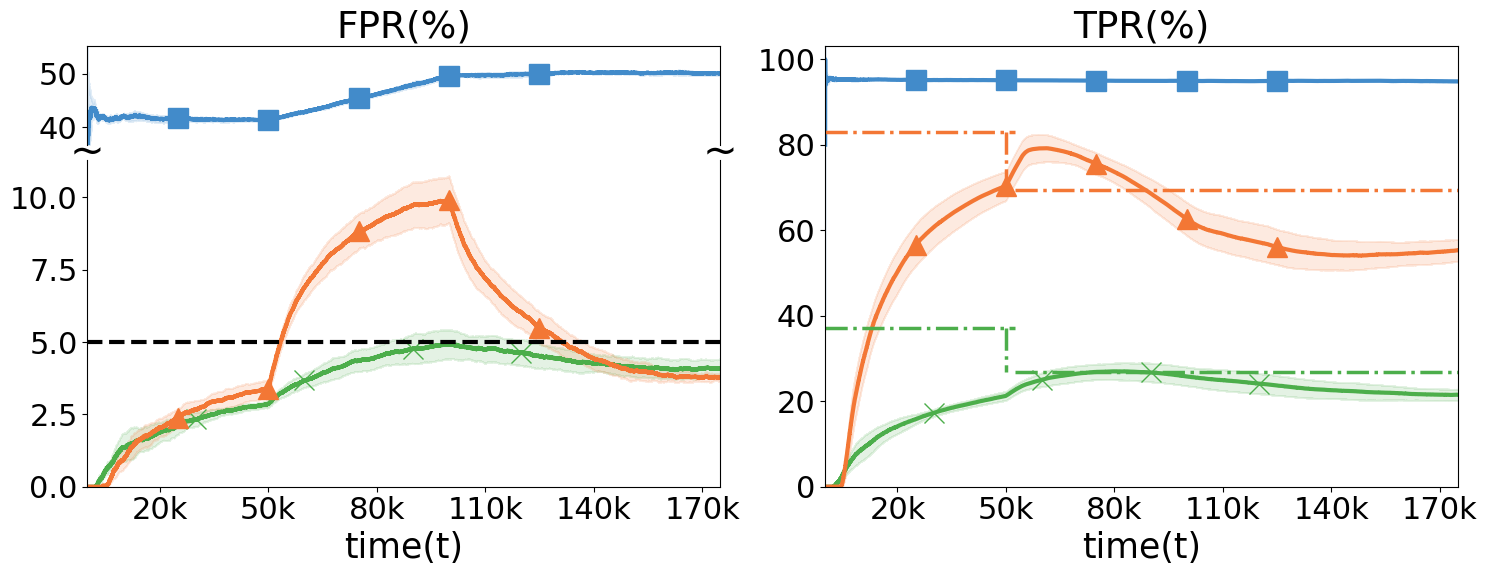}}
  }
  
  \vspace{-5pt}
  \captionsetup{justification=raggedright}
  \caption{ \small{
    Results on the nonstationary setting with different window sizes 5k, 10k, and 15k, with each method repeated 5 times (mean and std shown). A  shift happens from Places365 and CIFAR-100 at time $t =$ 50k.
    }}
  \label{fig:distr_shft_window} 
\end{figure*}

\textbf{Stationary Setting.} In this setting, $\mathcal{D}_0$ and $\mathcal{D}_1$ remain fixed. For evaluation, we set $N_{w}^{(eval)} = t$ (i.e., using all past data), since no abrupt changes in FPR/TPR are anticipated. For updating $\widehat{g}_t$, we set the optimization frequency $\omega^{(ood)}_{i}$ to 100 for the first 2k OOD points, 500 for the next 10k, and 1000 thereafter. This allows aggressive updates early on, with update frequency decreasing as $\widehat{g}_t$ becomes better trained. We analyze the effect of the update frequency $\omega^{(ood)}$ in Appendix~\ref{app:freq}.

\textbf{Nonstationary Setting.} We manually change the OOD $\mathcal{D}_0$ at $t = 50k$, since variations in $\gamma$ or $\mathcal{D}_1$ do not impact the FPR. We set $N_{w}^{(eval)} = 50$k for a more optimistic performance metric and use $\omega^{(ood)}_{i} = 100$ for the first 2k samples and 500 afterward to ensure $\widehat{g}_t$ adapts quickly to the new OOD. Since using all the past OOD samples (including \textit{outdated ones}) becomes ineffective after a shift, we adopt a \textit{windowed approach} that relies only on the most recent $N_w^{(est)}$ OOD samples for FPR estimation and updating $\widehat{g}_t$. This enables quicker adaptation to the new distribution. Note that the window size presents a trade-off: a smaller window encourages faster adaptation but results in a wider confidence interval, which in turn limits optimality.

\emph{Remark.} All experiments use Algorithm~\ref{alg::asat} (all-history estimator Eq.~\ref{eq:fpr_hat}, immediate deployment, heuristic UCB $\psi_t^{H}$), not the calibration-safe variant analyzed in Section~\ref{sec:theory}. Results are thus empirical evidence for this practical variant, not a direct instantiation of Proposition~\ref{prop:fpr_control_main}.

\subsection{Results and Discussions}

\textbf{C1.} In stationary settings, \ASAT\ aims to control FPR while maximizing TPR. In Figure \ref{fig:stationary}, \FSFT\ exhibits very high FPR at TPR-95\%, whereas both \FSAT\ and \ASAT\ keep FPR consistently below 5\% across all four settings. Initially, both \FSAT\ and \ASAT\ yield a TPR of 0\% because insufficient OOD samples force $\widehat{\lambda}_{t}$ to be set to $\infty$ (as per (\ref{prev_opt1})), until enough OOD samples are collected to satisfy $\psi_{t}(\delta) \le \alpha$. Moreover, \ASAT\ initially achieves a lower TPR than \FSAT\ because it adapts both thresholds and scoring functions---resulting in a wider confidence bound $\psi_{t}(\delta)$ that requires more OOD samples---while \FSAT\ updates only thresholds (see Section \ref{sec:theory}).

\begin{wrapfigure}{r}{0.43\textwidth} 
    \centering
    \includegraphics[width=\linewidth]{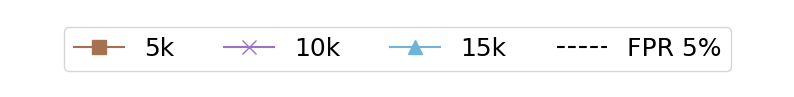} \\ 
    \includegraphics[width=\linewidth]{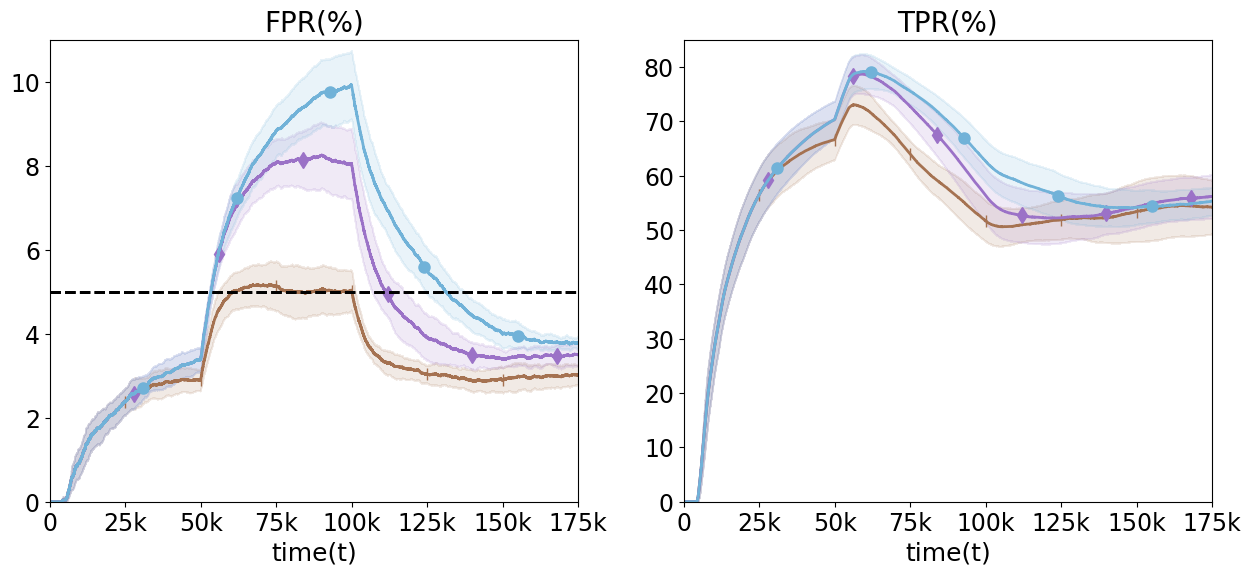} 
    \captionsetup{justification=raggedright}
    \caption{
    \small{
        Results on nonstationary OOD from Places365 to CIFAR-100 with different window $N^{(est)}_{w}$; only ASAT results are shown.
    }}
    \label{fig:window}
\end{wrapfigure}

However, this early drawback is offset by its ability to maximize TPR later in deployment. \ASAT\ quickly catches up with \FSAT\ and eventually surpasses the best achievable TPR by \FSAT\ at FPR-5\% in all settings. In particular, when both methods start with EBO, \ASAT\ shows over a 40\% improvement in TPR towards the end (see Figures \ref{fig:st_ebo_cifar100} and \ref{fig:st_ebo_near}). As expected, $\TPR(\ghat_{t}, \widehat{\lambda}_{t})$ converges to $\TPR(\ghat^{\star}, \lambda)$, confirming optimality. These verify \ASAT\ effectively updates both scoring functions and thresholds to maximize TPR while ensuring FPR control, outperforming the baselines.

\textbf{C2.} In nonstationary settings, \ASAT\ aims to maximize TPR while minimizing FPR violation if it occurs. As shown in Figure \ref{fig:distr_shft}, FPR violations occur when the new OOD is less distinguishable from the ID than the old OOD (e.g., Far-OOD to Near-OOD in Figures \ref{fig:ds_fn_5k_ebo} and \ref{fig:ds_fn_5k_knn}) whereas no FPR violations are observed in the reverse shift (Figures \ref{fig:ds_nf_5k_ebo} and \ref{fig:ds_nf_5k_knn}). In Figure \ref{fig:distr_shft_window}, \ASAT\ initially shows larger FPR violations than \FSAT, which is expected since it adapts (or \textit{specializes}) the scoring functions based on previous OOD data. However, \ASAT\ quickly reduces FPR below $\alpha$ and effectively adapts to the new OOD, consistently achieving higher TPR than \FSAT\ both before and after the shift. Figure \ref{fig:window} illustrates smaller windows enable faster adaptation and reduce FPR violations, though they limit the TPR convergence.

\vspace{-15pt}
\section{Related Works}

\vspace{-5pt}
\textbf{OOD Detection.}  
Ensuring robustness to out-of-distribution (OOD) inputs has been an important topic in the machine learning community, especially when deploying systems in safety-critical domains \citep{salehi2021unified,yang2024generalized,Bendale2015TowardsOS}. A wide range of post-hoc and training-time methods have been proposed for OOD detection, typically by designing scoring functions that assign confidence values to quantify the uncertainty of whether inputs are OOD. The maximum softmax probability was shown to provide a simple but effective baseline \citep{hendrycks2017a}. Building on this idea, different methods have been proposed to improve scoring. Energy-based approaches exploit the connection between energy and likelihood \citep{liu2020energy}, while Mahalanobis distance–based methods assume Gaussian class-conditional distributions and measure distances from ID clusters \citep{lee2018simple}. Self-supervised approaches \citep{sehwag2021ssd} learn effective representations without requiring additional labels. Another direction uses nonparametric similarity in the feature space, for example by applying deep nearest neighbors \citep{sun2022out}.  Hybrid approaches combine different sources of information, such as virtual logit matching \citep{wang2022vim}. A rich line of work also focuses on calibration, since overconfident neural networks can undermine the reliability of thresholding. Confidence learning explicitly trains models to output calibrated scores \citep{devries2018learningconfidenceoutofdistributiondetection}, while logit normalization reduces overconfidence by rescaling logits \citep{wei2022mitigatingneuralnetworkoverconfidence}. More recent work has proposed adversarial reciprocal points \citep{Chen_2021}, hyperspherical embeddings \citep{ming2022cider}, and large semantic space detectors \citep{huang2021mosscalingoutofdistributiondetection}. However, these methods often rely only on in-distribution (ID) data to design scoring functions and set static score thresholds, focusing on maximizing TPR. This often leads to high FPRs, since they are not explicitly controlled. Furthermore, having fixed scoring functions and thresholds may limit a system's ability to adapt to new OOD inputs under distribution shifts. 

Recent works have explored offline strategies that incorporate OOD data during training.  \citet{KatzSamuels2022TrainingOOD} train classifiers and OOD detectors jointly on unlabeled mixtures of ID and OOD data, while \citet{bai2024aha} use active learning to obtain a small number of labeled OOD samples. A complementary approach \citep{JMLR:v24:23-0838} develops a statistically rigorous framework based on multiple hypothesis testing and conformal $p$-values, which provides conditional false alarm guarantees. However, these methods are designed for offline use and do not adapt once deployment begins. In contrast, our goal is to develop an online framework that can adjust to new OOD variations on the fly while maintaining strict control of FPRs.

\textbf{FPR Control for OOD Detection.}  
Explicit control of the FPR in OOD detection has recently begun to receive systematic attention. \citet{vishwakarma2024taming} introduced a framework (referred as FSAT here) that adapts thresholds over time while holding the scoring function fixed, thereby guaranteeing that FPR remains below a user-specified level throughout deployment. This method is an important step toward making OOD detection more reliable in practice, as it directly addresses the lack of FPR guarantees in many earlier approaches. However, FSAT does not update the scoring function itself. As a result, the achievable TPR is inherently capped, since a fixed score may perform well on some OOD inputs but fail on others. Our work extends this line of research by jointly updating both the scoring function and the threshold on the fly, enabling the system to explicitly maximize TPR subject to an FPR constraint. Related ideas have also been studied in model-assisted data labeling \citep{vishwakarma2023promises,vishwakarma2024Pearls,vishwakarma2024pablo, vishwakarma2025rethinking}, where thresholds and model's confidence scores are adapted to improve the quality of auto-labeling, and in conformal prediction \citep{stutz2022learning}, where thresholds are optimized to balance coverage and accuracy.  

\textbf{Time-uniform Confidence Sequences.}  
Our theoretical framework builds on the rich literature of time-uniform confidence sequences, which provide concentration bounds that remain valid at all times. Classical results, such as the law of the iterated logarithm \citep{darling1967CS,lai1976CS}, established some of the earliest tools for sequential inference. More recent works have extended these ideas to modern problems in bandits and active learning, including algorithms for exploration and sequential quantile estimation \citep{jamieson2014lil,howard2022AnytimeValid}, and these techniques have also found applications in ranking and clustering tasks \citep{heckel2019active,vinayak2018graph,chen2023crowdsourced}. However, these results rely on i.i.d.\ assumptions of data, which do not hold in our setting. In our theoretical analysis,  we use importance sampling, combined with time-varying scoring functions and thresholds, which introduces data dependencies that invalidate standard concentration bounds. To overcome this challenge, we draw on martingale-based extensions of the law of the iterated logarithm \citep{balsubramani2015LIL}, which provide finite-time guarantees under such dependence. By constructing suitable martingale sequences, we are able to derive confidence sequences that remain valid even when the scoring function evolves over time. This adaptation is essential for providing rigorous FPR guarantees in online OOD detection systems that continuously learn from new data.

\raggedbottom
\section{Conclusion}
We introduced a human-in-the-loop OOD detection framework that updates both scoring functions and thresholds to maximize TPR while strictly controlling FPRs. Our approach formulated this as a constrained joint optimization problem, with a relaxed version developed for practical learning. To guarantee FPR control throughout deployment, we designed novel time-uniform confidence sequences that account for dependencies from importance sampling and time-varying scores and thresholds. Empirical evaluations show that our approach outperforms baselines by maintaining FPR control and improving TPR. 
\raggedbottom
\section{Limitations and Future Work}
While ASAT provides a new adaptive OOD detection framework with formal FPR control guarantees, several limitations remain and motivate future work.

\textbf{Noisy Human Feedback.}
In this work, we assume \emph{expert human labels}, as would be expected from domain specialists. While this is reasonable in the safety-critical settings motivating our work, there are settings where these inputs could be noisy. The method can be extended to handle noisy feedback using techniques from the label noise literature~\citep{natarajan2013learning, chen2023crowdsourced, ibrahim2025learning}. Incorporating such corrections into FPR estimation and extending FPR guarantees under noisy labels is a promising direction for future work.

\textbf{Shift Detection for Nonstationary Settings.}
Our guarantees assume stationary OOD distributions. A possible extension is to use change-point detection (e.g., by monitoring FPR) to detect shifts and re-initialize ASAT, which allows the stationary guarantees to apply piecewise across each stationary interval. A full theoretical analysis under nonstationary settings is left for future work.

\textbf{Different Types of Distribution Shifts.}
ASAT focuses on semantic OOD shifts, but covariate shifts (e.g., noise, resolution changes) are also important. Evaluating and extending ASAT under such shifts is an open area for future research.
\clearpage
\bibliographystyle{abbrvnat}
\bibliography{references}

\onecolumn
\appendix
\section*{Appendix}

The appendix is organized as follows. We provide a summary of the variables and constants in Table \ref{table:glossary}. Then, we give an algorithm for the entire \ASAT\ workflow in Section \ref{app:method} and detailed proofs for the FPR control in Section \ref{app:proofs}. In Section \ref{app:exp}, we provide additional experimental results and details of the experiment protocol and hyperparameters used for the experiments.

\setcounter{tocdepth}{2}
\startcontents[appendix]
\printcontents[appendix]{}{1}{}

\section{Glossary} 
\label{appendix:glossary}
We provide the following list of variables and constants used in our framework (Table \ref{table:glossary}).
\begin{table*}[h]
\centering
\footnotesize
\begin{tabular}{l l}
\toprule
Symbol & Definition/Description \\
\midrule
\midrule
$\mathcal{X}$ & feature space. \\
$\mathcal{Y}$ & label space $\{0, 1\}$, 0 for OOD and 1 for ID.\\
$\cD_{0},\cD_{1}$ & the underlying distributions of OOD and ID points\\
$\gamma$ & mixture ratio of OOD and ID distributions.\\
$x_t, s_t, y_t, \widehat{y}_{t}$ & sample, score, true label, and predicted label at time $t$.\\
$X_{t}^{(ood)}$. & \textit{dependent} OOD samples collected up to time $t$.\\
$X_{0}^{(id)}$ & i.i.d. ID samples available before deployment.\\
\midrule
$\FPR(g, \lambda)$ & population level false positive rate defined by scoring function $g$ and threshold $\lambda$.\\
$\TPR(g, \lambda)$ & population level true positive rate defined by scoring function $g$ and threshold $\lambda$.\\
$\FPRhat_{t}(g, \lambda)$ & estimated FPR at time $t$, adjusted to account for importance sampling (see \eqref{eq:fpr_hat}).\\
$\TPRhat(g, \lambda)$ & estimated TPR fixed for all $t$, based on i.i.d. ID points (see \eqref{eq:tpr_hat}).\\
$\widetilde{\text{FPR}}_{t}(g,\lambda)$ & smoothed $\FPRhat_{t}(g, \lambda)$ (see \eqref{eq:fpr_tilde}) used for optimization in (\ref{eq:opt2})\\
$\widetilde{\text{TPR}}(g,\lambda)$ & smoothed $\TPRhat(g, \lambda)$ (see \eqref{eq:tpr_tilde}) used for optimization in (\ref{eq:opt2}).\\
$\text{FPR}_{t}^{(eval)}$ & evaluation FPR for experiments in \eqref{eq:eval_fpr_tpr}.\\
$\text{TPR}_{t}^{(eval)}$ & evaluation TPR for experiments in \eqref{eq:eval_fpr_tpr}.\\
$\FPRhat^{\mathrm{post}}_{t}(g, \lambda)$ & post-training FPR estimate for the calibration-safe variant (see \eqref{eq:fpr_post}).\\
\midrule
$\cG, \Lambda$ &  hypothesis class of scoring functions over $\mathcal{X}$ and space of thresholds.\\
$(g^{\ast}, \lambda^{\ast})$ &  the optimal solution (non-unique) to the joint optimization problem in (\ref{opt1}).\\
$(\widehat{g}_{t}, \widehat{\lambda}_{t})$ & scoring function and threshold learned at time $t$.\\
$\widehat{\lambda}_{t}'$ & \textit{unsafe} threshold obtained from (\ref{opt1}) together with $\widehat{g}_{t}$.\\
\midrule
$i_u$ & indicator of whether $x_t$ was importance sampled.\\
$\sigma(\kappa, z)$ & sigmoid function for input $z$.\\
$\omega^{(ood)}_{i}$ & optimization frequency, indicating the number of new OOD points required for $i$th updating.\\
$\beta_t, c_t$ & time-dependent variables used for LIL-based confidence bounds (see Proposition \ref{prop:fpr_control_main}). \\
$\Delta_{t}^{(ood)}$ & number of \textit{new} OOD points identified via human feedback up to time $t$ since \textit{last update}.\\
$N_{t}^{(imp)}$ & number of OOD points identified via importance sampling up to time $t$.\\
$U_t$ & number of total deployments of new scoring functions by time $t$.\\
$N_{w}^{(eval)}$ & window size that the system uses for computing evaluation FPR and TPR.\\
$N_{w}^{(est)}$ & window size for estimation and optimization for nonstationary OOD settings.\\
\midrule
$p$ & importance sampling probability.\\
$\delta$ & failure probability for confidence intervals.\\
$\alpha$ & the threshold for FPR for all time points.\\
$\eta$ & discretization parameter for $\Lambda$.\\
$\beta$ & optimization parameter that controls a trade-off between TPR and FPR.\\
$\kappa$ & the smoothness parameter of sigmoid functions\\
$(c_1, c_2, c_3)$ & constants use for heuristic LIL bounds.\\
\midrule
$\psi_{t}(\delta)$ &  LIL-based confidence interval at time $t$.\\
$\psi^H_{t}(\delta)$ & LIL-based heuristic interval at time $t$.\\
$\zeta(\delta, n)$ & DKW confidence interval for TPR.\\
\toprule
\end{tabular}
\caption{
	Glossary of variables and symbols used in the framework.
}
\label{table:glossary}
\end{table*}
 
\section{Additional Details for Methodology} 
\label{app:method}

We provide a detailed description of the ASAT workflow in Algorithm \ref{alg::asat}. We also provide the algorithm for binary search procedures used to solve (\ref{prev_opt1}) in Algorithm \ref{alg::binary_search}.

\begin{algorithm}
\caption{Binary Search to Solve (\ref{prev_opt1})}
\label{alg::binary_search}

\begin{algorithmic}[1]
\Procedure{\texttt{BinarySearch}}{$X_t^{(ood)}$, $\ghat_t$}
    \State $\lambda_{l} = \Lambda_{min}$, $\lambda_{h} = \Lambda_{max}$, $feasible = $ \texttt{False}, $\epsilon \approx 0$

    \While{$\lambda_{h}-\lambda_{l} > \epsilon$}
        \State $\lambda_{m} = (\lambda_{l}+\lambda_{h})/2$
        \If{$\widehat{\text{FPR}}_{t}(\ghat_t, \lambda_{m}) + \psi_{t}(\delta) \le \alpha$}
            \State $feasible =$ \texttt{True}
            \State $\lambda_{h} = \lambda_{m}$
        \Else
            \State $\lambda_{l} = \lambda_{m}$
        \EndIf
    \EndWhile
    \State \Return $\lambda_m$, $feasible$
\EndProcedure
\end{algorithmic}
\end{algorithm}

\textbf{DKW Criterion for Model Selection.}
Suppose an update occurs at time $t$ (i.e., $\Delta^{ood}_{t} \ge \omega^{ood}_{U_{t}}$) by solving (\ref{eq:opt2}). We ensure that the updated scoring function $\ghat_{t}$ outperforms $\ghat_{t-1}$, by applying following TPR criterion:
$$\widehat{\text{TPR}}(\ghat_{t}, \widehat{\lambda}_{t}) - 2\zeta(\delta, |X_0^{(id)}|) > \widehat{\text{TPR}}(\ghat_{t-1}, \widehat{\lambda}_{t-1}),$$
where $\zeta(\delta, |X_0^{(id)}|)$ is the Dvoretzky–Kiefer–Wolfowitz (DKW) confidence interval \citep{dvoretzky1956asymptotic} given by
$$\zeta(\delta, |X_0^{(id)}|) = \sqrt{\frac{1}{|X_{0}^{(id)}|} \log \left(\frac{2}{\delta}\right)},$$
which is valid w.p. at least $1-\delta$. This confidence bound relies on the fact that $X_{0}^{(id)}$ is \text{i.i.d.} and the TPR can be expressed as $\text{TPR}(g, \lambda) = 1 -F_{1}(\lambda; g)$, where $F_1$ is the CDF of the ID score distribution mapped by $g$.
\section{Proofs}
\label{app:proofs}

\textbf{Summary of \ASAT\ Settings. } At each time $t$, the \ASAT\ system processes an input $x_t$ using scoring function $\ghat_{t-1}$ and threshold $\widehat{\lambda}_{t-1}$, both learned at the previous time $t-1$. It first computes score $s_t = \ghat_{t-1}(x_t)$ for $x_t$. If $s_t \le \widehat{\lambda}_{t-1}$, then $x_t$ is considered OOD and receives a human label $y_t \in \{0,1\}$. If $s_t > \widehat{\lambda}_{t-1}$, $x_t$ is considered ID, and in this case, we obtain the human label $y_t$ only with probability $p \in (0,1)$ (i.e., importance sampling). Now, if $x_t$ is identified as OOD and the system has collected enough new OOD points required for the next update (i.e., $\Delta^{(ood)}_{U_t}$), we estimate the FPR and TPR in \eqref{eq:true_fpr_tpr} using the collected finite samples to solve the optimization in (\ref{eq:opt2}). The FPR and TPR estimates are defined as
\begin{equation}
    \begin{aligned}
        \widehat{\text{FPR}}_{t}(g, \lambda) := \frac{ \sum_{u=1}^{t} Z_{u} \cdot \mathbbm{1}\{g(x_u) > \lambda\}}{ \sum_{u=1}^{t} Z_{u}},
    \end{aligned}
    \label{eq:appdx_fpr_hat}
\end{equation}

\[
Z_u := 
\begin{cases}
    1 &\text{if } s_u \le \widehat{\lambda}_{u-1}, i_u = 0, y_u = 0, \\
    \frac{1}{p} &\text{if } s_u > \widehat{\lambda}_{u-1}, i_u = 1, y_u = 0, \\
    0 &\text{otherwise},
\end{cases}
\]
and 
\begin{equation}
    \begin{aligned}
        \widehat{\text{TPR}}(g, \lambda) &:= \frac{1}{|X_0^{(id)}|} \sum_{x \in X_0^{(id)}} \mathbbm{1}\{g(x) > \lambda\}.
    \end{aligned}
    \label{eq:appdx_tpr_hat}
\end{equation}

In (\ref{eq:opt2}), the differentiable surrogates of the FPR and TPR estimates from \eqref{eq:fpr_tilde} and \eqref{eq:tpr_tilde} are used for efficient optimization. If an update occurs, the system obtains a new scoring function $\ghat_{t}$ and threshold $\widehat{\lambda}_{t}'$. Otherwise, the previous scoring function is retained, i.e., $\ghat_{t} = \ghat_{t-1}$. Since the optimization problem in (\ref{eq:opt2}) is unconstrained, there's no guarantee that the original FPR constraint will be satisfied by $\ghat_{t}$ and $\widehat{\lambda}_{t}'$. Therefore, the system solves a different optimization in (\ref{prev_opt1}) to find $\widehat{\lambda}_t$, replacing $\widehat{\lambda}_{t}'$, based on the current scoring function $\ghat_{t}$. In this optimization, we use the FPR estimate in \eqref{eq:fpr_hat} and LIL-based upper confidence bounds (UCB) to ensure that the true FPR constraint is satisfied with high probability. Additional details on the workflow are provided in \ref{app:method}. This is summarized precisely in Algorithm~\ref{alg::asat}.

\textbf{Calibration-Safe Variant of Algorithm~\ref{alg::asat}.} For theoretical analysis, we consider a modification to Algorithm~\ref{alg::asat} in how the threshold is calibrated for each new scoring function. The formal FPR guarantee in Proposition~\ref{prop:fpr_control} holds rigorously under adaptive scoring for this calibration-safe variant. We denote the $k$-th scoring function as $\widehat{g}^{(k)}$, trained at time $\tau_k$ using OOD data collected up to $\tau_k$. Observe when $\widehat{g}^{(k)}$ is trained at $\tau_k$, it may depend on samples $x_u$ for $u \leq \tau_k$ through training in (\ref{eq:opt2}).

To avoid this explicit dependency of the estimate of FPR for $\widehat{g}^{(k)}$ on the samples it has been trained on,  we slightly modify Algorithm~\ref{alg::asat} in how thresholds are calibrated: for each $\widehat{g}^{(k)}$ trained at time $\tau_k$, the threshold is calibrated using only post-training samples ($u > \tau_k$). To avoid a cold-start ($\lambda = +\infty$) after each scoring function update, deployment of $\widehat{g}^{(k)}$ is delayed: the previous $\widehat{g}^{(k-1)}$ remains active with its already-calibrated threshold, while $\widehat{g}^{(k)}$ is calibrated in the background. Once (\ref{prev_opt1}) becomes feasible for $\widehat{g}^{(k)}$, it is deployed with its calibrated threshold. FPR control is maintained throughout: by $\widehat{g}^{(k-1)}$ during the waiting period, and by $\widehat{g}^{(k)}$ afterward. Formally, for any scoring function $\widehat{g}^{(k)}$ trained at time $\tau_{k}$ and any $t > \tau_k$, we define the post-training FPR estimator as:
\begin{equation}
    \widehat{\mathrm{FPR}}^{\mathrm{post}}_{t}(\widehat{g}^{(k)}, \lambda) := \frac{\sum_{u=\tau_k+1}^{t} Z_u \mathbf{1}\{\widehat{g}^{(k)}(x_u) > \lambda\}}{\sum_{u=\tau_k+1}^{t} Z_u}.
    \label{eq:fpr_post}
\end{equation}
Intuitively, this estimator excludes all samples that existed before training to eliminate the dependence. The theoretical result below assumes this workflow and FPR estimator.

\begin{remark} In all experiments we use Algorithm~\ref{alg::asat} as presented, with the all-history FPR estimator from Eq.~\ref{eq:appdx_fpr_hat} and the heuristic UCB constants from Appendix~\ref{appdx:c_heuristic}. That is, in practice, the threshold calibration is done without sample splitting. Understanding why double dipping into the samples used to train the new scoring function for calibration works well in practice is an interesting future research direction. \end{remark}

\begin{algorithm}[t]
\caption{Calibration-Safe Variant for Theoretical Analysis}
\label{alg::asat_safe}
\footnotesize
\begin{algorithmic}[1]
\State Data collection, importance sampling, and scoring-function training via (\ref{eq:opt2}) proceed as in Algorithm~\ref{alg::asat}.
\State When a new scoring function $\widehat g^{(k)}$ is trained at time $\tau_k$, keep the previous calibrated scoring function active.
\State Calibrate $\widehat g^{(k)}$ in the background using only post-training samples $u>\tau_k$:
\[
\widehat{\mathrm{FPR}}^{\mathrm{post}}_{t}(\widehat g^{(k)},\lambda)
=
\frac{\sum_{u=\tau_k+1}^{t}Z_u\mathbf 1\{\widehat g^{(k)}(x_u)>\lambda\}}
{\sum_{u=\tau_k+1}^{t}Z_u}.
\]
\State At each time $t > \tau_k$, solve (\ref{prev_opt1}) for $\widehat g^{(k)}$ using $\widehat{\mathrm{FPR}}^{\mathrm{post}}_{t}$ and the corresponding post-training confidence bound.
\State Deploy $\widehat g^{(k)}$ only once (\ref{prev_opt1}) is feasible; otherwise keep the previous calibrated scoring function active.
\end{algorithmic}
\end{algorithm}

\textbf{Proof Outline.} Our goal in \ASAT\ is to ensure that $\FPR(\ghat_{t}, \widehat{\lambda}_t)$ remains below $\alpha$ at all times. Similar results were achieved for \FSAT, where thresholds were adapted for a \textit{fixed} scoring function \citep{vishwakarma2024taming}. Their results relied on constructing time-uniform FPR estimation bounds for all times $t$ and thresholds $\lambda \in \Lambda$. The OOD samples used to estimate the FPR are non-i.i.d., as the decision to receive a human label for $x_t$ depends on the previously learned $\ghat_{t-1}$ and $\widehat{\lambda}_{t-1}$, which themselves are functions of past data $\{x_1, \dots, x_{t-1}\}$. In \FSAT, this dependency is handled by applying the martingale version of the LIL bound \citep{khinchine1924LIL, balsubramani2015LIL}, which leads to a time-uniform confidence sequence. However, in \ASAT, the same LIL-based UCB does not directly apply, as the system periodically adapts new, \textit{time-varying}, scoring functions and uses them for predictions and sample collection. Despite this, we claim that our design still ensures comparable FPR control under the above modifications. We first show that each $x_t$ identified as OOD is an unbiased estimator for the FPR.

\begin{lemma}
    \label{lemma:exp_z_u}
    Let $Z_u$ be defined as in \eqref{eq:fpr_hat}, and let $i_u$ be the indicator for whether $x_u$ is importance sampled. Given $g \in \mathcal{G}$ and $\lambda \in \Lambda$, we have,
    \begin{enumerate}[leftmargin=13pt,label={\alph*)}, topsep=0pt]
         \item $\E_{x_u,i_u} [ Z_u \one \{ g(x_u) > \lambda\}] = \gamma\cdot\FPR(g,\lambda)$,
         \item $\E_{x_u,i_u} [ Z_u ] = \gamma$.
    \end{enumerate}
    Consequently, we have
    $$\E_{x_u,i_u} [ Z_u \left(\one \{ g(x_u) > \lambda\}-\FPR(g,\lambda)\right)] = 0. $$
\end{lemma}

\begin{proof}
    Since $x_u \sim \gamma \mathcal{D}_0 + (1-\gamma) \mathcal{D}_1$ and $Z_u = 0$ whenever $y_u =1$, we condition on $y_u$:
    $$\E_{x_u,i_u} [ Z_u \one \{ g(x_u) > \lambda\}] = \gamma \cdot \E_{x_u,i_u} [ Z_u \one \{ g(x_u) > \lambda\} \mid y_u = 0],$$
    since $Z_u = 0$ when $y_u = 1$. It remains to evaluate the conditional expectation. Conditioned on $y_u = 0$ we have $x_u \sim \mathcal{D}_0$. Now, set $m_{u} := \mathbb{P}(\ghat_{u-1}(x_u) \le \widehat{\lambda}_{u-1} \mid x_u, y_u = 0)$. Then, we have
    \begin{align*}
    \begin{split}
       \E_{x_u,i_u} \left [  Z_u \one \{ g(x_u) > \lambda\} \mid y_u=0 \right ]  
        &= \mathbb{E}_{x_u \sim \mathcal{D}_0}  \left[\one \{ g(x_u) > \lambda\} \mathbb{E}_{i_u | x_u,y_u=0}  \left[Z_u\right]\right].
        \end{split}
    \end{align*}
    Conditioned on $x_{u}$ additionally, we have
    \begin{align*}
        \mathbb{E}_{i_u | x_u, y_u=0}  \left[Z_u\right] &= m_u \cdot \mathbb{P}(i_u = 0 \mid \ghat_{u-1}(x_u) \le \widehat{\lambda}_{u-1}) + \frac{1}{p} (1-m_u) \cdot \mathbb{P}(i_u = 1 \mid \ghat_{u-1}(x_u) > \widehat{\lambda}_{u-1})\\
        &= m_u + \frac{1}{p}(1-m_u)p = 1.
    \end{align*}
    Hence, we have 
    $$\E_{x_u,i_u} \left [ Z_u \one \{ g(x_u) > \lambda\} \mid y_u = 0\right ]  
    = \mathbb{E}_{x_u \sim \mathcal{D}_0}  \left[\one \{ g(x_u) > \lambda\} \right] = \FPR(g, \lambda).$$
    Combining gives (a). Part (b) follows identically with $\one\{g(x_u) > \lambda\}$ replaced by 1. The consequence then follows directly from (a) and (b) by linearity. 
\end{proof}

Next, we identify a martingale structure within the OOD samples. This step is crucial, as it will allow us to apply the martingale-LIL, yielding a time-uniform confidence sequence.

\begin{lemma}
\label{lemma::martingale}
    Fix any $g \in \mathcal{G}$, $\lambda \in \Lambda$, and starting index $s \ge 1$. For any time step $t \ge s$, define 
    $$M_t^{(s)}(g,\lambda) := \sum_{u=s}^{t} Z_u \Big (\mathbbm{1}\{g(x_u) > \lambda\} - \FPR(g,\lambda) \Big ).$$
    Then, the stochastic process $\{M_t^{(s)}(g,\lambda)\}_{t \ge s}$ is a martingale with respect to the filtration $\{\mathcal{F}_{t}\}_{t \ge s}$, where $\mathcal{F}_{t}$ is the $\sigma$-field generated by the events until time $t$. 
\end{lemma}
\begin{proof}
We observe that $\mathbb{E} M_{t}^{(s)} < \infty$ and $M_t^{(s)}$ is $\mathcal{F}_{t}$-measurable. Now, we have for any $t > s$,
 \begin{align*}
     \E[ M_t^{(s)}(g,\lambda) \mid \mathcal{F}_{t-1}] &= \E \Big [ Z_t (\mathbbm{1}\{g(x_t) > \lambda\} - \FPR(g,\lambda)) + M_{t-1}^{(s)}(g,\lambda) \mid \mathcal{F}_{t-1} \Big ]\\
     &=  \E \Big [ Z_t(\mathbbm{1}\{g(x_t) > \lambda\} - \FPR(g,\lambda)) \Big ] + M_{t-1}^{(s)}(g,\lambda)\\
     &= M_{t-1}^{(s)}(g, \lambda),
 \end{align*}
 where the third equality follows from Lemma \ref{lemma:exp_z_u} applied conditionally on $\mathcal{F}_{t-1}$.
\end{proof}

In the calibration-safe variant, we apply Lemma~\ref{lemma::martingale} with $g = \widehat{g}^{(k)}$ and $s = \tau_k + 1$, so that $\widehat{g}^{(k)}$ is fixed relative to all samples in the sum (see Proposition~\ref{prop:fpr_control}). Lemma \ref{lemma::martingale} leads to the following proposition on the LIL-based time-uniform confidence intervals for $\{M_t(g, \lambda)\}$.

\begin{proposition}
\label{prop:martingale_lil_lambda}
    Fix $g \in \mathcal{G}$ and a starting index $s \ge 1$. For any $t \ge s$, let $N_t^{(o)} = \sum_{u=s}^t Z_u$ and $N_t^{(imp)}$ be the number of importance sampled OOD points in $[s, t]$ and $\beta_t = \frac{N_t^{imp}}{N_t^{(o)}}$. Let $c_t := 1 + \frac{1-p}{p^2}\beta_t $, and $t_0 = \min\{u > s: c_u N_u^{(o)} \ge 173 \log(4/\delta)\}$. Let $M_t^{(s)}$ be defined as in Lemma \ref{lemma::martingale}. Let $\Lambda := \{\Lambda_{\min}, \Lambda_{\min}+\eta, ...., \Lambda_{\max}\}$, with discretization parameter $\eta > 0$. Then, we have
    $$\mathbb{P}\left(\forall\; t \ge t_0: \sup_{\lambda \in \Lambda} | M_t^{(s)} (g,\lambda) | < \psi'(t, \delta)\right) \ge 1-\delta,$$
    where 
    $$\psi'_{t}(\delta) := \sqrt{3c_t N_t^{(o)}\left[2\log\log\left(\frac{3c_t N_t^{(o)}}{2}\right) + \log\left(\frac{2|\Lambda|}{\delta}\right)\right]}.$$
\end{proposition}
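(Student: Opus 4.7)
The plan is to apply a uniform-in-time martingale law of iterated logarithms (Balsubramani, 2015) to the martingale $\{M_t(g,\lambda)\}$ identified in Lemma~\ref{lemma::martingale}, for each fixed $(g,\lambda)$, and then take a union bound over the finite discretized threshold set $\Lambda$. Fix $g \in \mathcal{G}$ and $\lambda \in \Lambda$, and write the increment as $\xi_u = Z_u \mathbbm{1}\{g(x_u) > \lambda\} - \FPR(g,\lambda)$. Since Lemma~\ref{lemma:exp_z_u} already centers the increments, the two ingredients I still need are (i) a bound on the accumulated conditional variance of the $\xi_u$'s and (ii) a deterministic identity expressing that bound as $c_t N_t^{(o)}$, which is the quantity appearing inside the radical of $\psi'_t(\delta)$.

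For (i), note $\mathbb{E}[\xi_u^2 \mid \mathcal{F}_{u-1}] \le \mathbb{E}[(Z_u \mathbbm{1}\{g(x_u) > \lambda\})^2 \mid \mathcal{F}_{u-1}] \le \mathbb{E}[Z_u^2 \mid \mathcal{F}_{u-1}]$. For (ii), the identity is purely deterministic: $Z_u$ takes the value $1$ on below-threshold OOD points (contributing $N_t^{(o)} - N_t^{imp}/p$ to $N_t^{(o)}$) and the value $1/p$ on importance-sampled above-threshold OOD points (there are $N_t^{imp}$ of these, each contributing $1/p$ to $N_t^{(o)}$). Squaring and summing gives
$\sum_{u=1}^t Z_u^2 \;=\; \bigl(N_t^{(o)} - N_t^{imp}/p\bigr) + N_t^{imp}/p^2 \;=\; N_t^{(o)}\bigl(1 + (1-p)\beta_t/p^2\bigr) \;=\; c_t N_t^{(o)}.$

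With the variance control in hand, I would invoke Balsubramani's LIL-style concentration inequality to conclude that, with probability at least $1 - \delta/(2|\Lambda|)$, the one-sided tail $M_t(g,\lambda)$ is bounded by $\sqrt{3 c_t N_t^{(o)}\bigl[2\log\log(3 c_t N_t^{(o)}/2) + \log(2|\Lambda|/\delta)\bigr]}$ simultaneously for all $t \ge t_0$. A union bound over the $|\Lambda|$ thresholds and the two signs of $M_t$ then yields the desired uniform statement. The burn-in condition $t_0 = \min\{u : c_u N_u^{(o)} \ge 173 \log(4/\delta)\}$ is precisely the threshold at which Balsubramani's bound becomes non-vacuous for the stated failure probability; it is inherited directly from the constants in the underlying LIL.

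The main obstacle I anticipate is the passage from the predictable quadratic variation $\langle M \rangle_t = \sum_u \mathbb{E}[\xi_u^2 \mid \mathcal{F}_{u-1}]$, in which the martingale LIL is naturally phrased, to the empirical quantity $\sum_u Z_u^2 = c_t N_t^{(o)}$ that appears in $\psi'_t(\delta)$ and is actually observable to the algorithm. Since $Z_u \le 1/p$ almost surely and $\{Z_u^2 - \mathbb{E}[Z_u^2 \mid \mathcal{F}_{u-1}]\}_u$ is itself a bounded-increment martingale difference, this gap can be controlled by a Freedman-type auxiliary concentration argument and absorbed into the constants $3$ and $173$. Verifying this substitution carefully, together with checking that the two-sided union bound over $\Lambda$ really contributes only $\log(2|\Lambda|/\delta)$ inside the radical (rather than something larger from the $\log\log$ term), is where most of the bookkeeping will live.
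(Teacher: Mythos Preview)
Your proposal is correct and follows essentially the same route as the paper: apply Balsubramani's martingale LIL to $\{M_t(g,\lambda)\}$ for each fixed $\lambda$, then union-bound over the finite set $\Lambda$. Your treatment is in fact more careful than the paper's own proof, which simply notes that the increments are bounded by $1/p$ and invokes Balsubramani without explaining how $c_t N_t^{(o)}$ arises; your deterministic identity $\sum_{u\le t} Z_u^2 = c_t N_t^{(o)}$ and your flagging of the predictable-versus-empirical quadratic variation gap are details the paper omits entirely.
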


\begin{proof}
    For the martingale sequence $\{M_t^{(s)}(g,\lambda)\}_{t > s}$ defined in Lemma \ref{lemma::martingale}, the following is true for all $t > s$,
    $$|M_t^{(s)} - M_{t-1}^{(s)}| \le \begin{cases}
        1 &: s_u \le \widehat{\lambda}_{u-1}, i_u = 0, y_u = 0\\
        1/p &: s_u > \widehat{\lambda}_{u-1}, i_u = 1, y_u = 0\\
        0 &: o.w.
    \end{cases}$$
    Hence, we know that $\{M_t^{(s)}\}$ has a bounded increment for all $t$. Define now $N_t^{(o)} = \sum_{u=s}^t Z_u$ and $N_t^{(imp)}$ be the number of importance sampled OOD points in $[s, t]$. Set $\beta_t = \frac{N_t^{imp}}{N_t^{(o)}}$, $c_t := 1 + \frac{1-p}{p^2}\beta_t $, and $t_0 = \min\{u > s: c_u N_u^{(o)} \ge 173 \log(4/\delta)\}$. Then, applying the Martingale LIL result from \cite{balsubramani2015LIL}, we obtained the following uniform concentration for a given $g \in \cG$ and $\lambda \in \Lambda$:
    $$\mathbb{P}\left(\forall\;t\ge t_0: |M_t^{(s)}(g,\lambda)| \le \sqrt{3c_t N_t^{(o)}\left[2\log\log\left(\frac{3c_t N_t^{(o)}}{2}\right) + \log\left(\frac{2|\Lambda|}{\delta}\right)\right]} \right) \ge 1-\delta.$$
    The result follows by applying the union bound over the discretized class $\Lambda$ of $\lambda$.
\end{proof}

Proposition \ref{prop:martingale_lil_lambda} provides a time-uniform confidence sequence valid for all $t \ge t_0$ and $\lambda \in \Lambda$, but it holds for a fixed $g \in \mathcal{G}$ and must be adapted to account for \textit{time-varying} scoring functions $g_{t}$ for all $t \ge t_0$. Under the calibration-safe variant, the following proposition gives the desired result.

\begin{proposition}[Restatement of Proposition~\ref{prop:fpr_control_main}]
\label{prop:fpr_control}
    Let $U_t$ be the number of scoring function updates before time $t$, and let $\tau_t$ denote the training time of the active scoring function $\widehat{g}_t$. Applying Proposition~\ref{prop:martingale_lil_lambda} with $s = \tau_t + 1$, define $N_t^{(o)} = \sum_{u=\tau_t+1}^{t} Z_u$, $N_t^{(imp)}$ as the number of importance-sampled OOD points in $(\tau_t, t]$, $\beta_t = N_t^{(imp)}/N_t^{(o)}$, $c_t = 1 + (1-p)\beta_t/p^2$, and $t_0 = \min\{u: c_u N_u^{(o)} \ge 173 \log(4/\delta)\}$. Let $\Lambda := \{\Lambda_{\min}, \Lambda_{\min}+\eta, \ldots, \Lambda_{\max}\}$, with discretization parameter $\eta > 0$. Then,
    $$\mathbb{P}\left(\forall\; t \ge t_0: \sup_{\lambda \in \Lambda} \left| \widehat{\mathrm{FPR}}^{\mathrm{post}}_{t}(\widehat{g}_t,\lambda ) - \FPR(\widehat{g}_t,\lambda) \right| < \psi_{t}( \delta)\right) \ge 1-\delta,$$
    where 
    $$\psi_{t}(\delta) := \sqrt{\frac{3c_t}{N_t^{(o)}}\left[2\log\log\left(\frac{3c_t N_t^{(o)}}{2}\right) + 2\log\left(\frac{4U_t|\Lambda|}{\delta }\right)\right]}.$$
    Consequently, under the calibration-safe variant of ASAT, w.p. at least $1-\delta$, we have, for all $t \ge t_0$, $$\FPR(\ghat_t,\widehat{\lambda}_t) \le \alpha.$$
\end{proposition}

\begin{proof}
We wish the bound from Proposition~\ref{prop:martingale_lil_lambda} to hold simultaneously for every scoring function that ASAT deploys under the calibration-safe variant. For the $k$-th deployed scoring function $\widehat{g}^{(k)}$, trained at time $\tau_k$: conditional on $\mathcal{F}_{\tau_k}$, $\widehat{g}^{(k)}$ is a fixed element of $\mathcal{G}$, and for $u > \tau_k$, $x_u$ is independent of $\widehat{g}^{(k)}$ given $\mathcal{F}_{u-1}$ (since $\mathcal{F}_{\tau_k} \subseteq \mathcal{F}_{u-1}$). Setting $s = \tau_k + 1$, the conditions of Lemma~\ref{lemma:exp_z_u}, Lemma~\ref{lemma::martingale}, and Proposition~\ref{prop:martingale_lil_lambda} are satisfied with $g = \widehat{g}^{(k)}$ and the newly defined post-training quantities $N_t^{(o)}$, $c_t$ over $(\tau_k, t]$. The $k$-th deployed scoring function is assigned failure probability $\delta_k = \delta/(2k^2)$. Since $\sum_{k=1}^\infty \frac{\delta}{2k^2} = \delta\frac{\pi^2}{12} < \delta$, a union bound over all deployed scoring functions ensures the overall failure probability is below $\delta$. Dividing by $N_t^{(o)}$ and identifying $\log(4k^2|\Lambda|/\delta) \le 2\log(4k|\Lambda|/\delta) \le 2\log(4U_t|\Lambda|/\delta)$, we obtain the desired bound. Under delayed deployment, $\widehat{g}^{(k)}$ is deployed only once $c_t N_t^{(o)} \ge 173\log(4/\delta_k)$, so the bound holds from deployment onward.
\end{proof}

Proposition \ref{prop:fpr_control} establishes the desired safety guarantees for FPR control, ensuring that with probability at least $1-\delta$, $\FPR(\ghat_t, \widehat{\lambda}_t) \le \alpha$ for all $t \ge t_0$ in the \ASAT\ system. This comes with the trade-off, as $\psi_{t}(\delta)$ now contains the additional term $U_t$---the number of total updates---in the second logarithm term. In practice, however, $U_t$ is expected to be much smaller than $N_t^{(o)}$, as updates only occur periodically after collecting enough new OOD samples. As a result, the additional dependency on $U_t$ introduces only a minor increase in $\psi_{t}(\delta)$ compared to \FSAT. This slight increase is outweighed by the advantage of using the learned $\ghat_t$, which optimizes TPR. We emphasize that this guarantee applies to the calibration-safe variant described above. During the delayed deployment window for each new $\widehat{g}^{(k)}$, FPR control is maintained by the previous model $\widehat{g}^{(k-1)}$ with its already-valid threshold. Algorithm~\ref{alg::asat} and all experiments use the all-history estimator from Eq.~\ref{eq:appdx_fpr_hat} with a heuristic UCB; the empirical results demonstrate strong FPR control for this practical variant in Algorithm~\ref{alg::asat}.
 
\section{Additional Experiments and Details}
\label{app:exp}

\subsection{Additional Baseline} \label{appdx:binary_baseline}
For completeness, we compare \ASAT\ against a naive binary classifier baseline. This baseline uses a two-layer neural network (with two output neurons) that matches the architecture and parameter size of \ASAT, and is trained with cross-entropy loss on the ID and OOD data collected up to each time $t$. Since this classifier is not scoring-based, we do not use importance sampling. We update the classifier with the same frequency as \ASAT, using the AdamW optimizer. Initially, the classifier is randomly initialized and predicts all samples as OOD until enough data is collected for its first update.

As shown in Figure \ref{fig:binary_baseline}, while the binary classifier can distinguish between ID and OOD points, it incurs significant FPR violations (exceeding 5\%) because it does not explicitly optimize to maintain a low FPR while maximizing TPR. In contrast, \ASAT\ adapts both thresholds and scoring functions to maximize TPR while ensuring strict FPR control. Hence, these results confirm that our \ASAT\ framework successfully meets its dual objectives, offering a more reliable solution (via FPR control) for OOD detection.

\begin{figure*}[t]

    \centering
  \includegraphics[width=0.98\textwidth]{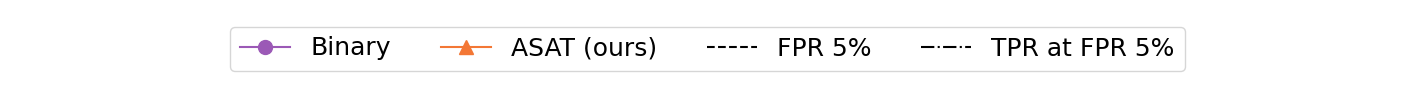}
  
  \mbox{
  \hspace{-10pt}
    \subfigure[Stationary, KNN, CIFAR-100 as OOD. ]{\includegraphics[scale=0.22]{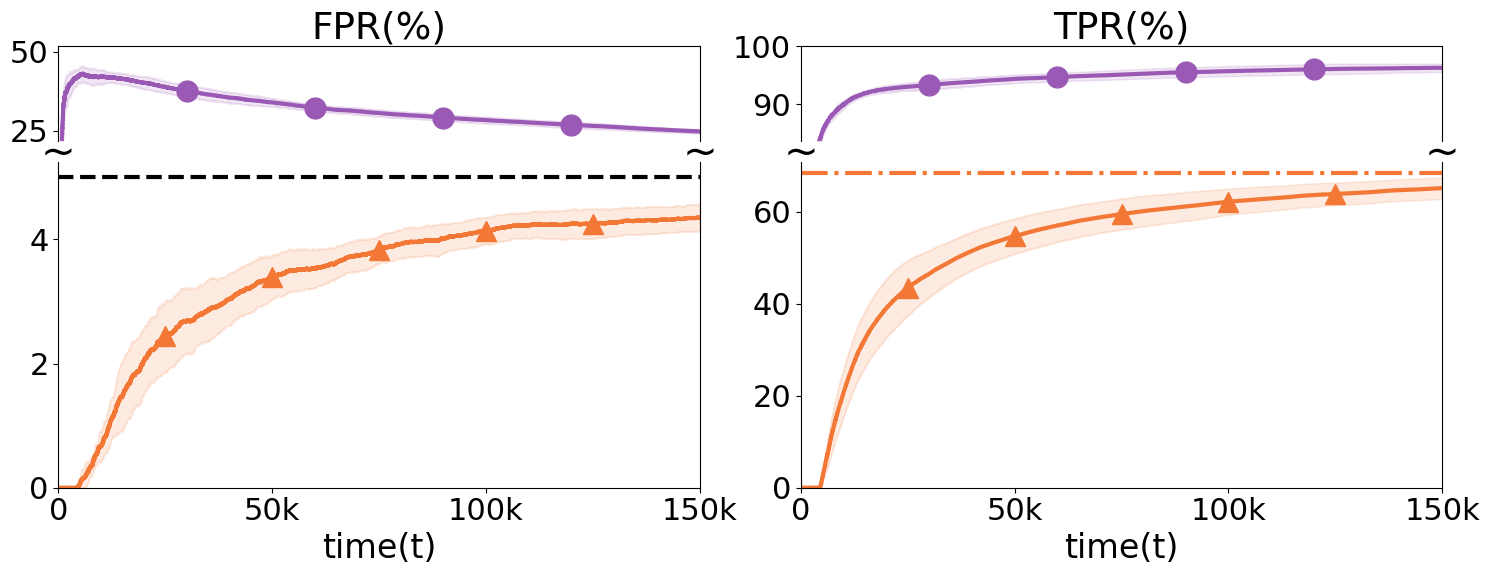}}
    
    \subfigure[Stationary, KNN, Near-OOD as OOD. ]{\includegraphics[scale=0.22]{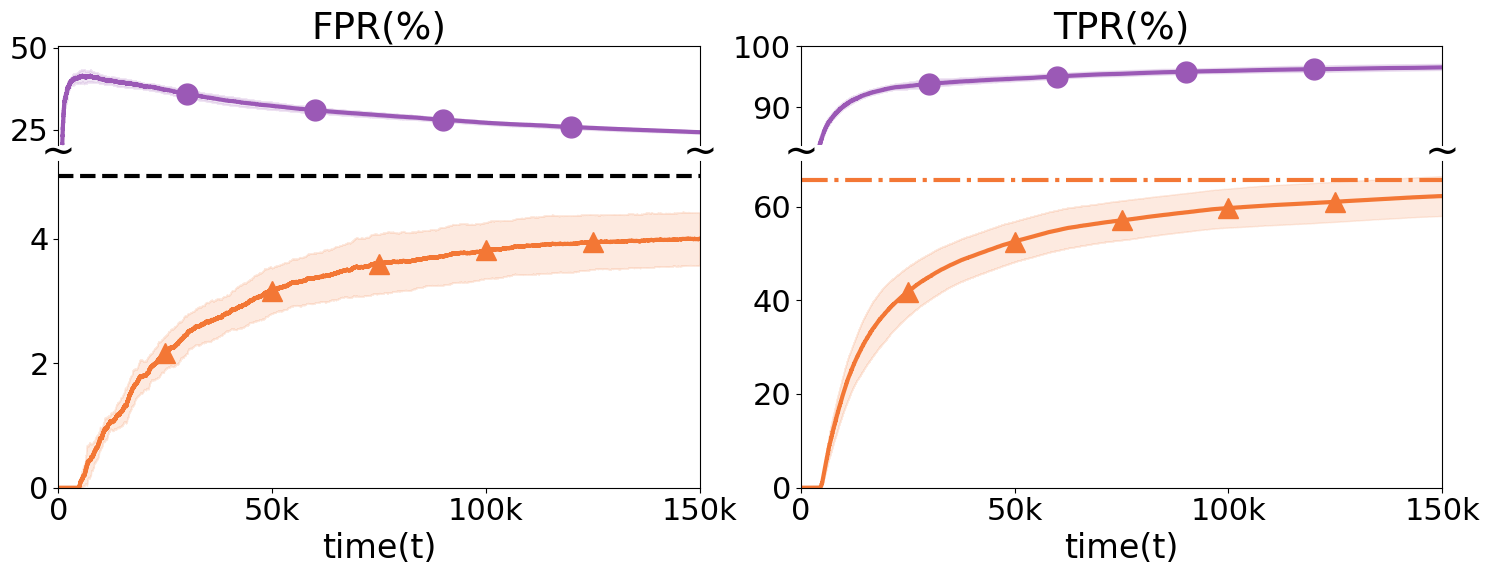}}
  }
  
  \vspace{-5pt}
  \captionsetup{justification=raggedright}
  \caption{ \small{
    Comparison between \ASAT\ and a naive binary classifier baseline. \ASAT\ is initialized with KNN. CIFAR-10 as ID and CIFAR-100 and Near-OOD as OOD. Each method is repeated five times and is shown with mean and std. The binary classifier is trained using cross-entropy loss on the ID and OOD data collected up to time $t$, without importance sampling (since the classification is not based on score thresholding).
    }}
  \label{fig:binary_baseline} 
  
\end{figure*}

\subsection{Searching for Constants in Heuristic LIL UCB} \label{appdx:c_heuristic}
We derived the following theoretical LIL-based upper confidence bound (UCB) to ensure FPR control for \ASAT\ with time-varying scoring functions $\ghat_{t}$ for $t \ge 1$:
\begin{equation}
\label{eq:appndx_ucb}
\psi_{t}(\delta) := \sqrt{\frac{3c_t}{N_{t}^{(o)}}\left[2\log\log\left(\frac{3c_tN_{t}^{(o)}}{2}\right) + 2\log\left(\frac{4U_t|\Lambda|}{\delta}\right)\right]},   
\end{equation}
where $U_t$ is the total number of scoring functions deployed in the system by time $t$. The inclusion of $U_t$ reflects the union bound required for FPR control given the time-varying scoring functions (see Appendix \ref{app:proofs}). A similar UCB appears in the FSAT framework \citep{vishwakarma2024taming}, but without the $U_t$ term and with different definitions of $c_t$ and $N_t^{(o)}$. However, due to pessimistic constants in \eqref{eq:ucb} (e.g., $|\Lambda|$ may be large for small $\eta$), we define a heuristic UCB:
\begin{align*}
    \label{eq:ucb_h} 
    \psi^{H}_{t}(\delta) := c_1\sqrt{\frac{c_t}{N_{t}^{(o)}}\left[\log\log\left(c_2 c_t N_t^{(o)}\right) + \log\left(\frac{c_3}{\delta}\right)\right]},
\end{align*}  
with constants $c_1, c_2, c_3$. In FSAT, $c_1 = 0.5$, $c_2 = 0.75$, and $c_3 = 1.0$ were chosen via simulation of a $p$-biased coin to keep failure probability under $5\%$. In the \ASAT\ main experiments, where we fix the optimization frequencies, we retain $c_2$ and $c_3$ but increase $c_1$ to 0.65 to account for the effect of $U_t$. The choice of $c_1$ depends on the number of scoring function updates, which requires adjustment if the optimization frequency changes.

\subsection{Choice of $\mathcal{G}$} \label{appdx::g_class}
Our \ASAT\ framework is flexible in the choices of $\mathcal{G}$. In the main experiments, we use a class $\cG_1$ of two-layer neural networks defined on top of the ResNet18 model \citep{he2015deepresiduallearningimage}. Let $z \ldef r(x)$ be the output feature from the penultimate layer of the Resnet18, denoted as $r$, for an image $x$. A network $g \in \mathcal{G}_1$ takes $z$ as input and outputs a score $s \in \mathcal{S}$:
$$\mathcal{G}_1 := \left\{ g : g(x) \ldef \bm{W_2} \, \text{ReLU} \left( \bm{W_1} r(x) \right) \right\},$$
where $\bm{W_1}, \bm{W_2}$ are learnable weight matrices of the two layers. The optimization procedures are detailed in the next section. This class of scoring functions leverages pre-extracted features for ID/OOD points and adapts the scoring functions \textit{from scratch} to the specific OOD instances encountered by the system. This design is motivated by the observation that the backbone already performs heavy representation learning, and the 2-layer neural network as the scoring function only needs to learn a separation between ID and OOD samples, which makes the training computationally lightweight and scalable. 



\subsection{Searching for Hyper-parameters} \label{appdx::hyper-param}
\begin{table*}[h]
\centering
\small
\begin{tabular}{l l}
\toprule
Parameter & Values \\
\midrule
\midrule
optimizer & AdamW, SGD \\
learning rate for $g$ & 0.00001, 0.0001, 0.001, 0.01 \\
learning rate for $\lambda$ & 0.0001, 0.001, 0.01, 0.1 \\
weight decay rate for $g$ & 0.0001, 0.001, 0.01, 0.1, 0.0 \\
weight decay rate for $\lambda$ & 0.0001, 0.001, 0.01, 0.1, 0.0 \\
$\beta$ in (\ref{eq:opt2})  & 0.5, 0.75, 1.0, 1.25, 1.5, 1.75, 2.0 \\
$\kappa$ for sigmoid smoothness & 1, 10, 50, 100 \\
\toprule
\end{tabular}
\captionsetup{justification=raggedright}
\caption{Lists of hyper-parameters we swept over to find the best combination for training $g \in \cG_1$ and $\lambda \in \Lambda$ in optimization (\ref{eq:opt2}).}
\label{table:hyper-param}
\end{table*}

We determine the hyperparameters and their values for the optimization problem in (\ref{eq:opt2}) over $\mathcal{G}_1$ via grid search. First, we find the best-performing hyperparameter combination for training $g \in \mathcal{G}_1$ in \textit{offline} settings using i.i.d OOD and ID datasets. Then, we extend these choices to the \ASAT\ settings, where only \textit{dependent} OOD datasets are available. We verify that the same hyperparameters perform well in \textit{online} settings, made possible by importance sampling techniques that preserve accurate FPR estimation in \eqref{eq:fpr_hat}.

Table \ref{table:hyper-param} lists the hyperparameters and their values we explored. Specifically, we run experiments on every different combination of these lists of hyperparameters. We use the AdamW optimizer \citep{kingma2017adammethodstochasticoptimization}, with learning rates of 0.0001 for $g$ and 0.01 for $\lambda$, and a weight decay rate of 0.001 for L1 regularization for both $g$ and $\lambda$. We also set $\beta = 1.5$ and $\kappa = 50$.

\subsection{Sensitivity Analysis on Constants}\label{appdx::constant_sensitivity}
We study the sensitivity of ASAT to the three problem-level constants $\alpha$, $p$, and $\gamma$ on synthetic data. For this analysis, we use ID and OOD scores drawn from $\mathcal{N}(\mu=5.5, \sigma=4)$ and $\mathcal{N}(\mu=-6, \sigma=4)$ respectively, with a linear scoring function $g(x)=wx+b$. We run $T=100$k time steps with 5 seeds, varying one parameter at a time while fixing the others at baseline values we used in the main experiments ($\alpha=0.05$, $p=0.2$, $\gamma=0.2$).

We sweep $\alpha \in \{0.01, 0.05, 0.1, 0.2\}$, $p \in \{0.05, 0.1, 0.2, 0.4\}$, and $\gamma \in \{0.05, 0.1, 0.2, 0.4\}$. Recall that $\alpha$ is the FPR tolerance set by the user, where smaller $\alpha$ enforces stricter safety at the cost of a more conservative threshold and lower best achievable TPR. The constant $p$ is the importance sampling probability, which controls the annotation budget, where larger $p$ means more human labels per time step. $\gamma$ is the OOD mixture rate in the data stream, which governs how frequently OOD samples arrive; ASAT does not use or require knowledge of $\gamma$, but it affects the rate at which OOD feedback accumulates and thus the speed of adaptation.

\begin{table}[h]
\centering
\footnotesize

\textbf{(a)} $\alpha$ sweep ($p=0.2$, $\gamma=0.2$). Last column is the best theoretically achievable TPR at each FPR-$\alpha$.\\[0.3em]
\begin{tabular}{ccccc}
\toprule
$\alpha$ & Feasibility $t^*$ & Pop.\ FPR & Pop.\ TPR & Oracle TPR @ $\alpha$ \\
\midrule
0.01 & 66{,}004 $\pm$ 366 & 0.20 $\pm$ 0.05\% & 49.64 $\pm$ 2.00\% & 71.1\% \\
0.05 & 2{,}534 $\pm$ 102 & 4.23 $\pm$ 0.21\% & 87.41 $\pm$ 0.63\% & 89.1\% \\
0.10 & 672 $\pm$ 27 & 9.03 $\pm$ 0.40\% & 93.76 $\pm$ 0.25\% & 93.9\% \\
0.20 & 209 $\pm$ 24 & 19.00 $\pm$ 0.44\% & 97.71 $\pm$ 0.06\% & 97.7\% \\
\bottomrule
\end{tabular}

\vspace{0.8em}

\textbf{(b)} $p$ sweep ($\alpha=0.05$, $\gamma=0.2$). Last column is the number of total human annotations.\\[0.3em]
\begin{tabular}{ccccc}
\toprule
$p$ & Feasibility $t^*$ & Pop.\ FPR & Pop.\ TPR & Total Human Labels \\
\midrule
0.05 & 2{,}534 $\pm$ 102 & 3.96 $\pm$ 0.22\% & 86.75 $\pm$ 0.66\% & 36{,}235 $\pm$ 1{,}309 \\
0.10 & 2{,}534 $\pm$ 102 & 4.24 $\pm$ 0.27\% & 87.43 $\pm$ 0.60\% & 39{,}257 $\pm$ 308 \\
0.20 & 2{,}534 $\pm$ 102 & 4.23 $\pm$ 0.21\% & 87.41 $\pm$ 0.63\% & 46{,}229 $\pm$ 353 \\
0.40 & 2{,}534 $\pm$ 102 & 4.25 $\pm$ 0.14\% & 87.47 $\pm$ 0.46\% & 59{,}535 $\pm$ 290 \\
\bottomrule
\end{tabular}

\vspace{0.8em}

\textbf{(c)} $\gamma$ sweep ($\alpha=0.05$, $p=0.2$).\\[0.3em]
\begin{tabular}{cccc}
\toprule
$\gamma$ & Feasibility $t^*$ & Pop.\ FPR & Pop.\ TPR \\
\midrule
0.05 & 10{,}084 $\pm$ 543 & 3.66 $\pm$ 0.66\% & 85.73 $\pm$ 1.89\% \\
0.10 & 5{,}028 $\pm$ 310 & 3.88 $\pm$ 0.42\% & 86.52 $\pm$ 1.13\% \\
0.20 & 2{,}534 $\pm$ 102 & 4.23 $\pm$ 0.21\% & 87.41 $\pm$ 0.63\% \\
0.40 & 1{,}309 $\pm$ 16 & 4.41 $\pm$ 0.20\% & 87.86 $\pm$ 0.56\% \\
\bottomrule
\end{tabular}

\caption{Sensitivity analysis on synthetic Gaussian data ($T=100$k, 5 seeds). Each sweep varies one parameter while fixing the others at baseline ($\alpha=0.05$, $p=0.2$, $\gamma=0.2$). The first column, \emph{feasibility $t^{\ast}$}, is the first time step at which (\ref{prev_opt1}) becomes feasible and ASAT begins autonomous predictions; \emph{Pop. FPR and TPR} are the population-level FPR and TPR at $T=100$k computed as in Eq.~\ref{eq:eval_fpr_tpr}.}
\label{tab:sensitivity}
\end{table}

Overall, Table~\ref{tab:sensitivity} demonstrates the robustness of ASAT across different system-level constants. Importantly, FPR is controlled below $\alpha$ in every setting, confirming that the safety guarantee is stable across these parameters. As expected, $\alpha$ controls the safety-performance trade-off, where smaller $\alpha$ requires a longer cold start and yields lower TPR, while population TPR converges toward the oracle TPR at each $\alpha$. The importance sampling probability $p$ has small effect on FPR/TPR (within $\sim$1\% across an 8$\times$ range); its primary effect is on annotation cost, which scales with $p$ (36k to 60k labels). The mixture ratio $\gamma$ affects cold-start duration (10k steps at $\gamma=0.05$ vs.\ 1.3k at $\gamma=0.40$), but all values converge to approximately the same FPR/TPR.

\subsection{Effects of Different Updating Frequencies}\label{app:freq}
We define the optimization frequencies $\omega^{(ood)}_{i}$ as the number of newly human-labeled OOD points required since the last update for the $(i+1)$th update of the scoring functions. In stationary settings, we fix $\omega^{(ood)}_{i}$ to be 100 for $i \in \{1,...,20\}$, 500 for $i \in \{21,40\}$, and 1000 thereafter for consistency. This approach enables more frequent updates early on, with the frequency gradually decreasing as scoring functions improve. In nonstationary settings, we set $\omega^{(ood)}_{i}$ uniformly to avoid excessively infrequent updates after a shift. Additionally, $\omega^{(ood)}_{i}$ influences the width of LIL-based UCB. More frequent updates result in larger $U_t$, leading to a wider UCB. Thus, $\omega^{(ood)}_{i}$ presents a trade-off between faster OOD adaptation and larger UCB. More frequent updates of the scoring functions could also impact time efficiency, as each update requires additional computational resources.

To illustrate this, we run additional experiments in stationary settings with four different choices for $\omega^{(ood)}_{i}$: 100, 500, 1000, and 2500 uniformly for all $i \ge 1$. We fix $c_1 = 0.50$ for $\psi_{t}^{H}(\delta)$ across all frequencies, knowing this value is not large enough to provide strict FPR guarantees. This ensures a fair comparison across different frequencies. The results, shown in Figure \ref{fig:update_freq_d4}, confirm our hypotheses: more frequent updates lead to faster adaptations but also result in a larger UCB width. More frequent updates impact time efficiency, with rough averages from five runs showing increased time costs. However, they also lead to better TPR convergence, which requires further exploration. Further research on update frequencies is left for future work.

\begin{figure*}[t]

    \centering
    \includegraphics[width=0.98\textwidth]{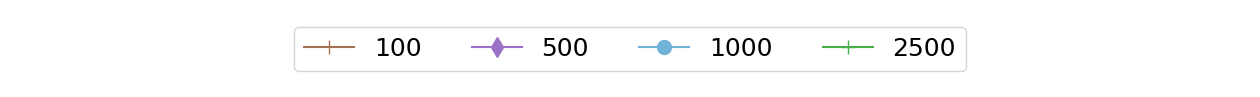}
  
  \mbox{
  
  \hspace{-10pt}
    \subfigure{\includegraphics[scale=0.40]{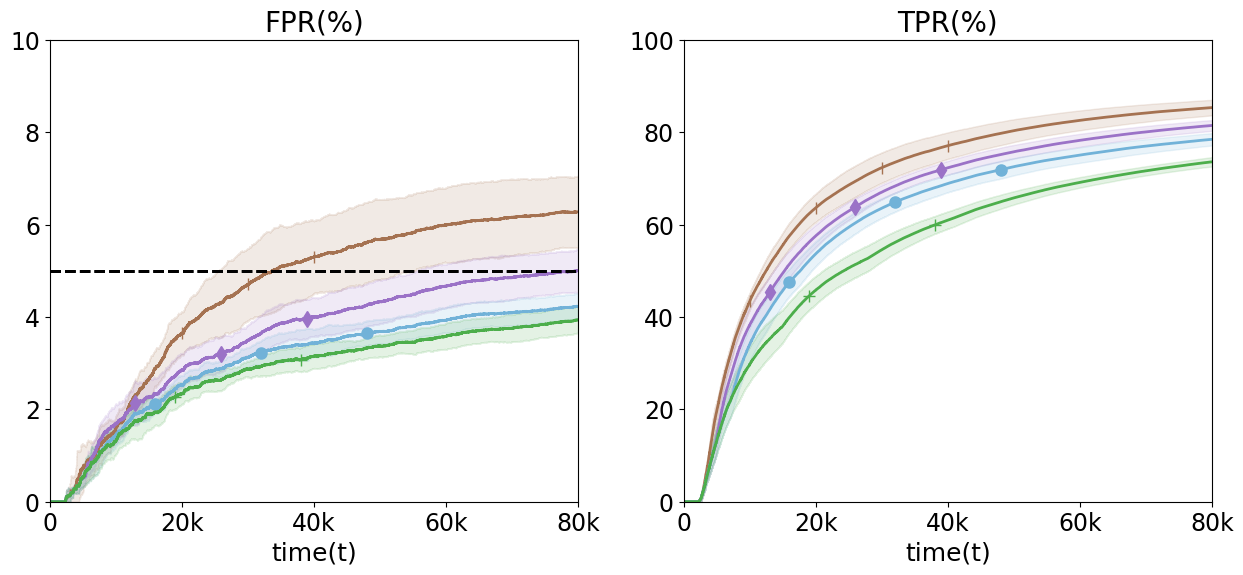}}
  }
  
  \vspace{-5pt}
  \captionsetup{justification=raggedright}
  \caption{ \small{
    Results on the stationary setting with different update frequencies, with each method repeated 5 times (mean and std shown). CIFAR-10 as ID and Places365 as OOD. Optimization frequencies are set uniformly for 100, 500, 1000, and 2500 new OOD samples. We fix $c_1 = 0.5$ in the heuristic UCB $\psi_{t}^{H}(\delta)$.
    }}
  \label{fig:update_freq_d4} 
  
\end{figure*}

\subsection{Additional Experiments with Different Scoring Functions} \label{appdx::add_exp1}
In the main experiments, we present results for EBO and KNN. Additionally, we run additional experiments using the following alternative scoring functions, with the same settings and constants as in the main experiments. Note that while these scoring functions remain fixed in both \FSFT\ and \FSAT, the scoring function is immediately updated to a new scoring function from $\cG_1$ in \ASAT. We use these scoring functions adapted by \citet{yang2022openood}.

\begin{enumerate} 
    \item \textbf{Energy Score (EBO).} EBO \citep{liu2020energy} uses an energy function derived from a discriminative model to distinguish OOD inputs from ID data. Energy scores are shown to align more with probability density and to be less prone to overconfidence than softmax scores. 
    \item \textbf{$K$-Nearest Neighbors (KNN).} KNN \citep{sun2022out} uses non-parametric nearest-neighbors distance without strong assumptions about the feature space. In particular, KNN measures the distance between an input and its $k$ closest ID neighbors. We use $k = 50$ as in Open-OOD benchmarks. 
    \item \textbf{Mahalanobis Distance (MDS).} MDS \citep{lee2018simple} computes the Mahalanobis distance between an input and the nearest class-conditional Gaussian distribution and outputs a confidence score based on Gaussian discriminant analysis (GDA). 
    \item \textbf{Virtual Logits Matching (VIM).} VIM \citep{wang2022vim} combines a class-agnostic score from the feature space with ID-dependent logits. It uses the residual of the feature against the principal space (we use dimension 10) and matches the logits with a constant scaling. 
    \item \textbf{OOD in Neural Networks (ODIN).} ODIN \citep{liang2017enhancing} uses softmax scores from DNNs, scales them with temperature, and applies gradient-based input perturbations to compute the score. We use a temperature of 1000 and a noise of 0.0014. 
    \item \textbf{Activation Shaping (ASH).} ASH \citep{djurisic2023ASH} applies activation shaping, removing a large portion (e.g., 90\%) of a sample's activation at a late layer, simplifying the rest. The method requires no extra training or significant model modifications. 
    \item \textbf{Simplified Hopfield Energy (SHE).} SHE \citep{zhang2023outofdistribution} computes a score based on Hopfield energy in a store-and-compare paradigm. It transforms penultimate layer features into stored patterns representing ID data, which serve as anchors for scoring new inputs. \end{enumerate}

Results for stationary and nonstationary OOD settings are shown in Figures \ref{fig:stationary_d5} and \ref{fig:distr_shft_d5}, respectively.
\begin{figure*}[t]
  
  \centering
  \includegraphics[width=0.90\textwidth]{images/main/stationary_legend.png}
  
  \mbox{
  \hspace{-10pt}
    \subfigure[MDS, CIFAR-10 as ID and CIFAR-100 as OOD. ]{\includegraphics[scale=0.22]{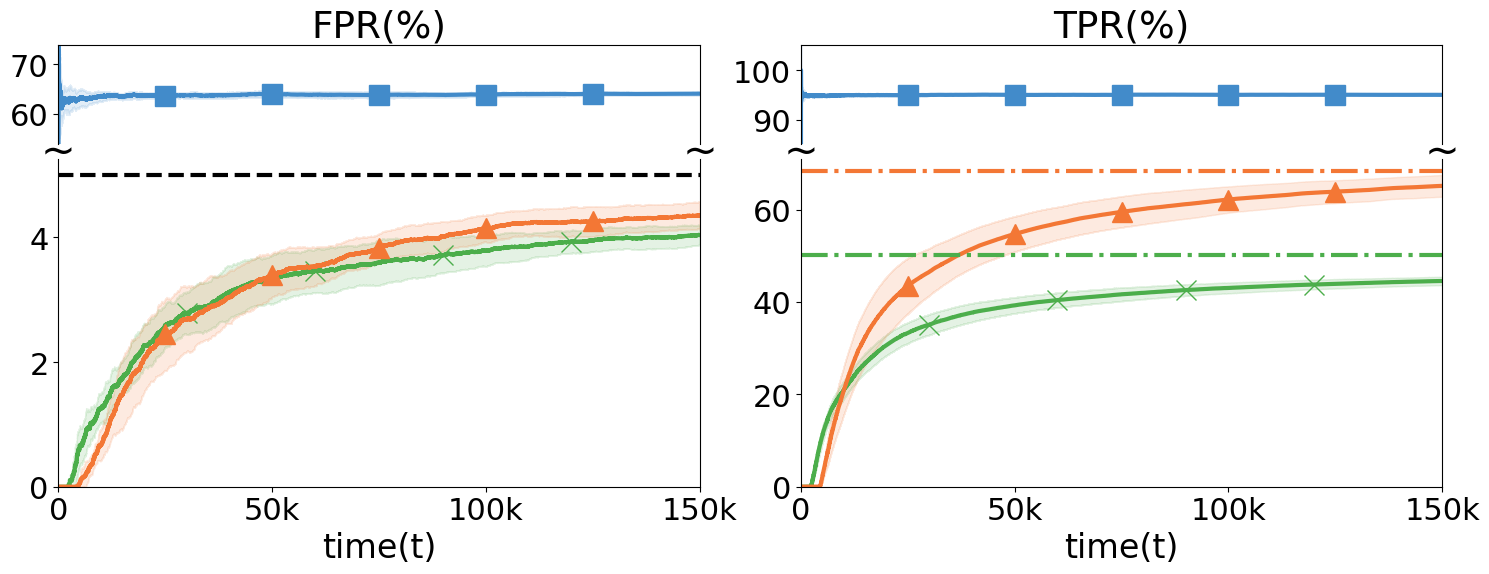}}
    
    \subfigure[MDS, CIFAR-10 as ID and Near-OOD as OOD. ]{\includegraphics[scale=0.22]{images/d5/cifar10_cifar100_MDS.png}}
  }

  \mbox{
  \hspace{-10pt}
    \subfigure[VIM, CIFAR-10 as ID and CIFAR-100 as OOD. ]{\includegraphics[scale=0.22]{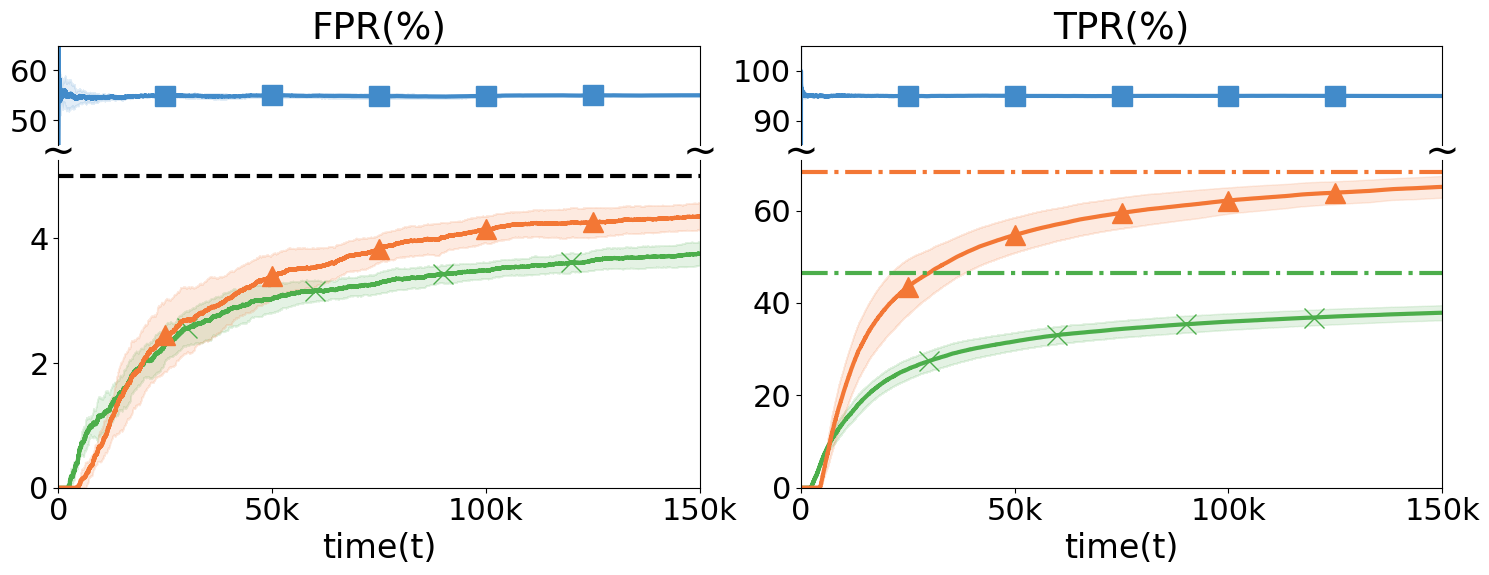}}
    
    \subfigure[VIM, CIFAR-10 as ID and Near-OOD as OOD. ]{\includegraphics[scale=0.22]{images/d5/cifar10_cifar100_VIM.png}}
  }

  \mbox{
  \hspace{-10pt}
    \subfigure[ODIN, CIFAR-10 as ID and CIFAR-100 as OOD. ]{\includegraphics[scale=0.22]{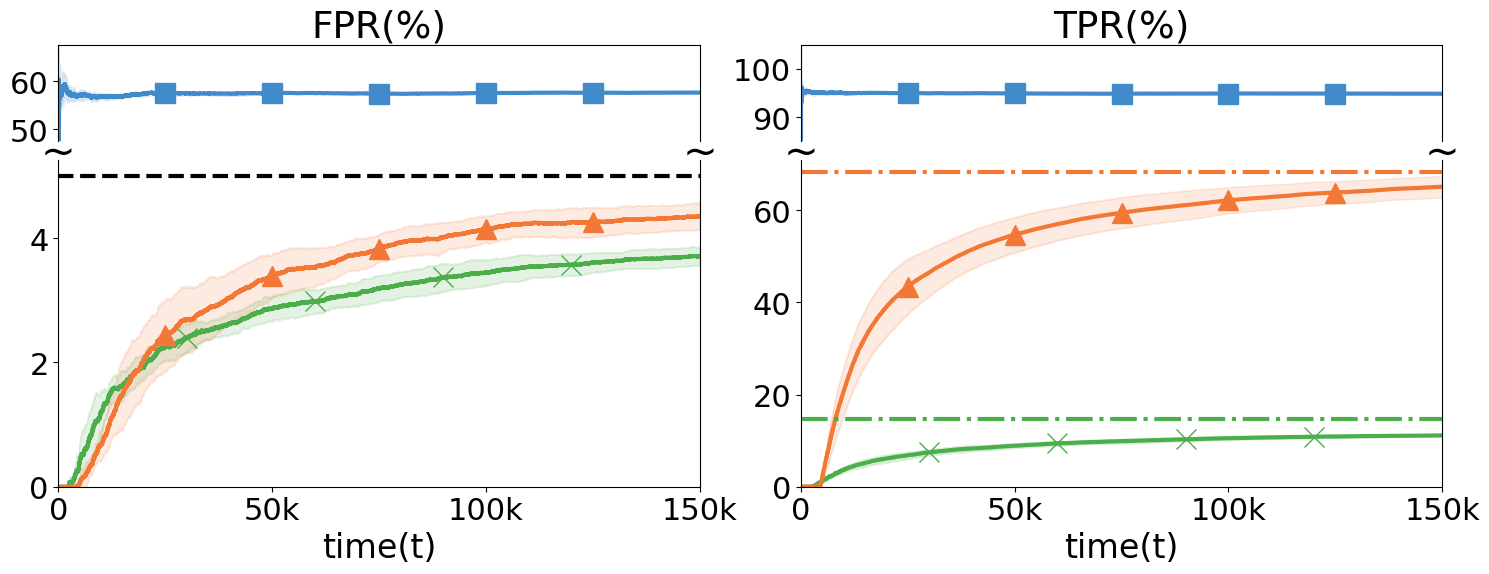}}
    
    \subfigure[ODIN, CIFAR-10 as ID and Near-OOD as OOD. ]{\includegraphics[scale=0.22]{images/d5/cifar10_cifar100_ODIN.png}}
  }

  \mbox{
  \hspace{-10pt}
    \subfigure[ASH, CIFAR-10 as ID and CIFAR-100 as OOD. ]{\includegraphics[scale=0.22]{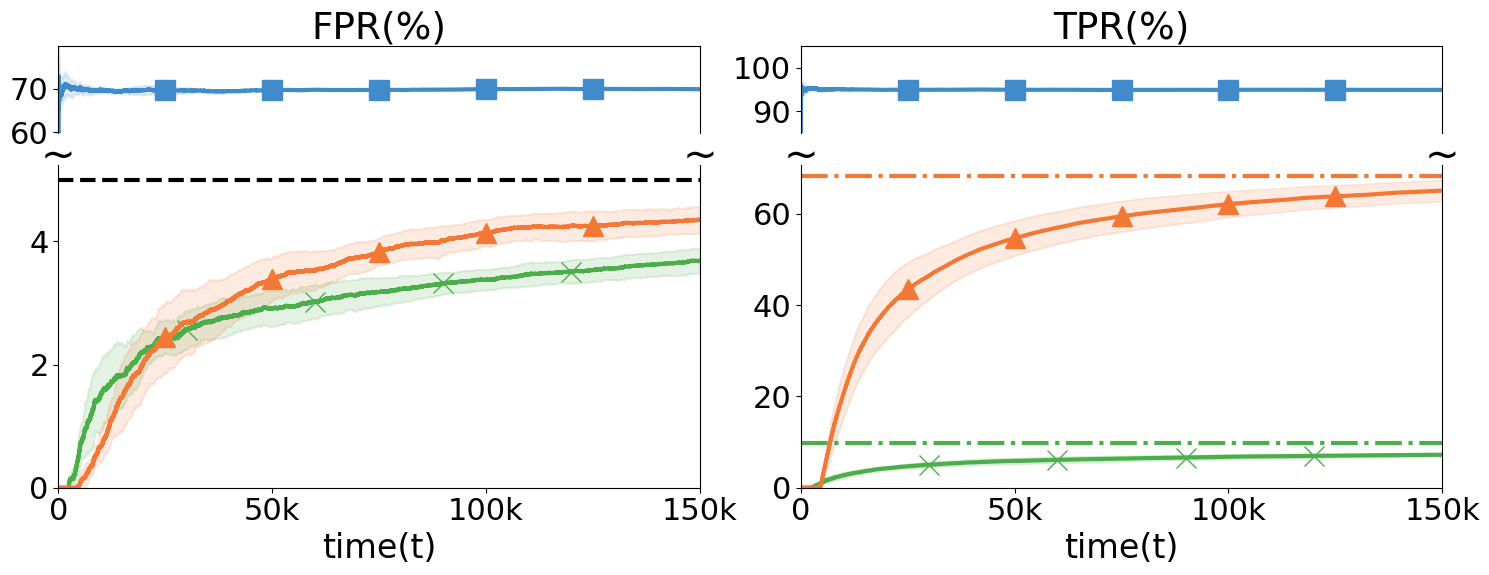}}
    
    \subfigure[ASH, CIFAR-10 as ID and Near-OOD as OOD. ]{\includegraphics[scale=0.22]{images/d5/cifar10_cifar100_ASH.png}}
  }

  \mbox{
  \hspace{-10pt}
    \subfigure[SHE, CIFAR-10 as ID and CIFAR-100 as OOD. ]{\includegraphics[scale=0.22]{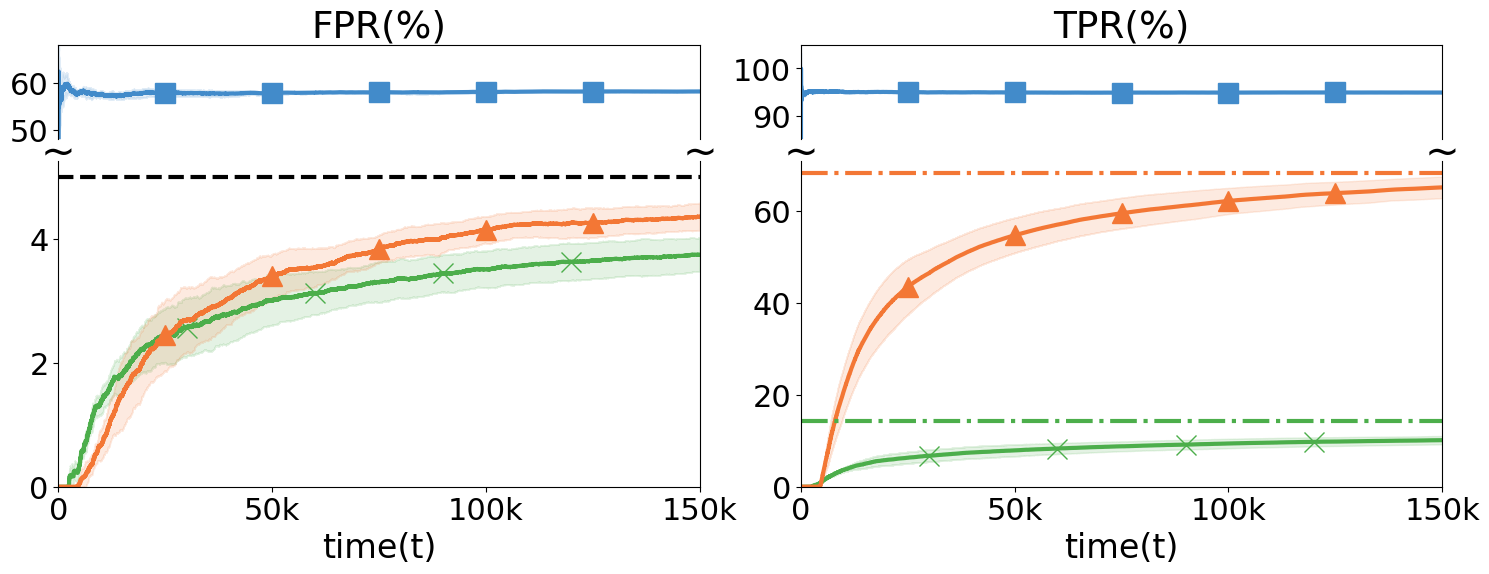}}
    
    \subfigure[SHE, CIFAR-10 as ID and Near-OOD as OOD. ]{\includegraphics[scale=0.22]{images/d5/cifar10_cifar100_SHE.png}}
  }
  
  \vspace{-5pt}
  \captionsetup{justification=raggedright}
  \caption{ \small{
    Results for stationary settings using CIFAR-10 as ID, with each method repeated 5 times (mean and std shown). Run with five additional scoring functions, MDS, VIM, ODIN, ASH, and SHE.
    }}
  \label{fig:stationary_d5} 
  
\end{figure*}

\begin{figure*}[t]
  
  \centering
  \includegraphics[width=0.90\textwidth]{images/main/stationary_legend.png}
  
  \mbox{
  \hspace{-10pt}
    \subfigure[MDS, CIFAR-10 as ID. ]{\includegraphics[scale=0.22]{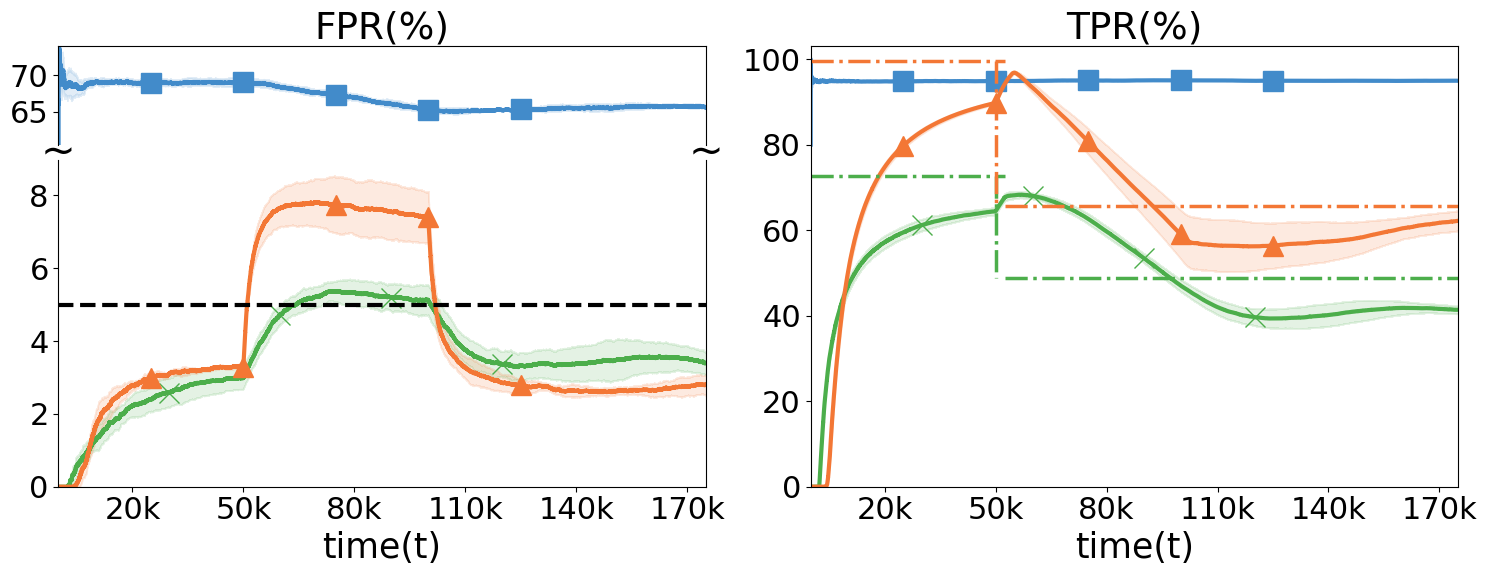}}
    
    \subfigure[VIM, CIFAR-10 as ID. ]{\includegraphics[scale=0.22]{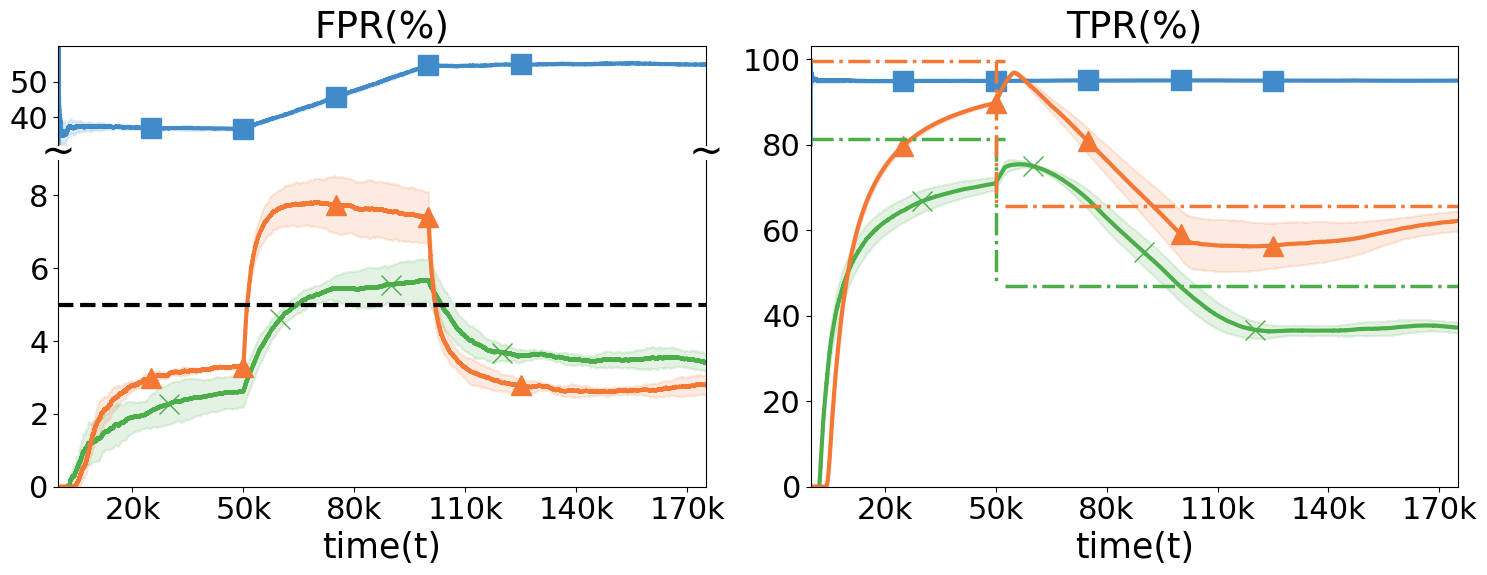}}
  }

  \mbox{
  \hspace{-10pt}
    \subfigure[ODIN, CIFAR-10 as ID. ]{\includegraphics[scale=0.22]{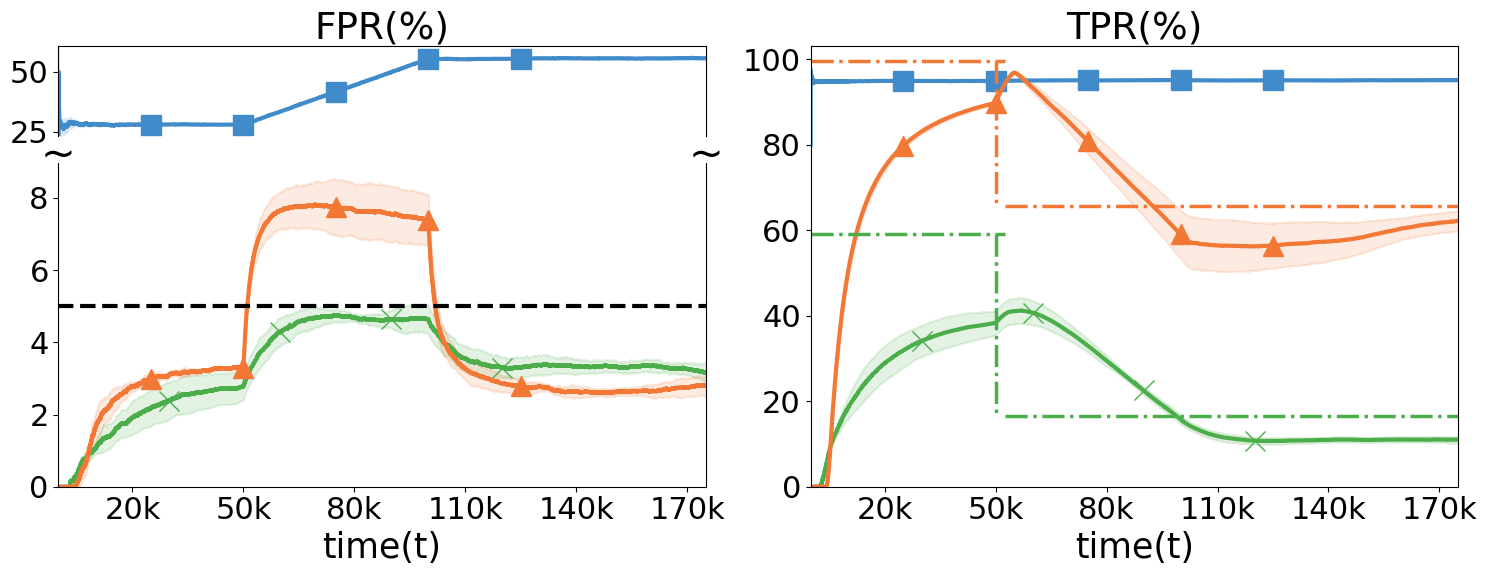}}
    
    \subfigure[SHE, CIFAR-10 as ID. ]{\includegraphics[scale=0.22]{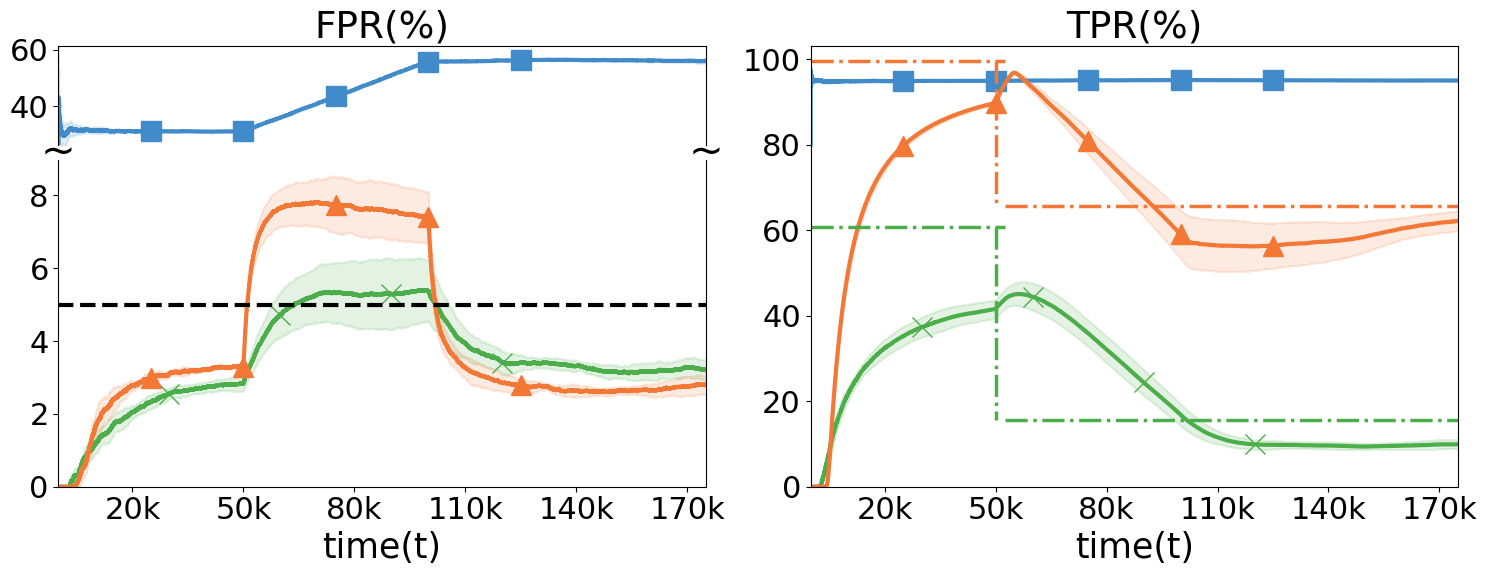}}
  }
  
  \vspace{-5pt}
  \captionsetup{justification=raggedright}
  \caption{ \small{
    Results for nonstationary OOD settings using CIFAR-10 as ID, with each method repeated 5 times (mean and std shown). OOD distribution shifts from Far-OOD to Near-OOD. Run with four additional scoring functions, MDS, VIM, ODIN, and SHE.
    }}
  \label{fig:distr_shft_d5} 
  
\end{figure*}


\subsection{Additional Experiments on Different ID Datasets} \label{appdx::add_exp2}
In the main experiments, we use CIFAR-10 as the ID dataset across all experiments. Additionally, we present results for CIFAR-100 \citep{krizhevsky2009learning} as the ID dataset. For stationary settings, we consider two OOD datasets: (i) CIFAR-10 and (ii) Places365. In the nonstationary OOD settings, we evaluate the scenario where a shift occurs at $t = 50$k, transitioning from a Far-OOD mixture (MNIST, SVHN, and Texture) to a Near-OOD mixture (CIFAR-10, Tiny-ImageNet, and Places365). Results for stationary and distribution-shift settings are shown in Figures \ref{fig:stationary_d6} and \ref{fig:distr_shft_d6}, respectively.
\begin{figure*}[t]
  
  \centering
  \includegraphics[width=0.90\textwidth]{images/main/stationary_legend.png}
  
  \mbox{
  \hspace{-10pt}
    \subfigure[EBO, CIFAR-100 as ID and CIFAR-10 as OOD. ]{\includegraphics[scale=0.22]{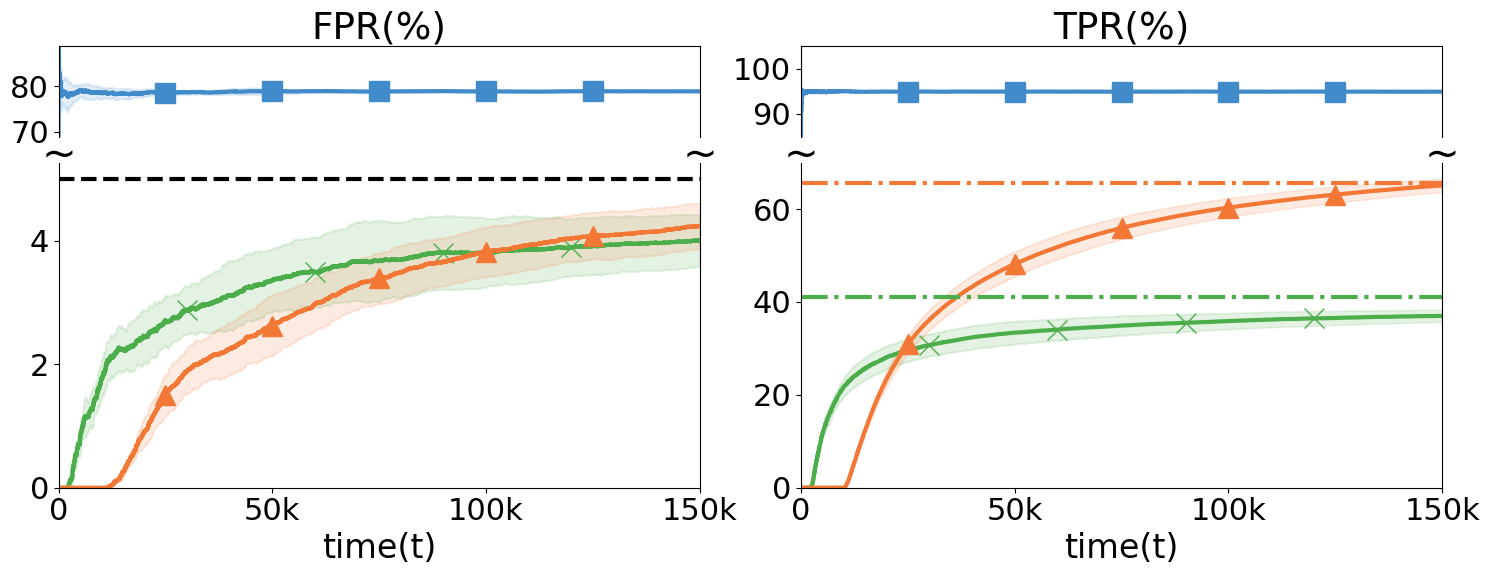}}
    
    \subfigure[EBO, CIFAR-100 as ID and Near-OOD as OOD. ]{\includegraphics[scale=0.22]{images/d6/cifar100_cifar10_EBO.png}}
  }

  \mbox{
  \hspace{-10pt}
    \subfigure[KNN, CIFAR-100 as ID and CIFAR-10 as OOD. ]{\includegraphics[scale=0.22]{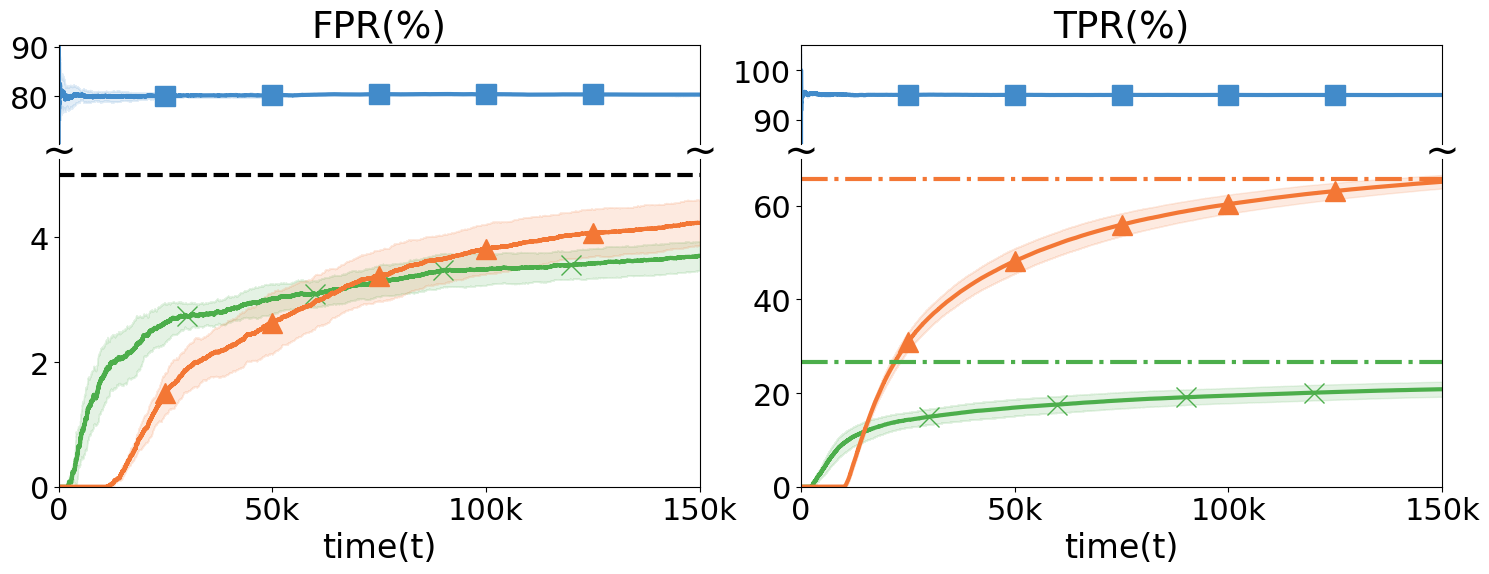}}
    
    \subfigure[KNN, CIFAR-100 as ID and Near-OOD as OOD. ]{\includegraphics[scale=0.22]{images/d6/cifar100_cifar10_KNN.png}}
  }

  \mbox{
  \hspace{-10pt}
    \subfigure[ODIN, CIFAR-100 as ID and CIFAR-10 as OOD. ]{\includegraphics[scale=0.22]{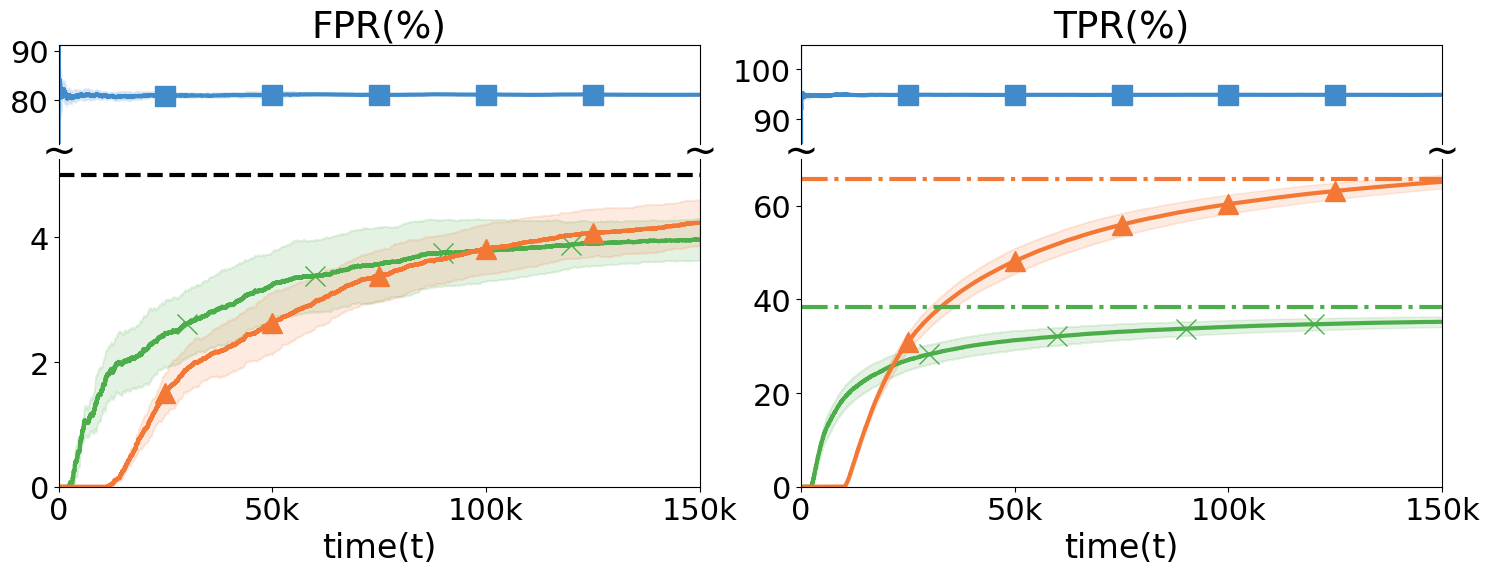}}
    
    \subfigure[ODIN, CIFAR-100 as ID and Near-OOD as OOD. ]{\includegraphics[scale=0.22]{images/d6/cifar100_cifar10_ODIN.png}}
  }

  \mbox{
  \hspace{-10pt}
    \subfigure[ASH, CIFAR-100 as ID and CIFAR-10 as OOD. ]{\includegraphics[scale=0.22]{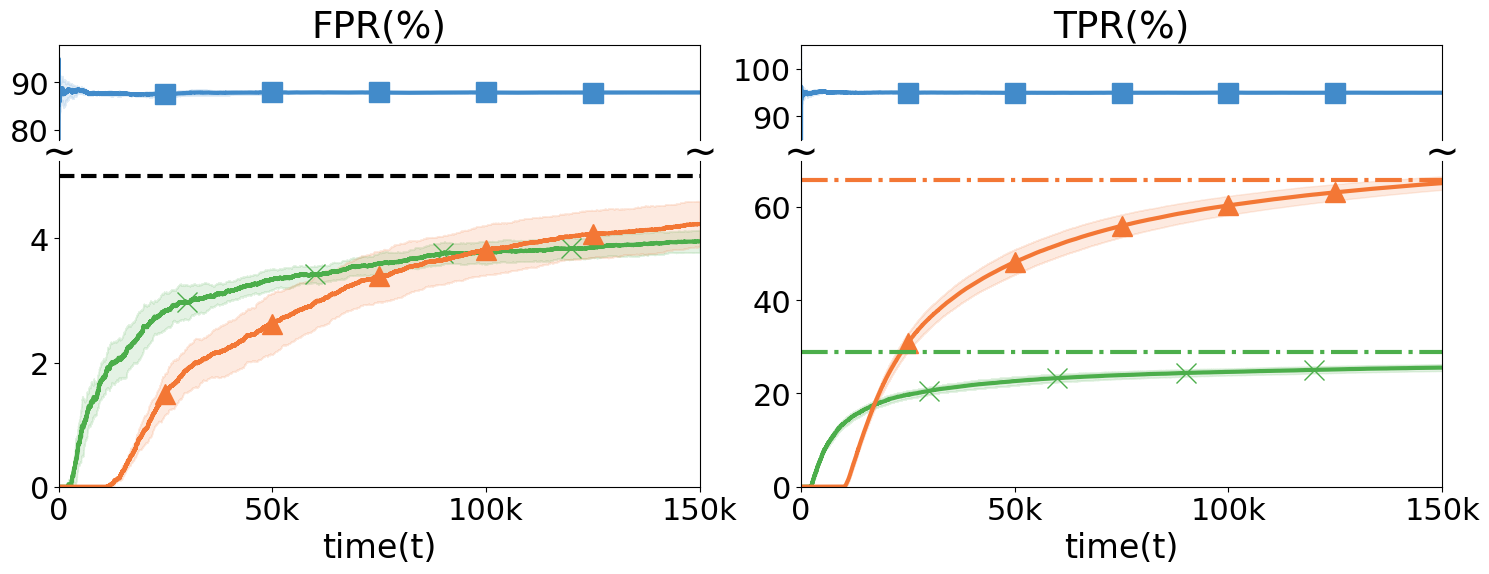}}
    
    \subfigure[ASH, CIFAR-100 as ID and Near-OOD as OOD. ]{\includegraphics[scale=0.22]{images/d6/cifar100_cifar10_VIM.png}}
  }
  
  \vspace{-5pt}
  \captionsetup{justification=raggedright}
  \caption{ \small{
    Results for stationary settings using CIFAR-100 as ID, with each method repeated 5 times (mean and std shown). Run with four scoring functions, EBO, KNN, ODIN, and ASH.}
    }
  \label{fig:stationary_d6} 
  
\end{figure*}

\begin{figure*}[t]
  
  \centering
  \includegraphics[width=0.90\textwidth]{images/main/stationary_legend.png}
  
  \mbox{
  \hspace{-10pt}
    \subfigure[EBO, CIFAR-100 as ID. ]{\includegraphics[scale=0.22]{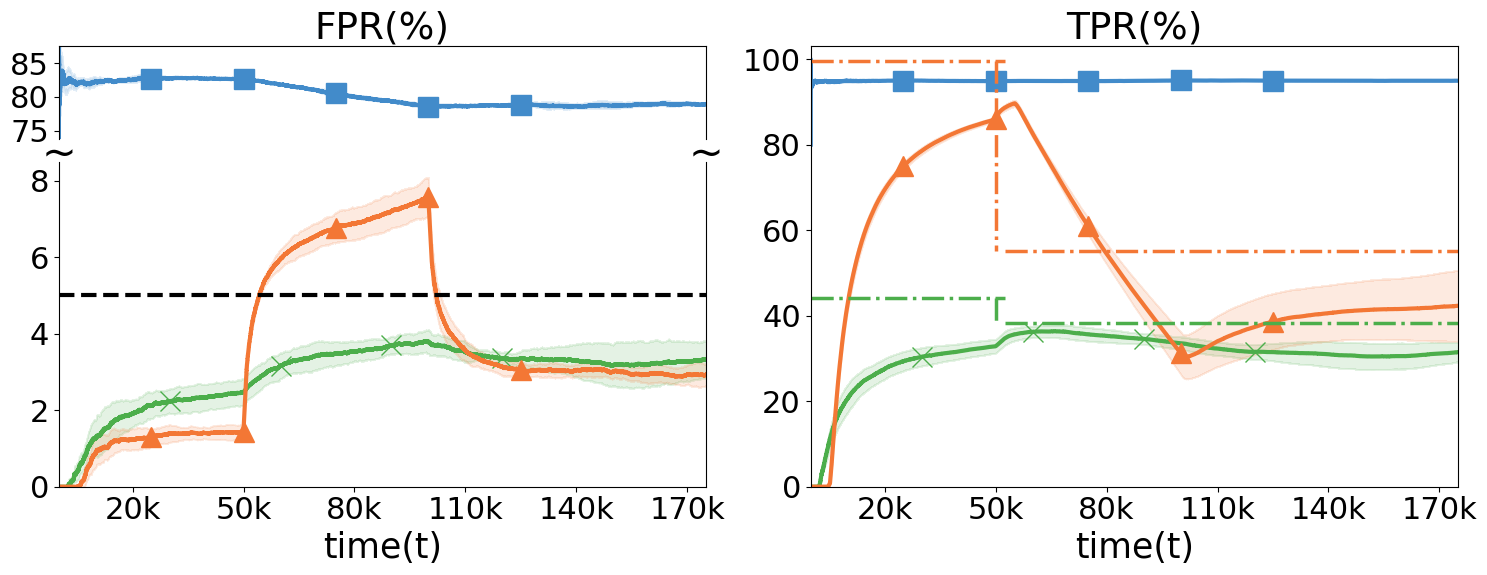}}
    
    \subfigure[KNN, CIFAR-100 as ID. ]{\includegraphics[scale=0.22]{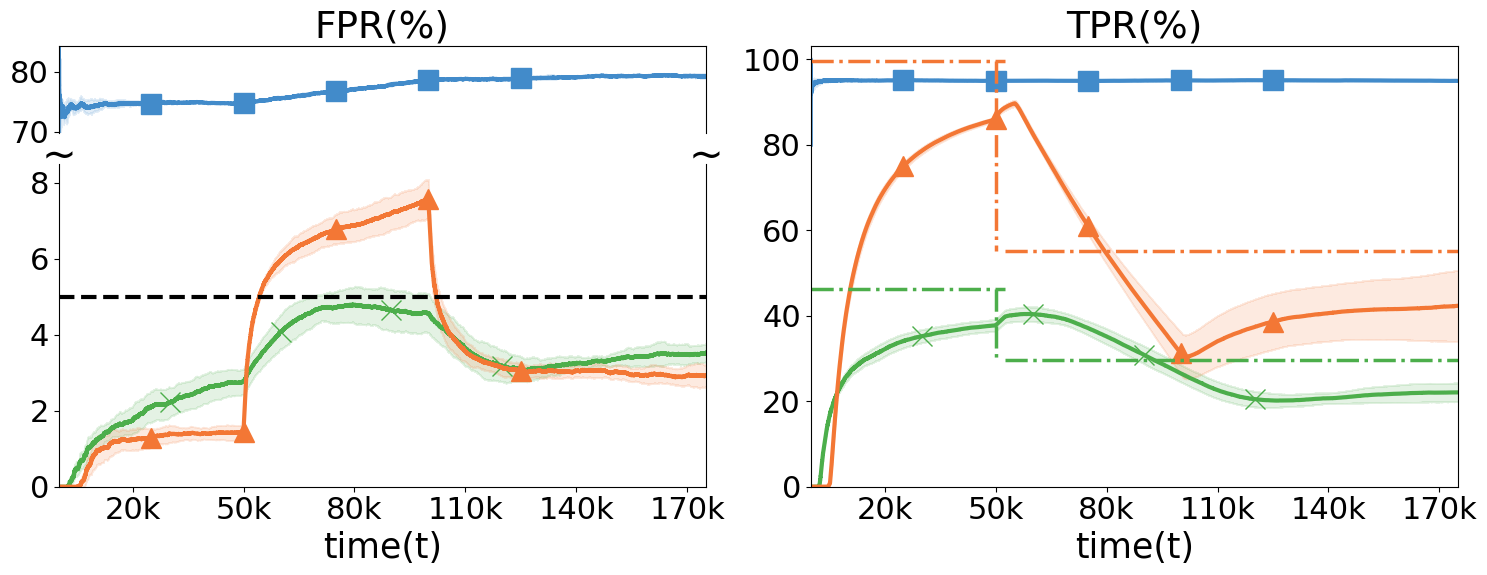}}
  }

  \mbox{
  \hspace{-10pt}
    \subfigure[ODIN, CIFAR-100 as ID. ]{\includegraphics[scale=0.22]{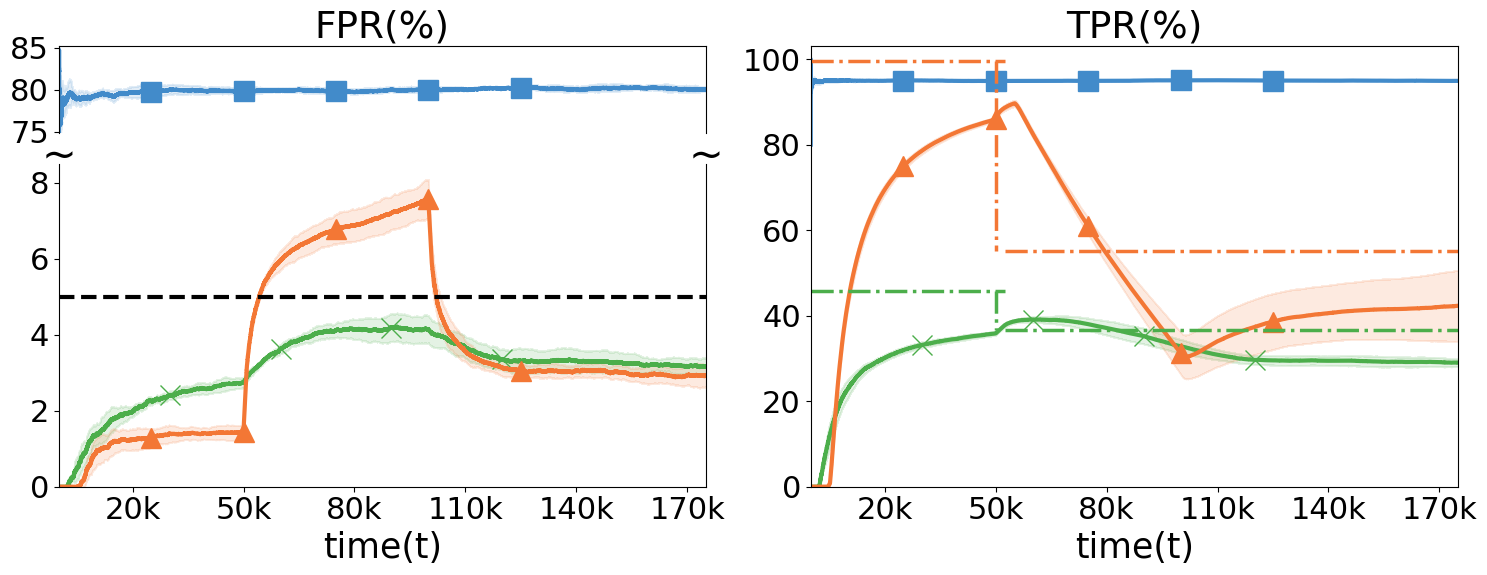}}
    
    \subfigure[VIM, CIFAR-100 as ID. ]{\includegraphics[scale=0.22]{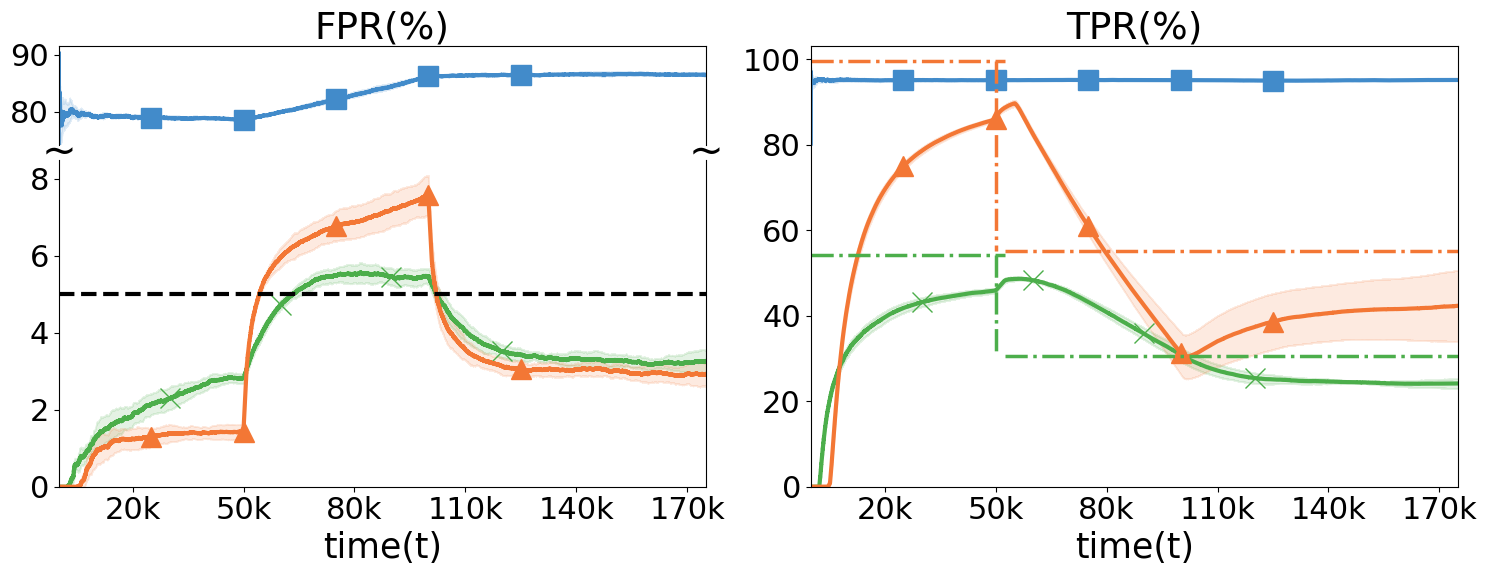}}
  }
  
  \vspace{-5pt}
  \captionsetup{justification=raggedright}
  \caption{ \small{
    Results for nonstationary settings using CIFAR-100 as ID, with each method repeated 5 times (mean and std shown). OOD distribution shifts from Far-OOD to Near-OOD. Run with four additional scoring functions, EBO, KNN, ODIN, and VIM.
    }}
  \label{fig:distr_shft_d6} 
  
\end{figure*}

\begin{figure*}[t]
  \vspace{-10pt}
  \centering
  \includegraphics[width=0.98\textwidth]{images/main/stationary_legend.png}
  
  \mbox{
  \hspace{-10pt}
    \subfigure[Stationary, KNN, OpenImage-O as OOD.  \label{fig:vit_knn}]{\includegraphics[scale=0.21]{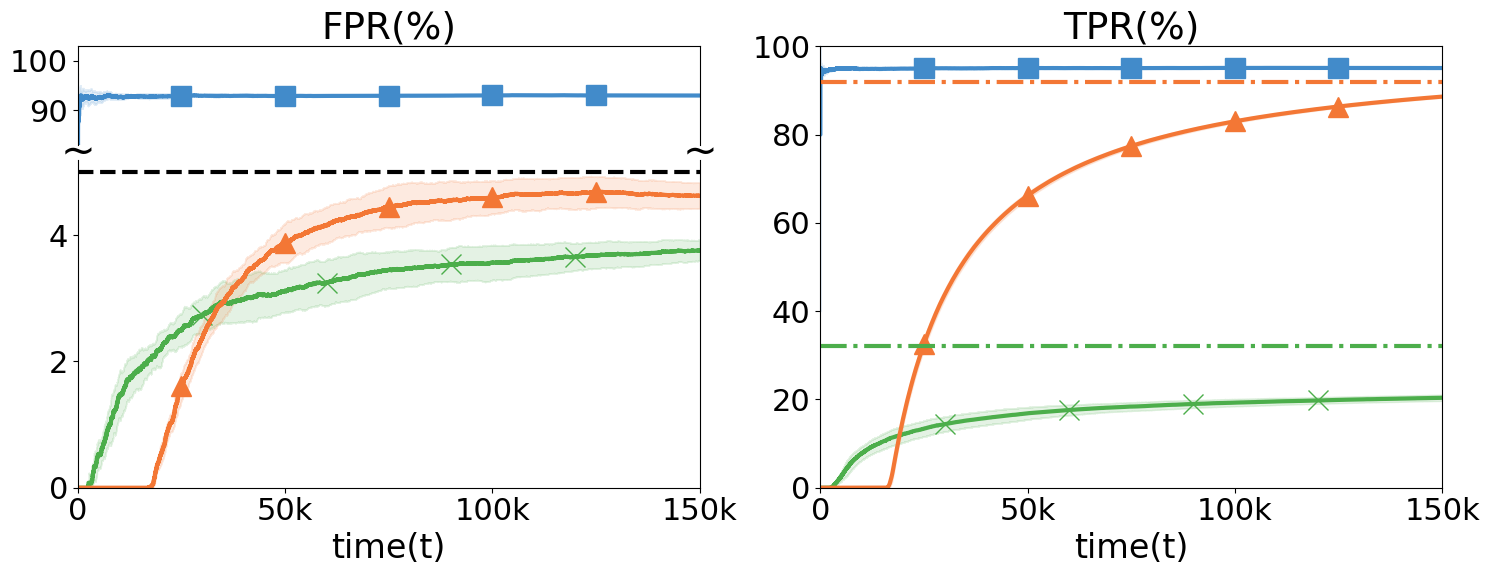}}
    
    \subfigure[Stationary, MDS, OpenImage-O as OOD. \label{fig:vit_mds}]{\includegraphics[scale=0.21]{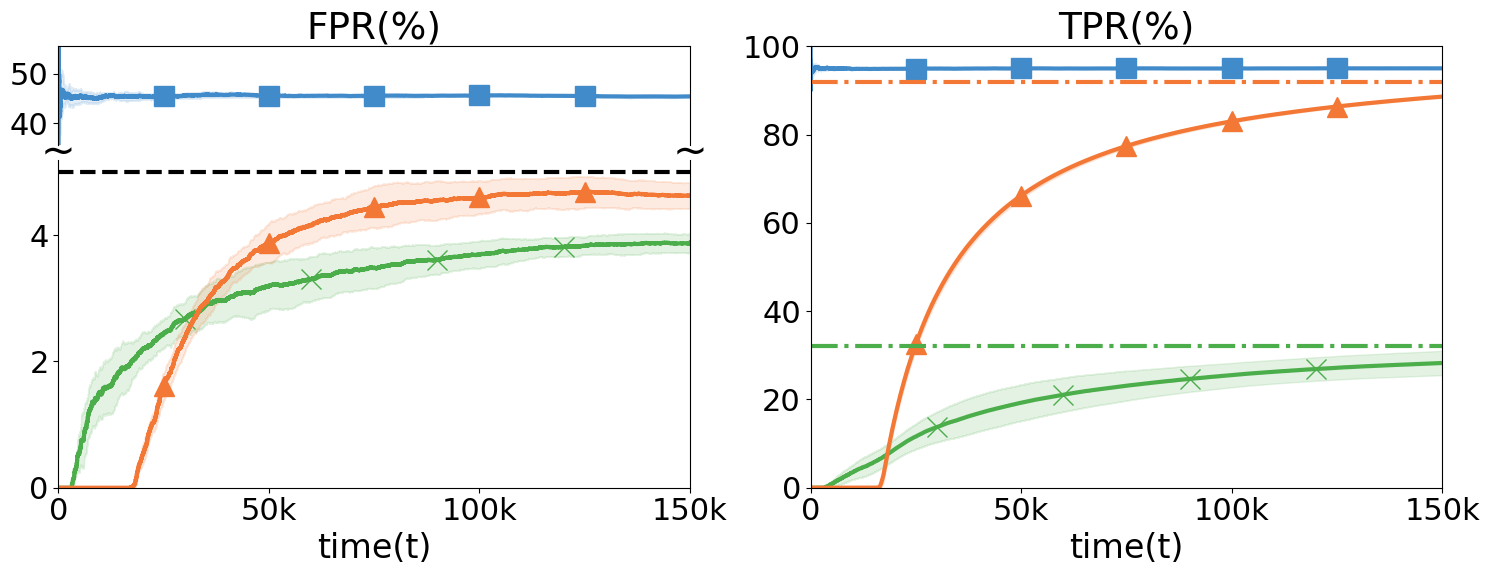}}
  }
  
  \vspace{-5pt}
  \captionsetup{justification=raggedright}
  \caption{ \small{
    Results on the stationary setting for ImageNet-1k as ID where \texttt{ViT-B/16} is used as a feature extractor, with each method repeated 5 times (mean and std shown). KNN and MDS are the initial scoring functions and are fixed for \FSFT\ and \FSAT. The dotted-dash lines represent the \textit{expected} TPR at FPR-5\% for KNN, MDS, and $\ghat^{\ast}$ (matching colors).
    }}
   \vspace{-5pt}   
   \label{fig:vit_imagenet} 
\end{figure*}

In addition, we have experiments in a large-scale setting using ViT-B/16 \citep{dosovitskiy2021imageworth16x16words} as the feature extractor and ImageNet-1k \citep{deng2009imagenet} as ID. In the stationary setting, we used OpenImage-O as OOD and compared ASAT against FSAT and FSFT with KNN and MDS scoring functions. As expected, ASAT outperformed FSAT in TPR while maintaining strict FPR control, which confirms that ASAT scales well to large datasets like ImageNet-1k and modern architectures like ViT. Results are shown in Figure \ref{fig:vit_imagenet}.


\end{document}